\newcommand{\Rbb}{\mathbb{R}}
\newcommand{\Nbb}{\mathbb{N}}
\newcommand{\Dcal}{\mathcal{D}}
\newcommand{\Ecal}{\mathcal{E}}
\newcommand{\Fcal}{\mathcal{F}}
\newcommand{\Gcal}{\mathcal{G}}
\newcommand{\Hcal}{\mathcal{H}}
\newcommand{\Ical}{\mathcal{I}}
\newcommand{\Lcal}{\mathcal{L}}
\newcommand{\Mcal}{\mathcal{M}}
\newcommand{\Ncal}{\mathcal{N}}
\newcommand{\Pcal}{\mathcal{P}}
\newcommand{\Rcal}{\mathcal{R}}
\newcommand{\Tcal}{\mathcal{T}}
\newcommand{\Vcal}{\mathcal{V}}
\newcommand{\Xcal}{\mathcal{X}}
\newcommand{\Lscr}{\mathscr{L}}
\newcommand \Tau {T}
\DeclareMathOperator*{\var}{var}
\DeclareMathOperator*{\Exp}{\mathbb{E}}
\DeclareMathOperator*{\Prob}{\mathbb{P}}
\newcommand{\indicator}{\mathbbm{1}}
\newcommand\ind[1]{\indicator_{\parenth{#1}}}
\newcommand \func[5]{
\[
\begin{aligned}
#1\colon #2 &\rightarrow #3 \\
#4 &\mapsto #5
\end{aligned}
\]
}
\newcommand \inlinefunc[5]{$#1\colon #4 \in #2 \mapsto #5$}
\newcommand \al[1]{\begin{align*}
#1
\end{align*}
}
\newcommand \mat[1]{
\left(
\begin{array}
#1
\end{array}
\right)
}
\newcommand \bra{\left\langle}
\newcommand \ket{\right\rangle}
\newcommand \braket[2]{\bra #1, #2 \ket}
\newcommand \parenth[1]{\left( #1 \right)}
\newcommand \sqbr[1]{\left[ #1 \right]}
\newtheorem{theorem}{Theorem}
\newtheorem{lemma}{Lemma}
\newtheorem{corollary}{Corollary}
\newtheorem{rem}{Remark}
\newtheorem{fact}{Fact}
\newtheorem{assumption}{Assumption}
\newtheorem{definition}{Definition}
\newtheorem{proposition}{Proposition}
\newcommand{\bnjcomment}[1]{%
\@ifundefined{showcomments}{\relax}{%
{\color{Red} #1}%
}%
}
\newcommand{\walidcomment}[1]{%
\@ifundefined{showcomments}{\relax}{%
{\color{Red} Walid: #1}%
}%
}
\newcommand{\Simp}{\Delta}
\newcommand{\mucont}{\boldsymbol\mu}
\newcommand{\cesaro}{Ces\`{a}ro}
\newcommand{\toCes}{\xrightarrow{(\gamma_\tau)}}
\title{Learning Nash Equilibria in Congestion Games
\thanks{This work was supported in part by FORCES (Foundations Of Resilient CybEr-physical Systems), which receives support from the National Science Foundation (NSF award numbers CNS-1238959, CNS-1238962, CNS-1239054, CNS-1239166).}}
\author{Walid Krichene\thanks{Walid Krichene is with the department of Electrical Engineering and Computer Sciences, UC Berkeley ({\tt walid@eecs.berkeley.edu}).}
\and Benjamin Drigh\`{e}s\thanks{Benjamin Drigh\`{e}s is with the Ecole Polytechnique, Palaiseau, France ({\tt benjamin.drighes@polytechnique.edu}).}
\and Alexandre M. Bayen\thanks{Alexandre M. Bayen is with the department of Electrical Engineering and Computer Sciences and the department of Civil and Environmental Engineering, UC Berkeley ({\tt bayen@berkeley.edu}).}}
\date{}
\begin{document}

\maketitle

\begin{abstract}
We study the repeated congestion game, in which multiple populations of players share resources, and make, at each iteration, a decentralized decision on which resources to utilize. We investigate the following question: given a model of how individual players update their strategies, does the resulting dynamics of strategy profiles converge to the set of Nash equilibria of the one-shot game? We consider in particular a model in which players update their strategies using algorithms with sublinear discounted regret. We show that the resulting sequence of strategy profiles converges to the set of Nash equilibria in the sense of Ces\`{a}ro means. However, strong convergence is not guaranteed in general. We show that strong convergence can be guaranteed for a class of algorithms with a vanishing upper bound on discounted regret, and which satisfy an additional condition. We call such algorithms AREP algorithms, for Approximate REPlicator, as they can be interpreted as a discrete-time approximation of the replicator equation, which models the continuous-time evolution of population strategies, and which is known to converge for the class of congestion games. In particular, we show that the discounted Hedge algorithm belongs to the AREP class, which guarantees its strong convergence. 
\end{abstract}

\thispagestyle{plain}

\section{Introduction}

Congestion games are non-cooperative games that model the interaction of players who share resources.  Each player makes a decision on which resources to utilize. The individual decisions of players result in a resource allocation at the population scale. Resources which are highly utilized become congested, and the corresponding players incur higher losses. For example, in routing games --a sub-class of congestion games, the resources are edges in a network, and each player needs to travel from a given source vertex to a given destination vertex on the graph. Each player chooses a path, and the joint decision of all players determines the congestion on each edge. The more a given edge is utilized, the more congested it is, creating delays for those players using that edge.

The one-shot congestion game has been studied extensively, and a comprehensive presentation is given for example in~\cite{roughgarden2007}. In particular, congestion games are shown to be potential games, thus their Nash equilibria can be expressed as the solution to a convex optimization problem. Characterizing the Nash equilibria of the congestion game gives useful insights, such as the loss of efficiency due to selfishness of players. One popular measure of inefficiency is the price of anarchy, introduced by Koutsoupias and Papadimitriou in \cite{koutsoupias1999worst}, and studied in the case of routing games by Roughgarden et al. in \cite{roughgarden2002bad}. While characterizing Nash equilibria of the one-shot congestion game gives many insights, it does not model how players \emph{arrive to the equilibrium}. Studying the game in a repeated setting can help answer this question. Additionally, most realistic scenarios do not correspond to a one-shot setting, but rather a repeated setting in which players make decisions in an online fashion, observe outcomes, and may update their strategies given the previous outcomes. 
This motivates the study of the game and the population dynamics in an online learning framework.

Arguably, a good model for learning should be distributed, and should not have extensive information requirements. In particular, one should not expect the players to have an accurate model of congestion of the different resources. Players should be able to learn simply by observing the outcomes of their previous actions, and those of other players. No-regret learning is of particular interest here, as many regret-minimizing algorithms are easy to implement by individual players, and only require the player losses to be revealed. The Hedge algorithm (also known as boosting or the exponential update rule) is a famous example of regret-minimizing algorithms. It was introduced to the machine learning community by Freund and Schapire in~\cite{freund1999adaptive}, a generalization of the weighted majority algorithm of Littlestone and Warmuth~\cite{littlestone1989weighted}. The Hedge algorithm will be central in our discussion, as it will motivate the study of the continuous-time replicator equation, and will eventually be shown to converge for congestion games.

No-regret learning and its resulting population dynamics have been studied in the context of routing games, a special case of congestion games. For example, in~\cite{Blum2006routing}, Blum et al. show that the sequence of strategy profiles converges to the set of $\epsilon$-approximate Nash equilibria on a $(1-\epsilon)$-fraction of days. In other words, a subsequence of strategy profiles (in which an $\epsilon$ fraction of terms is dropped) converges to the set of $\epsilon$-approximate Nash equilibria. They also give explicit convergence rates which depend on the maximum slopes of the congestion functions.

Continuous-time population dynamics have also been studied for congestion games. In~\cite{fischer2004evolution}, Fischer and Vocking study the convergence of the replicator dynamics for the congestion game. The replicator ODE is also of particular interest in evolutionary game theory, see for example~\cite{weibull1997evolutionary}. In \cite{sandholm2001potential}, Sandholm studies convergence for the larger class of potential games. He shows that dynamics which satisfy a positive correlation condition with respect to the potential function of the game converge to the set of stationary points of the vector field (usually, a superset of Nash equilibria). However, many regret-minimizing algorithms do not satisfy this correlation condition. Our discussion is mainly concerned with discrete-time dynamics. However, properties of the replicator equation will be used in our analysis.

We will consider a model in which the losses are discounted over time, using a vanishing sequence of discount factors $(\gamma_\tau)_{\tau \in \Nbb}$, meaning that future losses matter less to players than present losses. This defines a discounted regret, and we will focus our attention on online learning algorithms with sublinear discounted regret. The sequence of discount factors will have several interpretations beyond its economic motivation. For example, we will observe that some multiplicative weight algorithms, such as the Hedge algorithm, have sublinear discounted regret if we use the sequence $(\gamma_\tau)_\tau$ as learning rates, provided it also satisfies $\sum_{\tau \leq T} {\gamma_\tau}^2 / \sum_{\tau \leq T} \gamma_\tau \rightarrow 0$ as $T \rightarrow \infty$.

After defining the model and giving preliminary results in Sections~\ref{sec:model} and~\ref{sec:learning}, we show in Section~\ref{sec:cesaro} that when players use online learning algorithms with sublinear discounted regret, the sequence of strategy profiles converges to the set of Nash equilibria in the \cesaro\ sense. In order to obtain strong convergence, we first motivate the study of the replicator dynamics. Indeed, it can be viewed as a continuous-time limit of the Hedge algorithm with decreasing learning rates. In Section~\ref{sec:replicator}, we recall the convergence result of the replicator dynamics. By discretizing the replicator equation (using the same discount sequence $(\gamma_\tau)_{\tau \in \Nbb}$ as discretization time steps) we obtain a multiplicative-weights update rule with sublinear discounted regret, which we call REP algorithm, for replicator. Finally, in Section~\ref{sec:conv}, we define a class of online learning algorithms we call the AREP algorithms, which can be expressed as a discrete REP algorithm with perturbations that satisfy a condition given in Definition~\ref{def:AREP}. Using results from the theory of stochastic approximation, we show that strong convergence is guaranteed for AREP algorithms with sublinear discounted regret. We finally observe that both the REP algorithm and the Hedge algorithm belong to this class, which proves convergence for these two algorithms in particular.

\section{The congestion game model}
\label{sec:model}
In the congestion game, a finite set $\Rcal$ of resources is shared by a set $\Xcal$ of players. The set of players is endowed with a structure of measure space, $(\Xcal, \Mcal, m)$, where $\Mcal$ is a $\sigma$-algebra of measurable subsets, and $m$ is a finite Lebesgue measure. The measure is non-atomic, in the sense that single-player sets are null-sets for $m$. The player set is partitioned into $K$ populations, $\Xcal = \Xcal_1 \cup \dots \cup \Xcal_K$. For all $k$, the total mass of population $\Xcal_k$ is assumed to be finite and nonzero. Each player $x \in \Xcal_k$ has a task to perform, characterized by a collection of bundles $\Pcal_k \subset \Pcal$, where $\Pcal$ is the power set of $\Rcal$. The task can be accomplished by choosing any bundle of resources $p \in \Pcal_k$. The action set of any player in $\Xcal_k$ is then simply $\Pcal_k$.

The joint actions of all players can be represented by an action profile $a: \Xcal \rightarrow \Pcal$ such that for all $x \in \Xcal_k$, $a(x) \in \Pcal_k$ is the bundle of resources chosen by player $x$. The function $x \mapsto a(x)$ is assumed to be $\Mcal$-measurable ($\Pcal$ is equipped with the counting measure). The action profile $a$ determines the bundle loads and resource loads, defined as follows: for all $k \in \{1, \dots, K\}$ and $p \in \Pcal_k$, the load of bundle $p$ under population $\Xcal_k$ is the total mass of players in $\Xcal_k$ who chose that bundle
\begin{equation}
\label{eq:bundle_load}
 f^k_p(a) = \int_{x \in \Xcal_k} \ind{a(x) = p} dm(x)
\end{equation}
For any $r \in \Rcal$, the resource load is defined to be the total mass of players utilizing that resource
\begin{equation}
\label{eq:resource_load}
\phi_r(a) = \sum_{k = 1}^K \sum_{p \in \Pcal_k : r \in p}  f^k_p(a)
\end{equation}
The resource loads determine the losses of all players: the loss associated to a resource $r$ is given by $c_r(\phi_r(a))$, where the congestion functions $c_r$ are assumed to satisfy the following:
\begin{assumption}
\label{ass:congestion_functions}
The congestion functions $c_r$ are non-negative, non-decreasing, Lipschitz-continuous functions. 
\end{assumption}

The total loss of a player $x$ such that $a(x) = p$ is $\sum_{r \in p} c_r(\phi_r(a))$.

The congestion model is given by the tuple $(K, (\Xcal_k)_{1 \leq k \leq K}, \Rcal, (\Pcal_k)_{1 \leq k \leq K}, (c_r)_{r \in \Rcal})$. The congestion game is determined by the action set and the loss function for every player: 
for all $x \in \Xcal_k$, the action set of $x$ is $\Pcal_k$, and the loss function of~$x$, given the action profile~$a$, is
\[
\sum_{p \in \Pcal} \ind{a(x) = p} \sum_{r \in p} c_r(\phi_r(a)).
\]

\subsection{A macroscopic view} The action profile $a$ specifies the bundle of each player~$x$. A more concise description of the joint action of players is given by the bundle distribution: the proportion of players choosing bundle $p$ in population $\Xcal_k$ is denoted by $\mu^k_p(a) = f^k_p(a) / m(\Xcal_k)$, which defines a bundle distribution for population $\Xcal_k$,
\[
\mu^k(a) = (\mu^k_p(a))_{p \in \Pcal_k} \in \Delta^{\Pcal_k},
\]
and a bundle distribution across populations, given by the product distribution
\[
\mu(a) = (\mu^1(a), \dots, \mu^K(a)) \in \Delta^{\Pcal_1} \times \dots \times \Delta^{\Pcal_K}.
\]
We say that the action profile $a$ induces the distribution $\mu(a)$. Here $\Delta^{\Pcal_k}$ denotes the simplex of distributions over $\Pcal_k$, that is
\[
\Delta^{\Pcal_k} = \left\{\mu \in \Rbb^{\Pcal_k}_+ : \sum_{p \in \Pcal_k} \mu_p = 1 \right\}
\]
The product of simplexes $\Delta^{\Pcal_1} \times \dots \times \Delta^{\Pcal_K}$ will be denoted $\Delta$. This macroscopic representation of the joint actions of players will be useful in our analysis. We will also view the resource loads as linear functions of the product distribution $\mu(a)$. Indeed, we have from equation~\eqref{eq:resource_load} and the definition of $\mu_p^k(a)$
\[
\phi_r(a) = \sum_{k = 1}^K m(\Xcal_k)\sum_{p \in \Pcal_k : r \in p} \mu^k_p(a) = \sum_{k = 1}^K m(\Xcal_k)(M^k \mu^k(a))_r
\]
where for all $k$, $M^k \in \Rbb^{\Rcal\times \Pcal_k}$ is an incidence matrix defined as follows: for all $r \in \Rcal$ and all $p \in \Pcal_k$,
\[
M^k_{r, p} = \begin{cases}
1 &\text{if $r \in p$} \\
0 &\text{otherwise}
\end{cases}
\]
We write in vector form $\phi(a) = \sum_{k = 1}^K m(\Xcal_k)M^k \mu^k(a)$, and by defining the scaled incidence matrix $\bar{M} = \mat{{c}m(\Xcal_1)M^1 | \dots | m(\Xcal_K)M^K}$, we have
\[
\phi(a) = \bar{M} \mu(a)
\]

By abuse of notation, the dependence on the action profile $a$ will be omitted, so we will write $\mu$ instead of $\mu(a)$ and $\phi$ instead of $\phi(a)$. Finally, we define the loss function of a bundle $p \in \Pcal_k$ to be
\begin{equation}
\label{eq:bundel_loss}
\ell^k_p(\mu) = \sum_{r \in p} c_r(\phi_r) = \sum_{r \in p} c_r((\bar{M}\mu)_r) = M^\top c(\bar{M}\mu)
\end{equation}
where $M$ is the incidence matrix $M = \mat{{ccc}M^1 | &\dots &| M^K}$, and $c(\phi)$ is the vector $\parenth{c_r(\phi_r)}_{r \in \Rcal}$. We denote by $\ell^k(\mu)$ the vector of losses $(\ell^k_p(\mu))_{p \in \Pcal_k}$, and by $\ell(\mu)$ the $K$-tuple $\ell(\mu) = (\ell^1(\mu), \dots, \ell^K(\mu))$.
\subsection{Nash equilibria of the congestion game}
We can now define and characterize the Nash equilibria of the congestion game, also called Wardrop equilibria, in reference to~\cite{wardrop1952road}.
\begin{definition}[Nash equilibrium]~\\
\label{def:nash}
A product distribution $\mu$ is a Nash equilibrium of the congestion game if for all $k$, and all $p \in \Pcal_k$ such that $\mu^k_p > 0$, $\ell^k_{p'}(\mu) \geq \ell^k_{p} (\mu)$ for all $p' \in \Pcal_k$. The set of Nash equilibria will be denoted by $\Ncal$.
\end{definition}

In finite player games, a Nash equilibrium is defined to be an action profile $a$ such that no player has an incentive to unilaterally deviate~\cite{nash1951non}, that is, no player can strictly decrease her loss by unilaterally changing her action. We show that this condition (referred to as the Nash condition) holds for \emph{almost all players} whenever~$\mu$ is a Nash equilibrium in the sense of Definition~\ref{def:nash}.
\begin{proposition}
\label{prop:nash_equivalent}
A distribution $\mu$ is a Nash equilibrium if and only if for any joint action $a$ which induces the distribution $\mu$, almost all players have no incentive to unilaterally deviate from $a$.
\end{proposition}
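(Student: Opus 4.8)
The plan is to reduce the microscopic no-deviation condition to a statement about the bundle losses evaluated at the fixed distribution $\mu$, exploiting the non-atomicity of $m$. The crucial observation is that if a single player $x \in \Xcal_k$ changes her bundle from $p = a(x)$ to some $p' \in \Pcal_k$, producing a new action profile $a'$, then $a$ and $a'$ differ only on the null set $\set{x}$. Since each bundle load $f^k_q$ in~\eqref{eq:bundle_load} is an integral, modifying the integrand on a null set leaves it unchanged, so $f^k_q(a') = f^k_q(a)$ for all $q$, and consequently $\phi(a') = \phi(a)$ and $\mu(a') = \mu(a) = \mu$ by~\eqref{eq:resource_load}. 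Thus the post-deviation loss of $x$ is $\sum_{r \in p'} c_r(\phi_r(a')) = \ell^k_{p'}(\mu)$, evaluated at the same $\mu$. Player $x$ therefore has no incentive to deviate if and only if $\ell^k_{p'}(\mu) \geq \ell^k_p(\mu)$ for every $p' \in \Pcal_k$, i.e.\ if and only if the bundle $p = a(x)$ attains $\min_{q \in \Pcal_k} \ell^k_q(\mu)$.

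Next I would quantify the mass of players with a strict incentive to deviate. Fix any $a$ inducing $\mu$ and set $L^k = \min_{q \in \Pcal_k} \ell^k_q(\mu)$. By the previous paragraph a player $x \in \Xcal_k$ is ``unhappy'' precisely when $\ell^k_{a(x)}(\mu) > L^k$, so the unhappy set in population $k$ is the finite union $\bigcup_{p : \ell^k_p(\mu) > L^k} \set{x \in \Xcal_k : a(x) = p}$, which is measurable since $a$ is $\Mcal$-measurable. Its mass is
\[
\sum_{p \in \Pcal_k : \ell^k_p(\mu) > L^k} f^k_p(a) = m(\Xcal_k) \sum_{p \in \Pcal_k : \ell^k_p(\mu) > L^k} \mu^k_p,
\]
using $f^k_p(a) = m(\Xcal_k)\mu^k_p$. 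Note that this quantity depends only on $\mu$, not on the particular $a$ inducing it.

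With this identity both implications follow at once. If $\mu \in \Ncal$, then by Definition~\ref{def:nash} any bundle with $\mu^k_p > 0$ attains $L^k$, so every term in the sum above vanishes and the unhappy mass is zero in each of the finitely many populations; hence almost all players have no incentive to deviate, for every $a$ inducing $\mu$. Conversely, if $\mu \notin \Ncal$, there exist $k$ and a bundle $p$ with $\mu^k_p > 0$ and $\ell^k_p(\mu) > L^k$; then the displayed sum is strictly positive, so for every $a$ inducing $\mu$ a set of players of positive mass has a strict incentive to deviate. Taking the contrapositive of this last statement gives the desired equivalence.

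The argument is essentially a bookkeeping exercise once the right reduction is in place; the only genuinely delicate point is the non-atomicity step in the first paragraph, which is what makes a single-player deviation leave the loads $\phi_r$ unchanged and thereby lets the microscopic and macroscopic conditions coincide. I would also take care to record that the unhappy set is measurable (a finite union of preimages under the measurable map $a$) so that ``almost all players'' is well defined, and to emphasize that the computed unhappy mass is independent of the representative $a$, which is exactly what lets the ``for any $a$ inducing $\mu$'' quantifier be handled uniformly.
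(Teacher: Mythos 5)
Your proof is correct and rests on the same two ingredients as the paper's: the non-atomicity observation that a single player's deviation leaves $\mu$ unchanged, and the bundle-wise decomposition identifying the mass of players choosing $p$ with $m(\Xcal_k)\mu^k_p$. Your ``unhappy mass'' identity merely repackages the paper's two-pass argument (integrating the pointwise inequality $\ell^k_{p'}(\mu) \geq \ell^k_{a(x)}(\mu)$ for one direction, decomposing the deviating set $X = \cup_p X_p$ for the converse) into a single formula that handles both implications and the ``for any $a$'' quantifier at once --- a tidy presentation, but essentially the same route.
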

\begin{proof}
First, we observe that, given an action profile $a$, when a single player $x$ changes her strategy, this does not affect the distribution $\mu$. This follows from the definition of the distribution,
\[
\mu^k_p = \frac{1}{m(\Xcal_k)} \int_{\Xcal_k} \ind{a(x) = p} dm(x).
\]
Changing the action profile $a$ on a null-set $\{x\}$ does not affect the integral.

Now, assume that almost all players have no incentive to unilaterally deviate. That is, for all $k$, for almost all $x \in \Xcal_k$,
\begin{equation}
\label{eq:nash_condition}
\forall p' \in \Pcal_k, \  \ell^k_{p'} (\mu')  \geq \ell^k_{a(x)} (\mu)
\end{equation}
where $\mu'$ is the distribution obtained when $x$ unilaterally changes her bundle from $a(x)$ to $p'$. By the previous observation, $\mu' = \mu$.
As a consequence, condition~\eqref{eq:nash_condition} becomes: for almost all $x$, and for all $p'$, $\ell^k_{p'}(\mu) \geq \ell^k_{a(x)}(\mu)$. Therefore, integrating over the set $\{x \in \Xcal_k : a(x) = p\}$, we have for all $k$,
\begin{equation*}
\ell^k_{p'} ( \mu ) \mu^k_p  \geq \ell^k_{p} (\mu) \mu^k_p \text{ for all $p'$}.
\end{equation*}
which implies that $\mu$ is a Nash equilibrium in the sense of Definition~\ref{def:nash}. Conversely, if $a$ is an action profile, inducing distribution $\mu$, such that the Nash condition does not hold for a set of players with positive measure, then there exists $k_0$ and a subset $X \subset \Xcal_{k_0}$ with $m(X) > 0$, such that every player in $X$ can strictly decrease her loss by changing her action. Let $X_p = \{x \in X : a(x) = p\}$, then $X$ is the disjoint union $X = \cup_{p \in \Pcal_k} X_p$, and there exists $p_0$ such that $m(X_{p_0}) > 0$. Therefore
\[
\mu^{k_0}_{p_0} = \frac{m\parenth{\{x \in \Xcal_{k_0} : a(x) = p_0\} }}{m(\Xcal_{k_0})} \geq \frac{m(X_{p_0})}{m(\Xcal_{k_0})} > 0.
\]
Let $x \in X_{p_0}$. Since $x$ can strictly decrease her loss by unilaterally changing her action, there exists $p_1$ such that $\ell^{k_0}_{p_1}(\mu) < \ell^{k_0}_{a(x)}(\mu) = \ell^{k_0}_{p_0}(\mu)$. But since $\mu^{k_0}_{p_0} > 0$, $\mu$ is not a Nash equilibrium.
\end{proof}

Definition~\ref{def:nash} also implies that, for a population $\Xcal_k$, all bundles with non-zero mass have equal losses, and bundles with zero mass have greater losses. Therefore almost all players incur the same loss. This observation motivates a second characterization of Nash equilibria, in terms of the average loss.
\begin{definition}[Average loss]
\label{def:avg_loss}
The average loss incurred by population $\Xcal_k$ is the real number:
\[
\bar{\ell}^k(\mu) = \frac{1}{m(\Xcal_k)} \int _{\Xcal_k} \ell^k_{a(x)} (\mu) dm(x) = \sum _{p \in \Pcal_k} \mu^k_p \ell^k_p(\mu)
\]
\end{definition}

\begin{proposition} \label{prop:nash_average_latency} $\mu$ is a Nash equilibrium if and only if for all $k$ and all $p \in \Pcal_k$, $\ell^k_p(\mu) \geq \bar{\ell}^k(\mu)$.
\end{proposition}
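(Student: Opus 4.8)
The plan is to exploit the fact that the average loss $\bar{\ell}^k(\mu) = \sum_{p \in \Pcal_k} \mu^k_p \ell^k_p(\mu)$ is a convex combination of the bundle losses $(\ell^k_p(\mu))_{p \in \Pcal_k}$, since the weights $\mu^k_p$ are nonnegative and sum to one over $\Pcal_k$ (because $\mu^k \in \Delta^{\Pcal_k}$). Both directions of the equivalence then follow from comparing each individual bundle loss to this weighted average, handling positive-mass and zero-mass bundles separately.

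For the forward direction, I would assume $\mu \in \Ncal$ and fix a population index $k$. By Definition~\ref{def:nash}, every bundle $p$ with $\mu^k_p > 0$ minimizes $\ell^k_{\cdot}(\mu)$ over $\Pcal_k$, so all such bundles share a common value, say $\ell^k_*$, while every bundle (including those of zero mass) satisfies $\ell^k_p(\mu) \geq \ell^k_*$. Substituting into the definition of the average, only the positive-mass bundles contribute, giving $\bar{\ell}^k(\mu) = \ell^k_* \sum_{p \in \Pcal_k} \mu^k_p = \ell^k_*$. Hence $\ell^k_p(\mu) \geq \ell^k_* = \bar{\ell}^k(\mu)$ for every $p \in \Pcal_k$, which is the desired inequality.

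For the converse, suppose $\ell^k_p(\mu) \geq \bar{\ell}^k(\mu)$ for all $k$ and all $p \in \Pcal_k$. The key step is to form the weighted sum of the nonnegative gaps: since $\sum_{p \in \Pcal_k} \mu^k_p \left( \ell^k_p(\mu) - \bar{\ell}^k(\mu) \right) = \bar{\ell}^k(\mu) - \bar{\ell}^k(\mu) = 0$ and each summand $\mu^k_p \left( \ell^k_p(\mu) - \bar{\ell}^k(\mu) \right)$ is nonnegative, every summand must vanish. Thus $\mu^k_p > 0$ forces $\ell^k_p(\mu) = \bar{\ell}^k(\mu)$. Combined with the hypothesis $\ell^k_{p'}(\mu) \geq \bar{\ell}^k(\mu)$ for arbitrary $p'$, this yields $\ell^k_{p'}(\mu) \geq \bar{\ell}^k(\mu) = \ell^k_p(\mu)$ whenever $\mu^k_p > 0$, recovering exactly the condition of Definition~\ref{def:nash}.

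I expect no substantial obstacle: the entire argument rests on the elementary observation that a convex combination cannot be strictly smaller than all of its components, together with the fact that equality of the mean with a lower bound on each term forces the gaps on the support to collapse to zero. The only points requiring care are to track that the weights $\mu^k$ live in the simplex $\Delta^{\Pcal_k}$ so that $\bar{\ell}^k(\mu)$ is genuinely a convex combination, and to treat the zero-mass bundles explicitly in each direction, since the Nash condition of Definition~\ref{def:nash} constrains them only through an inequality.
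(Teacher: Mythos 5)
Your proof is correct and takes essentially the same route as the paper: the forward direction is identical (positive-mass bundles share a common minimal loss, which the average then equals), and your converse via $\sum_{p} \mu^k_p \left( \ell^k_p(\mu) - \bar{\ell}^k(\mu) \right) = 0$ with nonnegative summands is just a rephrasing of the paper's sandwich argument around $p_0 \in \arg\min_{p : \mu^k_p > 0} \ell^k_p(\mu)$, both exploiting that a convex combination equal to a lower bound forces equality on the support.
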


\begin{proof} If $\mu$ is a Nash equilibrium, then all bundles with non-zero mass have equal losses, and bundles with zero mass have greater losses. That is, for all $k$, there exists $p_0 \in \Pcal_k$ such that for all $p$, if $\mu_{p} > 0$ then $\ell^k_{p}(\mu) = \ell^k_{p_0}(\mu)$, and if $\mu_{p} = 0$, then $\ell^k_{p}(\mu) \geq \ell^k_{p_0}(\mu)$. Thus
\[
\bar{\ell}^k(\mu) = \sum _{p \in \Pcal_k : \mu_{p} > 0} \mu_{p} \ell^k_{p}(\mu) = \parenth{\sum _{p \in \Pcal_k : \mu_{p} > 0} \mu_{p}} \ell^k_{p_0}(\mu) = \ell^k_{p_0}(\mu),
\]
and it follows that for all $p$, $\ell^k_p(\mu) \geq \ell^k_{p_0}(\mu) = \bar{\ell}^k(\mu)$.

Conversely, assume that for all $p$, $\ell^k_p(\mu) \geq \bar{\ell}^k(\mu)$, and let $p_0 \in \arg \min_{p \in \Pcal_k : \mu_{p} > 0} \ell_{p}(\mu)$. Then,
\[
\ell^k_{p_0}(\mu) \geq \bar{\ell}^k(\mu) = \sum _{p \in \Pcal_k : \mu_{p} > 0} \mu_{p} \ell^k_{p}(\mu) \geq \ell^k_{p_0}(\mu) \sum _{p \in \Pcal_k : \mu_{p} > 0} \mu_{p} = \ell^k_{p_0}(\mu)
\]
and the inequalities must hold with equality, thus all bundles with non-zero mass have the same loss (equal to the average loss), while bundles with zero mass have larger losses. This proves that $\mu$ is a Nash equilibrium.
\end{proof}

\subsection{Mixed strategies}
\label{subsec:mixed}
The Nash equilibria we have described so far are {\it pure strategy} equilibria, since each player $x$ deterministically plays a single action $a(x)$. 
We now extend the model to allow mixed strategies. That is, the action of a player $x$ is a random variable $A(x)$ with distribution $\pi(x)$, and with realization $a(x)$.

We show that when players use mixed strategies, provided they randomize independently, the resulting Nash equilibria are, in fact, the same as those given in Definition~\ref{def:nash}. The key observation is that under independent randomization, the resulting bundle distributions $\mu^k$ are random variables with zero variance, thus they are essentially deterministic.

To formalize the probabilistic setting, let $(\Omega, \Fcal, \Prob)$ be a probability space. A mixed strategy profile is a function $A: \Xcal \rightarrow \Omega \rightarrow \Pcal$, such that for all $k$ and all $x \in \Xcal_k$,  $A(x)$ is a $\Pcal_k$-valued random variable, such that the mapping $(x, \omega) \mapsto A(x)(\omega)$ is $\Mcal \times \Fcal$-measurable. For all $x \in \Xcal_k$ and $p \in \Pcal_k$, let $\pi^k_{p}(x) = \Prob[A(x) = p]$. Similarly to the deterministic case, the mixed strategy profile $A$ determines the bundle distributions $\mu^k$, which are, in this case, random variables, as we recall that:
\begin{equation*}
\mu^k_p = \frac{1}{m(\Xcal_k)} \int _{\Xcal_k} \ind{A(x) = p} dm(x)
\end{equation*}
Nevertheless, assuming players randomize independently, the bundle distribution is almost surely equal to its expectation, as stated in the following Proposition. The assumption of independent randomization is a reasonable one, since players are non-cooperative.

\begin{proposition}
\label{prop:distribution_deterministic}
Under independent randomization,
\[
\forall k, \text{almost surely, }\mu^k = \Exp[\mu^k] = \frac{1}{m(\Xcal_k)} \int_{\Xcal_k} \pi^k(x)dm(x)
\]
\end{proposition}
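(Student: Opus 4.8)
The plan is to reduce the almost-sure statement to a zero-variance computation for each scalar coordinate $\mu^k_p$. Since $\mu^k_p$ takes values in $[0,1]$ it is square-integrable, and a square-integrable random variable whose variance vanishes is almost surely equal to its mean; this is exactly the assertion $\mu^k_p = \Exp[\mu^k_p]$. Because each $\Pcal_k \subset \Pcal$ is finite and there are finitely many populations $k$, I would finish by intersecting the corresponding almost-sure events, which is still an almost-sure event, to obtain the joint statement over all $k$ and all $p$ simultaneously.

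First I would justify that $\mu^k_p$ is a genuine random variable and establish the expectation formula. The map $(x,\omega)\mapsto \ind{A(x) = p}$ is $\Mcal\times\Fcal$-measurable and nonnegative, so by Tonelli's theorem the integral $\mu^k_p(\omega) = \frac{1}{m(\Xcal_k)}\int_{\Xcal_k}\ind{A(x)=p}\,dm(x)$ defines an $\Fcal$-measurable function, and I may exchange $\Exp$ with the integral over $\Xcal_k$:
\[
\Exp[\mu^k_p] = \frac{1}{m(\Xcal_k)}\int_{\Xcal_k}\Exp\!\left[\ind{A(x)=p}\right]dm(x) = \frac{1}{m(\Xcal_k)}\int_{\Xcal_k}\pi^k_p(x)\,dm(x),
\]
which already yields the second equality in the statement.

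Next I would compute the second moment. Writing $(\mu^k_p)^2$ as a double integral over $\Xcal_k\times\Xcal_k$ and again invoking Fubini for the bounded jointly measurable integrand gives
\[
\Exp[(\mu^k_p)^2] = \frac{1}{m(\Xcal_k)^2}\int_{\Xcal_k}\!\int_{\Xcal_k}\Exp\!\left[\ind{A(x)=p}\ind{A(y)=p}\right]dm(x)\,dm(y).
\]
By the independent-randomization hypothesis, for $x\neq y$ the integrand factors as $\pi^k_p(x)\,\pi^k_p(y)$; only on the diagonal $\{x=y\}$ does it instead equal $\pi^k_p(x)$. Thus, provided the diagonal contributes nothing, the double integral collapses to $\left(\int_{\Xcal_k}\pi^k_p(x)\,dm(x)\right)^2$, giving $\Exp[(\mu^k_p)^2] = (\Exp[\mu^k_p])^2$ and hence $\var[\mu^k_p]=0$.

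The decisive step, and the main obstacle, is showing that the diagonal is negligible for $m\times m$, and this is precisely where non-atomicity of $m$ enters. I would argue that for any $\epsilon>0$ non-atomicity lets me partition $\Xcal_k$ into finitely many measurable pieces $\Xcal_k=\bigcup_i B_i$ with each $m(B_i)<\epsilon$; the diagonal is then covered by $\bigcup_i B_i\times B_i$, whose product measure is $\sum_i m(B_i)^2 \le \epsilon\sum_i m(B_i) = \epsilon\,m(\Xcal_k)$. Letting $\epsilon\to 0$ shows the diagonal has outer product measure zero, so it may be discarded from the integral without any need to decide its measurability in $\Mcal\otimes\Mcal$. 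Independence disposes of the off-diagonal contribution and non-atomicity disposes of the diagonal; combining the two moment computations gives $\var[\mu^k_p]=0$ and closes the argument.
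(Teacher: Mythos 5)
Your proposal is correct and follows essentially the same route as the paper: establish $\Exp[\mu^k_p] = \frac{1}{m(\Xcal_k)}\int_{\Xcal_k}\pi^k_p(x)\,dm(x)$ by Tonelli, express the variance (or second moment) as a double integral over $\Xcal_k\times\Xcal_k$, use independence off the diagonal, and kill the diagonal contribution as an $(m\times m)$-nullset. The only difference is cosmetic but welcome: where the paper cites Proposition 251T of Fremlin for the negligibility of the diagonal, you derive it directly from non-atomicity via a finite partition of $\Xcal_k$ into pieces of measure less than $\epsilon$, making the key measure-theoretic step self-contained.
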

\begin{proof}
Fix $k$ and let $p \in \Pcal_k$. Since $(x, \omega) \mapsto \ind{A(x) = p}(\omega)$ is a non-negative bounded $\Mcal \times \Fcal$-measurable function, we can apply Tonelli's theorem and write:
\begin{align*}
\Exp \sqbr{ \mu^k_p }
&= \Exp \left[ \frac{1}{m(\Xcal_k)}\int_{\Xcal_k} \ind{A(x) = p} dm(x) \right] \\
&= \frac{1}{m(\Xcal_k)}\int_{\Xcal_k}\Exp \left[ \ind{A(x) = p} \right]dm(x) \\
&= \frac{1}{m(\Xcal_k)} \int_{\Xcal_k} \pi^k_p(x)dm(x)
\end{align*}
Similarly,
\begin{align*}
m(&\Xcal_k)^2 \var \sqbr{ \mu^k_p } \\
&= \Exp \parenth{ \int_{\Xcal_k} \ind{A(x) = p} dm(x)}^2 - \parenth{\int_{\Xcal_k} \pi^k_p(x)dm(x)}^2 \\
&= \int_{\Xcal_k} \int_{\Xcal_k} \Exp \ind{A(x) = p; A(x')=p} dm(x) dm(x') - \int_{\Xcal_k}\int_{\Xcal_k} \pi^k_p(x)\pi^k_p(x')dm(x)dm(x') \\
&= \int_{\Xcal_k \times \Xcal_k} \left( \Prob [A(x) = p; A(x')=p] - \pi^k_p(x)\pi^k_p(x') \right) d(m \times m)(x, x')
\end{align*}
Then observing that the diagonal $D = \{ (x, x) \colon x \in \Xcal_k\}$ is an $(m \times m)$-nullset (this follows for example from Proposition 251T in \cite{fremlin2000measure}), we can restrict the integral to the set $\Xcal_k \times \Xcal_k \setminus D$, on which $\Prob[A(x) = p; A(x')=p] = \pi^k_p(x)\pi^k_p(x')$, by the independent randomization assumption. This proves that $\var \sqbr{ \mu^k_p } = 0$. Therefore $\mu_p^k = \Exp { \mu^k_p }$ almost surely.
\end{proof}

We observe that here, the assumption of non-atomicity is essential. This fact is reminiscent of temperature in statistical physics: independent measurements of the global distribution of the same state yield the same result almost surely.



\subsection{The Rosenthal potential function}
\label{subsec:rosenthal}

We now discuss how one can formulate the set of Nash equilibria as the solution of a convex optimization problem. Consider the function
\begin{equation}
\label{eq:pot_lat}
V(\mu) = \sum _{r \in \Rcal} \int _0 ^{(\bar{M}\mu)_r} c_r(u) du
\end{equation}
defined on the product of simplexes $\Delta^{\Pcal_1} \times \dots \times\Delta^{\Pcal_K}$, which will be denoted $\Delta$. $V$ is called the Rosenthal potential function, and was introduced in~\cite{rosenthal1973class} for the congestion game with finitely many players, and later generalized to the infinite-players case.
It can be viewed as the composition of the function $\bar{V} : \phi \in \Rbb_+^{\Rcal} \mapsto \sum_{r \in \Rcal} \int_{0}^{\phi_r} c_r(u)du$ and the linear function $\mu \mapsto \bar{M}\mu$. 
Since for all $r$, $c_r$ is, by assumption, non-negative, $\bar{V}$ is differentiable, non-negative and $\nabla \bar{V}(\phi) = \parenth{c_r(\phi_r)}_{r \in \Rcal}$. And since $c_r$ are non-decreasing, $\bar{V}$ is convex. Therefore $V$ is convex as the composition of a convex and a linear function.

A simple application of the chain rule gives $\nabla V(\mu) = \bar{M}^\top c(\bar{M}\mu)$. If we denote~$\nabla_{\mu^k} V(\mu)$ the vector of partial derivatives with respect to $\mu^k_p$, $p \in \Pcal_k$, we have $\nabla_{\mu^k} V(\mu) = m(\Xcal_k) {M^k}^\top c(\bar{M}\mu) = m(\Xcal_k) \ell^k(\mu)$. Thus,
\begin{equation}
\label{eq:potential_derivative}
\forall k, \ \forall p \in \Pcal_k, \quad  \frac{\partial V} {\partial \mu^k_p} (\mu) = m(\Xcal_k) \ell^k_p(\mu)
\end{equation}
and $V$ is a potential function for the congestion game, as defined in~\cite{sandholm2001potential} for example. 


Next, we show the relationship between the set of Nash equilibria and the potential function $V$.

\begin{theorem}[Rosenthal~\cite{rosenthal1973class}]
\label{thm:rosenthal}
$\Ncal$ is the set of minimizers of $V$ on the product of simplexes $\Delta$. It is a non-empty convex compact set. We will denote $V_\Ncal$ the value of $V$ on~$\Ncal$.
\end{theorem}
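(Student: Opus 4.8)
The plan is to lean on the two facts already established in the excerpt: that $V$ is convex, continuous and differentiable on $\Delta$, and that it is a potential for the game in the sense of equation~\eqref{eq:potential_derivative}. Convexity together with compactness of $\Delta$ will settle the topological assertions almost immediately, while the identification of the minimizer set with $\Ncal$ will follow by matching the first-order optimality conditions on the convex set $\Delta$ against the characterization of Nash equilibria obtained in Proposition~\ref{prop:nash_average_latency}.

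First I would dispatch the topological claims. Since $\Rcal$ is finite, each $\Pcal_k \subset \Pcal$ is finite, so each simplex $\Delta^{\Pcal_k}$ is a compact convex subset of a finite-dimensional space, and hence so is the product $\Delta$. The function $V$ is continuous, being the composition of the differentiable map $\bar{V}$ with the linear map $\mu \mapsto \bar{M}\mu$. By the Weierstrass extreme value theorem $V$ attains its minimum $V_\Ncal$ on the compact set $\Delta$, so the minimizer set is non-empty; it is closed, as the intersection of $\Delta$ with the preimage $V^{-1}(\{V_\Ncal\})$, and bounded as a subset of $\Delta$, hence compact. Convexity of the minimizer set is the standard consequence of convexity of $V$ and of $\Delta$: if $\mu_0,\mu_1$ both minimize $V$, then for $t \in [0,1]$, $V(t\mu_1 + (1-t)\mu_0) \leq t V(\mu_1) + (1-t) V(\mu_0) = V_\Ncal$, which forces equality.

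The substantive step is the equivalence between the minimizers and $\Ncal$. Because $V$ is convex and differentiable on the convex set $\Delta$, a point $\mu \in \Delta$ minimizes $V$ if and only if the variational inequality $\braket{\nabla V(\mu)}{\nu - \mu} \geq 0$ holds for every $\nu \in \Delta$. As $\Delta$ is a product of simplexes, the inner product splits across populations and each factor may be varied independently; substituting $\nabla_{\mu^k} V(\mu) = m(\Xcal_k)\ell^k(\mu)$ from~\eqref{eq:potential_derivative}, the condition becomes: for every $k$, $\braket{\ell^k(\mu)}{\nu^k - \mu^k} \geq 0$ for all $\nu^k \in \Delta^{\Pcal_k}$. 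Minimizing the linear functional $\nu^k \mapsto \braket{\ell^k(\mu)}{\nu^k}$ over the simplex puts all mass on a minimal-loss bundle, so this reads $\bar{\ell}^k(\mu) = \braket{\ell^k(\mu)}{\mu^k} \leq \min_{p \in \Pcal_k} \ell^k_p(\mu)$; since $\bar{\ell}^k(\mu)$ is a convex combination of the $\ell^k_p(\mu)$ the reverse inequality is automatic, so this is exactly the requirement that $\ell^k_p(\mu) \geq \bar{\ell}^k(\mu)$ for all $p \in \Pcal_k$. By Proposition~\ref{prop:nash_average_latency} this is precisely the condition $\mu \in \Ncal$, completing the identification.

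I expect the main obstacle to be the careful handling of the first-order condition: one must justify that the global variational inequality over $\Delta$ decouples into independent conditions over each simplex factor, and that the minimum of a linear functional over $\Delta^{\Pcal_k}$ is attained at a vertex. Both are routine but should be stated precisely, since the tightness of the equivalence — that the optimality condition is simultaneously necessary and sufficient — rests essentially on the convexity of $V$ rather than on mere stationarity.
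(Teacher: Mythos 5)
Your proof is correct, but it takes a genuinely different route from the paper's. The paper's argument (Appendix~\ref{app:rosenthal_proof}) is a Lagrangian/KKT computation: it keeps the resource loads $\phi$ as explicit variables coupled to $\mu$ by $\phi = \bar{M}\mu$, introduces multipliers $\lambda^k_p \geq 0$ for non-negativity, $v_r$ for the coupling constraints and $w_k$ for the simplex constraints, invokes Slater's condition, and reads off from stationarity and complementary slackness that $\ell^k_p(\mu) = (w_k + \lambda^k_p)/m(\Xcal_k)$, so bundles carrying mass share the common loss $w_k/m(\Xcal_k)$ and unused bundles have loss at least as large --- matching Definition~\ref{def:nash} directly. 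You instead use the variational-inequality characterization of minimizers of a convex differentiable function over a convex set, decouple it across the product of simplexes (valid, as you note, because each factor can be varied independently while the others are held at $\mu^k$), substitute the gradient identity~\eqref{eq:potential_derivative}, and reduce to the average-loss characterization of Proposition~\ref{prop:nash_average_latency}. Both arguments rest on convexity of $V$; yours is shorter because it reuses Proposition~\ref{prop:nash_average_latency} and avoids dual variables altogether, while the paper's KKT route is self-contained relative to Definition~\ref{def:nash} and exhibits the common equilibrium loss explicitly as a Lagrange multiplier, which has some interpretive value. A minor point in your favor: you spell out the non-emptiness, compactness and convexity of the minimizer set, which the paper's appendix leaves implicit.
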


A version of this theorem is proved in~\cite{rosenthal1973class}. We also give a proof in Appendix~\ref{app:rosenthal_proof}. 


Since the set of Nash equilibria can be expressed as the solution to a convex optimization problem, it can be computed in polynomial time in the size of the problem. Beyond computing Nash equilibria, we seek to model how players arrive at the set $\Ncal$. This is discussed in Section~\ref{sec:learning}. But first, we define routing games, a special case of congestion games.

\subsection{Example: routing games}
\label{subsec:routing_game}

A routing game is a congestion game with an underlying graph $\Gcal = (\Vcal, \Ecal)$, with vertex set $\Vcal$ and edge set $\Ecal \subset \Vcal \times \Vcal$. In this case, the resource set is equal to the edge set, $\Rcal = \Ecal$. Routing games are used to model congestion on transportation or communication networks. Each population $\Xcal_k$ is characterized by a common source vertex $s_k \in \Vcal$ and a common destination vertex $t_k \in \Vcal$. In a transportation setting, players represent drivers traveling from $s_k$ to $t_k$; in a communication setting, players send packets from $s_k$ to $t_k$. The action set $\Pcal_k$ is a set of paths connecting $s_k$ to $t_k$. In other words, each player chooses a path connecting his or her source and destination vertices. The bundle load $f^k_p$ is then called the flow on path $p$. The resource load $\phi_r$ is called the total edge flow. Finally, the congestion functions $\phi_r \mapsto c_r(\phi_r)$ determine the delay (or latency) incurred by each player.

\begin{figure}[h]
\centering
\includegraphics[width=.35\textwidth]{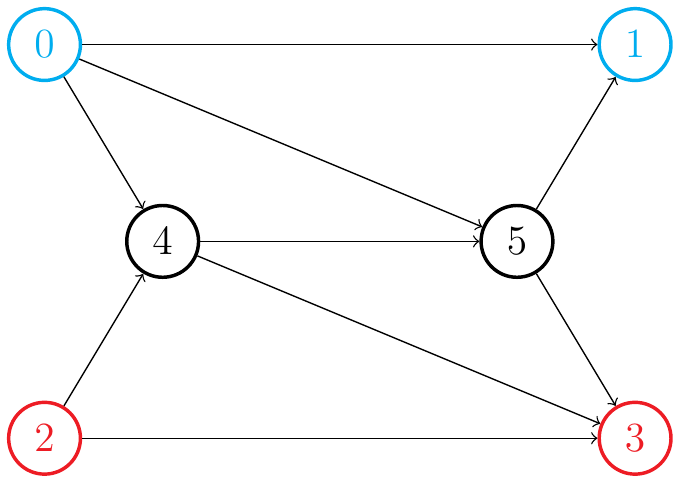}
\caption{Routing game with two populations of players.}
\label{fig:network}
\end{figure}

We will use the routing game given in Figure~\ref{fig:network} as an example to illustrate some of our results in later sections. In this example, two populations of players share the network, the first population sends packets from $v_0$ to $v_1$, and the second population from $v_2$ to $v_3$. The population masses are $F_1 = F_2 = 1$. The congestion functions are given below:
\begin{align*}
c_{(v_0, v_1)}(u) &= u+2 
& c_{(v_0, v_4)}(u) &= \frac{u}{2} 
& c_{(v_0, v_5)}(u) &= u \\
c_{(v_2, v_3)}(u) &= u+1 
& c_{(v_2, v_4)}(u) &= \frac{1}{2} 
& c_{(v_4, v_3)}(u) &= u \\
c_{(v_4, v_5)}(u) &= 3u
& c_{(v_5, v_1)}(u) &= \frac{u}{3} 
& c_{(v_5, v_3)}(u) &= \frac{u}{4}
\end{align*}

The paths (bundles) available to each population are given by:
\begin{align*}
\Pcal_1 &= \{ (v_0, v_1), (v_0, v_4, v_5, v_1), (v_0, v_5, v_1)\} \\
\Pcal_2 &= \{ (v_2, v_3), (v_2, v_4, v_5, v_3), (v_2, v_4, v_3) \}
\end{align*}
In this case, since the congestion functions are linear, the Rosenthal potential function is quadratic. Its minimizer is, in this example, unique, given by
\[
\Ncal = \big((0, 0.187, 0.813), (0.223, 0.053, 0.724) \big)
\]
and the corresponding path losses are given by
\begin{itemize}
\item for all $p \in \Pcal_1 \setminus (v_0, v_4, v_5, v_1)$, $\ell^1_p(\mu) = 1.14$
\item for $p = (v_0, v_4, v_5, v_1)$, $\ell^1_p(\mu) = 2.00$
\item for all $p \in \Pcal_2$, $\ell^2_p(\mu) = 1.22$
\end{itemize}

\section{Online learning in congestion games}
\label{sec:learning}

We now describe the online learning framework for the congestion game, and present the Hedge algorithm in particular.

\subsection{The online learning framework}
Suppose that the game is played repeatedly for infinitely many iterations, indexed by $\tau \in \Nbb$. During iteration $\tau$, each player chooses a bundle simultaneously. The decision of all players can be represented, as defined above, by an action profile $a^{(\tau)} : \Xcal \rightarrow \Pcal$. This induces, at the level of each population $\Xcal_k$, a bundle distribution ${\mu^k}^{(\tau)}$. These, in turn, determine the resource loads and the bundle losses $\ell^k_p(\mu^{(\tau)})$. The losses for bundles $p \in \Pcal_k$ are revealed to all players in population $\Xcal_k$, which marks the end of iteration $\tau$. Players can then use the information revealed to them to update their strategies before the start of the next iteration. 

\paragraph{A note on the information assumptions}
Here, we assume that at the end of the iteration, a player observes the losses of all bundles $p \in \Pcal_k$. Instead, one could assume that a player can only observe the losses she incurs. This is often called the multi-armed-bandit setting, in reference to the armed-bandit slot machines, in which a gambler can choose, at each iteration, one machine to play, and is only revealed the loss of that machine. Making this restriction requires players to use additional exploration of bundles. A comprehensive presentation of online learning algorithms in the multi-armed bandit setting, both stochastic and deterministic, can be found for example in~\cite{bubeck2011introduction, audibert2009minimax}. Regret bounds are also given in~\cite{cesa2006prediction} (Section 6.7, p.156-159) and~\cite{bubeck2009bounded,bubeck2012regret}. We choose to use the full feedback assumption to simplify our discussion, leaving the multi-armed-bandit setting as a possible extension. We believe this is a reasonable model in many games, since bundle losses could be announced publicly. In the special case of routing games, this can be achieved by having a central authority measure and announce the delays. This is particularly true in transportation networks, in which many agencies and online services measure delays and make this information publicly available. Assuming the full vector of bundle losses is revealed does not mean, however, that players have access to the individual resource loads $\phi_r^{(\tau)}$, or to the congestion functions $c_r(\cdot)$, which is consistent with our initial argument that, in a realistic model, players should only rely on the observed value of the bundle losses.

Each player $x \in \Xcal_k$ is assumed to draw her bundle from a randomized strategy $\pi^{(\tau)}(x) \in \Delta^{\Pcal_k}$ (the deterministic case is a special case in which $\pi^{(\tau)}(x)$ is a vertex on the simplex, i.e. a pure strategy). As discussed in Section~\ref{subsec:mixed}, players randomize independently. At the end of iteration $\tau$, player $x$ updates her strategy using an \emph{update rule} or \emph{online learning algorithm}, as defined below.

\begin{definition}[Online learning algorithm for the congestion game] An online learning algorithm (or update rule) for the congestion game, applied by a player $x \in \Xcal_k$, is a sequence of functions $\left(\tensor[^x]{U}{^{(\tau)}} \right)_{\tau \in \Nbb}$, fixed a priori, that is, before the start of the game, such that for each $\tau$,
\func{ \tensor[^x]{U}{^{(\tau)}} }{ \left( \Rbb^{\Pcal_k} \right)^\tau \times \Delta^{\Pcal_k}}{\Delta^{\Pcal_k}}{\left((\ell^k(\mu^{(t)}))_{t \leq \tau}, \pi^{(\tau)}(x) \right)}{\pi^{(\tau+1)}(x) }
is a function which maps, given the history of bundle losses $(\ell^k(\mu^{(t)}))_{t \leq \tau}$, the strategy on the current day $\pi^{(\tau)}(x)$ to the strategy on the next day $\pi^{(\tau+1)}(x)$.

\end{definition}

The online learning framework is summarized in Algorithm~\ref{alg:online}.
\begin{algorithm}[h]
\begin{small}
\caption{Online learning framework for the congestion game}
\label{alg:online}
\begin{algorithmic}[1]
\State For every player $x \in \Pcal_k$, an initial mixed strategy $\pi^{(0)}(x) \in \Simp^{\Pcal_k}$ and an online learning algorithm $(\tensor[^x]{U}{^{(\tau)}})_{\tau \in \Nbb}$
\For{each iteration $\tau \in \Nbb$}
\State Every player $x$ independently draws a bundle according to her strategy $\pi^{(\tau)}(x)$, i.e. $A^{(\tau)}(x) \sim \pi^{(\tau)}(x)$.
\State The vector of bundle losses $\ell^k(\mu^{(\tau)})$ is revealed to all players in $\Pcal_k$. Each player incurs the loss of the bundle she chose.
\State Players update their mixed strategies: $\pi^{(\tau+1)}(x) = \tensor[^x]{U}{^{(\tau)}} ((\ell^k_p(\mu^{(t)}))_{t \leq \tau}, \pi^{(\tau)}(x))$.
\EndFor
\end{algorithmic}
\end{small}
\end{algorithm}

We will focus our attention on algorithms which have vanishing upper bounds on the average discounted regret, defined in the next section.

\subsection{Discounted regret}
Since the game is played for infinitely many iterations, we assume that the losses of players are discounted over time. This is a common technique in infinite-horizon optimal control for example, and can be motivated from an economic perspective by considering that losses are devalued over time. We also give an interpretation of discounting in terms of learning rates, as discussed in Section~\ref{subsec:hedge}.

Let $(\gamma_\tau)_{\tau \in \Nbb}$ denote the sequence of discount factors. We make the following assumption:
\begin{assumption}
\label{assumption:discounts}
The sequence of discount factors $(\gamma_\tau)_{\tau \in \Nbb}$ is assumed to be positive decreasing, with $\lim_{\tau \rightarrow \infty} \gamma_\tau = 0$ and $\lim_{T \rightarrow \infty} \sum_{\tau = 0}^T \gamma_\tau = \infty$.
\end{assumption}

On iteration $\tau$, a player $x \in \Xcal_k$ who draws an action $A^{(\tau)}(x) \sim \pi^{(\tau)}(x)$ incurs a discounted loss given by $\gamma_\tau\ell^k_{A^{(\tau)}(x)}(\mu^{(\tau)})$, where $\mu^{(\tau)}$ is the distribution induced by the profile $A^{(\tau)}$. The cumulative discounted loss for player $x$, up to iteration $T$, is then defined to be
\begin{equation}
\label{eq:cumulative_loss}
{L}{^{(T)}}(x) 
= \sum_{\tau = 0}^T \gamma_\tau \ell^k_{A^{(\tau)}(x)}(\mu^{(\tau)})  
\end{equation}
We observe that this is a random variable, since the action $A^{(\tau)}(x)$ of player $x$ is random, drawn from a distribution $\pi^{(\tau)}(x)$. The expectation of the cumulative discounted loss is then
\begin{align*}
\Exp[ {L}{^{(T)}}(x) ] 
&= \sum_{\tau = 0}^T \gamma_\tau \Exp\left[ \ell^k_{A^{(\tau)}(x)}(\mu^{(\tau)}) \right] \\
&= \sum_{\tau = 0}^T \gamma_\tau
\braket{\pi^{(\tau)}(x)}{\ell^k(\mu^{(\tau)})}
\end{align*}
where $\braket{\cdot}{\cdot}$ denotes the Euclidean inner product on $\Rbb^{\Pcal_k}$. Similarly, we define the cumulative discounted loss for a fixed bundle $p \in \Pcal_k$
\begin{equation}
\label{eq:cumulative_bundle_loss}
{\Lscr_p^k}^{(T)} = \sum_{\tau = 0}^T \gamma_\tau\ell_p^k(\mu^{(\tau)})
\end{equation}
We can now define the discounted regret.
\begin{definition}[Discounted regret] Let $x \in \Xcal_k$, and consider an online learning algorithm for the congestion game, given by the sequence of functions $\left(\tensor[^x]{U}{^{(\tau)}} \right)_{\tau \in \Nbb}$. Let $(\mu^{(\tau)})_{\tau \in \Nbb}$ be the sequence of distributions, determined by the mixed strategy profile of all players. Then the discounted regret up to iteration $T$, for player $x$, under algorithm $U$, is the random variable
\begin{equation}
\label{eq:regret}
{R}{^{(T)}}(x) = {L}{^{(T)}}(x) - \min_{p \in \Pcal_k} {\Lscr_p^k}^{(T)}
\end{equation}
The algorithm $U$ is said to have sublinear discounted regret if, for any sequence of distributions $(\mu^{(\tau)})_{\tau \in \Nbb}$, and any initial strategy $\pi^{(0)}$,
\begin{equation}
\frac{1} {\sum _{\tau = 0}^T \gamma_\tau} \left[ {R}{^{(T)}}(x) \right]^+ \rightarrow 0 \text{ almost surely as } T \to \infty
\end{equation}
If we have convergence in the $L^1$-norm, $\frac{1}{\sum_{\tau = 0}^T \gamma_\tau} \sqbr{\Exp \sqbr{ R^{(T)}(x)} }^+ \rightarrow 0$, we say that the algorithm has sublinear discounted regret \emph{in expectation}.
\end{definition}

We observe that, in the definition of the regret, one can replace the minimum over the set ${\Pcal_k}$ by a minimum over the simplex $\Delta^{\Pcal_k}$
\[
\min_{p \in \Pcal_k} L_p^{(T)} = \min_{\pi \in \Delta^{\Pcal_k}} \braket{\pi}{L^{(T)}}
\]
since the minimizers of a bounded linear function lie on the set of extremal points of the feasible set. Therefore, the discounted regret compares the performance of the online learning algorithm to the \emph{best constant strategy in hindsight}. Indeed, $\braket{\pi}{L^{(T)}}$ is the cumulative discounted loss of a constant strategy~$\pi$, and minimizing this expression over $\pi \in \Delta^{\Pcal_k}$ yields the best constant strategy in hindsight: one cannot know a priori which strategy will minimize the expression, until all losses up to $T$ are revealed. If the algorithm has sublinear regret, its average performance is, asymptotically, as good as the performance of any constant strategy, regardless of the sequence of distributions $(\mu^{(\tau)})_{\tau\in \Nbb}$.

\paragraph{A note on monotonicity of the discount factors:} A similar definition of discounted regret is used for example by Cesa-Bianchi and Lugosi in Section~3.2 of~\cite{cesa2006prediction}. However, in their definition, the sequence of discount factors is \emph{increasing}. This can be motivated by the following argument: present observations may provide better information than past, stale observations. While this argument is accurate in many applications, it does not serve our purpose of convergence of population strategies. In our discussion, the standing assumption is that discount factors are \emph{decreasing}.

Finally, we observe that the cumulative discounted loss and regret are bounded, uniformly in $x$.
\begin{proposition}
\label{prop:bound}
There exists $\rho \geq 0$ such that $\forall k$,
\begin{align}
&\forall p \in \Pcal_k, \ \forall \mu \in \Delta, \ \ell^k_p(\mu) \in [0, \rho] \label{bounded_loss}\\
&\forall x \in \Xcal_k, \ \frac{1}{\sum_{\tau = 0}^T \gamma_\tau} {L}{^{(T)}}(x) \in [0, \rho] \label{eq:bounded_cum_loss}\\
&\forall x \in \Xcal_k, \ \frac{1}{\sum_{\tau = 0}^T \gamma_\tau} \sqbr{ {R}{^{(T)}}(x) }^+ \in [0, \rho] \label{eq:bounded_regret}
\end{align}
\end{proposition}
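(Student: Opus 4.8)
The plan is to reduce all three bounds to the first one, \eqref{bounded_loss}, since \eqref{eq:bounded_cum_loss} and \eqref{eq:bounded_regret} follow from it almost mechanically once we use that the discount factors $\gamma_\tau$ are positive and that cumulative bundle losses are non-negative. So the substantive content is exhibiting a single constant $\rho$ that bounds every bundle loss $\ell^k_p(\mu)$ uniformly over all populations $k$, bundles $p \in \Pcal_k$, and distributions $\mu \in \Delta$.

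For \eqref{bounded_loss}, the key observation is that the resource loads are confined to a fixed compact box. Writing $m(\Xcal) = \sum_{k=1}^K m(\Xcal_k)$, which is finite by assumption, I would first bound each resource load: using $\phi = \bar{M}\mu$ and the definition of $\bar{M}$, we have $\phi_r = \sum_{k=1}^K m(\Xcal_k)(M^k \mu^k)_r$, and since the entries of $M^k$ are $0$ or $1$ and $\mu^k$ lies on the simplex, $(M^k\mu^k)_r = \sum_{p \in \Pcal_k : r \in p} \mu^k_p \leq \sum_{p \in \Pcal_k}\mu^k_p = 1$. Hence $0 \leq \phi_r \leq m(\Xcal)$ for every $r \in \Rcal$, uniformly in $\mu$. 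Now invoke Assumption~\ref{ass:congestion_functions}: each $c_r$ is non-negative and non-decreasing, so $0 \leq c_r(\phi_r) \leq c_r(m(\Xcal))$ on this range. Since $\Rcal$ is finite, I can set
\[
\rho = \sum_{r \in \Rcal} c_r(m(\Xcal)) < \infty,
\]
and then from \eqref{eq:bundel_loss}, $\ell^k_p(\mu) = \sum_{r \in p} c_r(\phi_r) \in [0, \rho]$ for every $k$, $p$, and $\mu$, because $p \subseteq \Rcal$. (Note that only continuity/monotonicity and finiteness of $\Rcal$ are used here; the Lipschitz hypothesis is not needed for this proposition, though it matters elsewhere.)

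For \eqref{eq:bounded_cum_loss}, I would bound the summand in \eqref{eq:cumulative_loss} termwise: since $\gamma_\tau > 0$ and $\ell^k_{A^{(\tau)}(x)}(\mu^{(\tau)}) \in [0,\rho]$ by the bound just established, each term lies in $[0, \gamma_\tau \rho]$, so $L^{(T)}(x) \in \big[0, \rho \sum_{\tau=0}^T \gamma_\tau\big]$, and dividing by $\sum_{\tau=0}^T \gamma_\tau > 0$ gives the claim. For \eqref{eq:bounded_regret}, observe from \eqref{eq:cumulative_bundle_loss} that each ${\Lscr_p^k}^{(T)} \geq 0$, hence $\min_{p \in \Pcal_k} {\Lscr_p^k}^{(T)} \geq 0$, and so from \eqref{eq:regret} we get $R^{(T)}(x) \leq L^{(T)}(x)$. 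Taking positive parts and using $L^{(T)}(x) \geq 0$ yields $0 \leq [R^{(T)}(x)]^+ \leq L^{(T)}(x) \leq \rho \sum_{\tau=0}^T \gamma_\tau$, and dividing through gives \eqref{eq:bounded_regret}.

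I do not expect a genuine obstacle here; the only step requiring any care is the uniform bound on $\phi_r$, where one must use both the simplex constraint and the finiteness of the total mass $m(\Xcal)$ to get a bound independent of $\mu$ and of $\tau$. Everything downstream is monotonicity of $c_r$, non-negativity of the relevant quantities, and positivity of the $\gamma_\tau$. The fact that the bound is the \emph{same} $\rho$ across all three statements and uniform in $x$ is automatic, since $\rho$ was constructed without reference to any particular player or iteration.
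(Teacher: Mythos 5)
Your proof is correct, but the route to the key bound \eqref{bounded_loss} is genuinely different from the paper's. The paper argues softly: each $\mu \mapsto \ell^k_p(\mu)$ is continuous on the compact set $\Delta$, hence bounded, and since there are finitely many pairs $(k,p)$ a common bound $\rho$ exists; the remaining two bounds are then dispatched as immediate consequences of the definitions, exactly as you do. You instead construct $\rho$ explicitly: you bound every resource load by the total mass, $0 \leq \phi_r \leq m(\Xcal)$, using the simplex constraint and the $0/1$ structure of the incidence matrices, and then set $\rho = \sum_{r \in \Rcal} c_r(m(\Xcal))$ via non-negativity and monotonicity of the congestion functions. Each approach has something to recommend it: the paper's argument is shorter and generalizes to any continuous loss structure, with no use of monotonicity; yours yields a concrete, computable constant --- which is not merely cosmetic, since the Hedge algorithm of Definition~\ref{def:hedge} requires a known upper bound $\rho$ on the losses to normalize them, and a pure compactness argument only asserts existence --- and it correctly identifies that the Lipschitz part of Assumption~\ref{ass:congestion_functions} is not needed here (the paper's continuity argument does lean on it, if only weakly). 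Your handling of \eqref{eq:bounded_cum_loss} and \eqref{eq:bounded_regret}, in particular the observation that $\min_{p} {\Lscr_p^k}^{(T)} \geq 0$ forces $[R^{(T)}(x)]^+ \leq L^{(T)}(x)$, is exactly the content the paper leaves implicit.
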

\begin{proof}
Since the bundle loss functions $\mu \mapsto \ell^k_p(\mu)$ are continuous on the compact set $\Delta$, they are bounded, and since there are finitely many bundles, there exists a common bound $\rho$ such that for all $k$, for all $p \in \Pcal_k$ and all $\mu$, $0 \leq \ell^k_p(\mu) \leq \rho$. The bounds~\eqref{eq:bounded_cum_loss} and~\eqref{eq:bounded_regret} follow from~\eqref{bounded_loss} and the definitions~\eqref{eq:cumulative_loss} and~\eqref{eq:regret} of ${L}{^{(T)}}(x)$ and ${R}{^{(T)}}(x)$.
\end{proof}
\subsection{Population-wide regret}
We have defined the discounted regret ${R}{^{(T)}}(x)$ for a single player $x$. In order to analyze the population dynamics, we define a population-wide cumulative discounted loss ${L^k}^{(T)}$, and discounted regret ${R^k}^{(T)}$ as follows:
\begin{align}
{L^k}^{(T)} &= \frac{1}{m(\Xcal_k)} \int_{\Xcal_k} {L}{^{(T)}}(x)dm(x) \\
{R^k}^{(T)} &= \frac{1}{m(\Xcal_k)} \int_{\Xcal_k} {R}{^{(T)}}(x)dm(x) = {L^k}^{(T)} - \min_{p \in \Pcal_k} {\Lscr^k_p}^{(T)}
\end{align}
Since ${L}{^{(T)}}(x)$ is random for all $x$, ${L^k}^{(T)}$ is also a random variable. However, it is, in fact, almost surely equal to its expectation. Indeed, recalling that ${\mu_p^k}^{(\tau)}$ is the proportion of players who chose bundle $p$ at iteration $\tau$ (also a random variable), we can write
\begin{align}
{L^k}^{(T)} 
&= \sum_{\tau = 0}^T \gamma_\tau \frac{1}{m(\Xcal_k)}\int_{\Xcal_k} \ell^k_{A^{(\tau)}(x)}(\mu^{(\tau)}) dm(x) \notag \\
&= \sum_{\tau = 0}^T \gamma_\tau \frac{1}{m(\Xcal_k)} \sum_{p \in \Pcal_k} \int_{ \{ x \in \Xcal_k: A^{(\tau)}(x) = p \} } \ell^k_{p}(\mu^{(\tau)}) dm(x) \notag \\
&= \sum_{\tau = 0}^T \gamma_\tau \sum_{p \in \Pcal_k} {\mu^k_p}^{(\tau)} \ell^k_p(\mu^{(\tau)})
\end{align}
thus assuming players randomize independently, $\mu^{(\tau)}$ is almost surely deterministic by Proposition~\ref{prop:distribution_deterministic}, and so is ${L^k}^{(T)}$. The same holds for ${R^k}^{(T)}$.

\begin{proposition}
\label{prop:population_regret_sublinear}
If almost every player $x \in \Xcal_k$ applies an online learning algorithm with sublinear regret in expectation, then the population-wide regret is also sublinear.
\end{proposition}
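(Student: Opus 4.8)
The plan is to pass from the individual expected-regret bounds to the population bound using three ingredients: the almost-sure determinism of the population quantities, Fubini's theorem, and Jensen's inequality for the convex map $t \mapsto [t]^+$, followed by the dominated convergence theorem. First I would recall what is already available. By the independent-randomization argument behind Proposition~\ref{prop:distribution_deterministic}, the population-wide quantities ${L^k}^{(T)}$ and ${R^k}^{(T)}$ are almost surely equal to their expectations; intersecting over the countably many indices $T \in \Nbb$, almost surely ${R^k}^{(T)} = \Exp\sqbr{{R^k}^{(T)}}$ for \emph{every} $T$ simultaneously. Consequently it suffices to establish the deterministic statement
\[
\frac{1}{\sum_{\tau = 0}^T \gamma_\tau}\sqbr{\Exp\sqbr{{R^k}^{(T)}}}^+ \longrightarrow 0 \quad \text{as } T \to \infty .
\]

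The next step is to interchange the expectation with the integral over $\Xcal_k$. Since $(x,\omega) \mapsto R^{(T)}(x)$ is jointly measurable and uniformly bounded by Proposition~\ref{prop:bound}, Fubini's theorem gives $\Exp\sqbr{{R^k}^{(T)}} = \frac{1}{m(\Xcal_k)}\int_{\Xcal_k} \Exp\sqbr{R^{(T)}(x)}\,dm(x)$. The crucial move is then to apply Jensen's inequality with the convex function $t\mapsto[t]^+$ against the probability measure $\frac{1}{m(\Xcal_k)}\,dm$ on $\Xcal_k$, which yields
\[
\frac{1}{\sum_{\tau = 0}^T \gamma_\tau}\sqbr{\Exp\sqbr{{R^k}^{(T)}}}^+ \leq \frac{1}{m(\Xcal_k)}\int_{\Xcal_k} \frac{1}{\sum_{\tau = 0}^T \gamma_\tau}\sqbr{\Exp\sqbr{R^{(T)}(x)}}^+\,dm(x).
\]
This inequality controls the positive part of an average by the average of the positive parts, which is precisely what allows the pointwise individual bounds to survive the averaging over the population.

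Finally I would invoke dominated convergence in the index $T$. By hypothesis, for almost every $x \in \Xcal_k$ the integrand $\frac{1}{\sum_{\tau=0}^T \gamma_\tau}\sqbr{\Exp\sqbr{R^{(T)}(x)}}^+$ tends to $0$ as $T \to \infty$, and by~\eqref{eq:bounded_regret} it is dominated by the constant $\rho$, which is integrable on the finite-measure space $\Xcal_k$. Hence the right-hand side converges to $0$, forcing the left-hand side to $0$ as well; combined with the determinism reduction this gives $\frac{1}{\sum_{\tau=0}^T \gamma_\tau}\sqbr{{R^k}^{(T)}}^+ \to 0$ almost surely, i.e. sublinear population-wide regret. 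The only genuine subtlety is conceptual rather than computational: the positive-part nonlinearity would defeat a naive linearity argument, so one must route the averaging through the convexity of $[\cdot]^+$ via Jensen rather than merely integrating the identity ${R^k}^{(T)} = \frac{1}{m(\Xcal_k)}\int_{\Xcal_k} R^{(T)}(x)\,dm(x)$. The remaining pieces --- the Fubini exchange, the almost-sure determinism, and the uniform domination --- are routine consequences of the boundedness in Proposition~\ref{prop:bound}.
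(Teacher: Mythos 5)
Your proposal is correct and follows essentially the same route as the paper's proof: the almost-sure determinism of ${R^k}^{(T)}$, the Tonelli/Fubini exchange, Jensen's inequality for $t \mapsto [t]^+$, and the dominated convergence theorem with the uniform bound $\rho$ from Proposition~\ref{prop:bound}. Your additional remark about intersecting the almost-sure events over the countably many indices $T$ is a small technical refinement the paper leaves implicit, but it does not change the argument.
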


\begin{proof}
By the previous observation, we have, almost surely,
\[
{R^k}^{(T)} = \Exp\left[ {R^k}^{(T)} \right] = \frac{1}{m(\Xcal_k)} \int_{\Xcal_k} \Exp\sqbr{ {R}{^{(T)}}(x) } dm(x)
\]
where the second equality follows from Tonelli's theorem. Taking the positive part and using Jensen's inequality, we have
\[
\frac{1}{\sum_{\tau = 0}^T \gamma_\tau} \sqbr{{R^k}^{(T)}}^+ \leq \frac{1}{m(\Xcal_k)} \int_{\Xcal_k} \frac{1}{\sum_{\tau = 0}^T \gamma_\tau}\sqbr{ \Exp\sqbr{{R}{^{(T)}}(x) } }^+ dm(x)
\]
By assumption, $\frac{1}{\sum_{\tau = 0}^T \gamma_\tau} \sqbr{\Exp \sqbr{{R}{^{(T)}}(x) }}^+$ converges to $0$ for all $x$, and by Proposition~\ref{prop:bound}, it is bounded uniformly in $x$. Thus the result follows by applying the dominated convergence theorem.
\end{proof}

\subsection{Hedge algorithm with vanishing learning rates}
\label{subsec:hedge}
We now present one particular online learning algorithm with sublinear regret. Consider a congestion game, and let $\rho$ be an upper bound on the losses. The existence of such an upper bound was established in Proposition~\ref{prop:bound}.
\begin{definition}[Hedge algorithm] \label{def:hedge} The Hedge algorithm, applied by player $x \in \Xcal_k$, with initial distribution $\pi^{(0)} \in \Delta^{\Pcal_k}$ and learning rates $(\eta_\tau)_{\tau \in \Nbb}$ is an online learning algorithm $(\tensor[^x]{U}{^{(\tau)}})_{\tau \in \Nbb}$ such that the $\tau$-th update function is given by
\[
\tensor[^x]{U}{^{(\tau)}} ( (\ell^k(\mu^{(t)}))_{t \leq \tau}, \pi^{(\tau)} ) = \psi \parenth{\parenth{ \pi_p^{(\tau)} \exp\parenth{- \eta_\tau \frac{\ell^k_p(\mu^{(\tau)}) }{ \rho } } }_{p \in \Pcal_k}}
\]
where $\psi$ is the normalization function
\func{\psi}{\Rbb_+^{\Pcal_k} \setminus \{ 0 \}}{\Delta^{\Pcal_k}}{v}{\frac{v}{\sum _{p \in \Pcal_k} v_p}}

That is, the distribution at iteration $\tau +1$ is proportional to the following vector
\begin{equation}
\label{eq:hedge_update}
\pi^{(\tau+1)} \propto \parenth{ \pi_p^{(\tau)} \exp\parenth{- \eta_\tau \frac{\ell^k_p(\mu^{(\tau)}) }{ \rho } } }_{p \in \Pcal_k}
\end{equation}
\end{definition}

Intuitively, the Hedge algorithm updates the distribution by computing, at each iteration, a set of bundle weights, then normalizing the vector of weights. The weight of a bundle $p$ is obtained by multiplying the probability at the previous iteration, $\pi^{(\tau)}_p$, by a term which is exponentially decreasing in the bundle loss $\ell^k_p(\mu^{(\tau)})$, thus the higher the loss of bundle $p$ at iteration $\tau$, the lower the probability of selecting $p$ at the next iteration. The parameter $\eta_\tau$ can be interpreted as a learning rate, as discussed in the following proposition.

\begin{proposition}
\label{prop:hedge_reg_div}
The Hedge update rule~\eqref{eq:hedge_update} is the solution to the following optimization problem:
\begin{equation}
\label{eq:hedge_reg_div}
\pi^{(\tau + 1)} \in \arg \min _{\pi \in \Simp^{\Pcal_k}} 
\braket{ \pi }{ \frac{\ell^k(\mu^{(\tau)})}{\rho} }
+ \frac{1}{\eta_\tau} D_{\text{KL}}(\pi \| \pi^{(\tau)})
\end{equation}
where $D_{\text{KL}}(\pi \| \nu) = \sum_{p \in \Pcal_k} \pi_p \log \frac{\pi_p}{\nu_p}$ is the Kullback-Leibler divergence of distribution $\pi$ with respect to $\nu$.
\end{proposition}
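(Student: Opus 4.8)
The plan is to treat~\eqref{eq:hedge_reg_div} as a convex program over the simplex and solve its first-order optimality conditions explicitly, recovering the multiplicative update~\eqref{eq:hedge_update}. First I would observe that the objective
\[
g(\pi) = \braket{\pi}{\frac{\ell^k(\mu^{(\tau)})}{\rho}} + \frac{1}{\eta_\tau} D_{\text{KL}}(\pi \| \pi^{(\tau)})
\]
is strictly convex on $\Delta^{\Pcal_k}$: the first term is linear, and the second equals, up to the additive constant $-\frac{1}{\eta_\tau}\sum_{p} \pi_p \log \pi^{(\tau)}_p$, a positive multiple of the negative entropy $\sum_p \pi_p \log \pi_p$, whose Hessian is diagonal with strictly positive entries $1/\pi_p$. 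Since $\Delta^{\Pcal_k}$ is convex and compact, $g$ attains a unique minimizer, and the KKT conditions are both necessary and sufficient for global optimality.

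Next I would reduce to the support of $\pi^{(\tau)}$. If $\pi^{(\tau)}_p = 0$, then finiteness of $D_{\text{KL}}(\pi \| \pi^{(\tau)})$ forces $\pi_p = 0$, and the Hedge weight vector in~\eqref{eq:hedge_update} likewise assigns zero mass to such $p$; hence both sides agree off the support, and it suffices to work on $S = \{p \in \Pcal_k : \pi^{(\tau)}_p > 0\}$. The crucial point is that the minimizer lies in the relative interior of the face $\Delta^{S}$: the partial derivative of the KL term is $\frac{1}{\eta_\tau}\parenth{\log(\pi_p / \pi^{(\tau)}_p) + 1}$, which tends to $-\infty$ as $\pi_p \to 0^+$, so moving any coordinate toward the boundary strictly increases $g$. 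Consequently none of the nonnegativity constraints is active at the optimum, and only the equality constraint $\sum_{p \in S}\pi_p = 1$ needs to be dualized. This interior argument is the one technical step that deserves care and is the main obstacle to a fully rigorous proof; everything else is routine calculus.

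I would then form the Lagrangian $\Lcal(\pi, \lambda) = g(\pi) + \lambda\parenth{\sum_{p \in S}\pi_p - 1}$ and set $\partial \Lcal / \partial \pi_p = 0$, obtaining
\[
\frac{\ell^k_p(\mu^{(\tau)})}{\rho} + \frac{1}{\eta_\tau}\parenth{\log\frac{\pi_p}{\pi^{(\tau)}_p} + 1} + \lambda = 0,
\]
which rearranges to $\pi_p = \pi^{(\tau)}_p \exp\parenth{-\eta_\tau \ell^k_p(\mu^{(\tau)})/\rho} \cdot \exp(-\eta_\tau \lambda - 1)$. The final factor is independent of $p$, so $\pi_p \propto \pi^{(\tau)}_p \exp\parenth{-\eta_\tau \ell^k_p(\mu^{(\tau)})/\rho}$, and imposing $\sum_{p \in S}\pi_p = 1$ fixes the proportionality constant; this is exactly the normalization $\psi$ applied to the weight vector of~\eqref{eq:hedge_update}. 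Since $g$ is convex, this stationary point is a global minimizer, and by strict convexity it is the unique one; hence the Hedge iterate $\pi^{(\tau+1)}$ coincides with $\arg\min_{\pi \in \Delta^{\Pcal_k}} g(\pi)$, as claimed.
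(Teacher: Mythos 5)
Your proof is correct and follows essentially the same route as the paper's: dualize the normalization constraint, set the Lagrangian's gradient to zero, and recover the multiplicative form $\pi_p \propto \pi_p^{(\tau)} \exp\bigl(-\eta_\tau \ell^k_p(\mu^{(\tau)})/\rho\bigr)$, with the multiplier fixed by $\sum_p \pi_p = 1$. The extra steps you supply — reduction to the support of $\pi^{(\tau)}$, the interiority argument showing the nonnegativity constraints are inactive, and strict convexity giving uniqueness — are exactly the points the paper's proof leaves implicit, so your version is a more complete rendering of the same argument (one nit: the term $-\frac{1}{\eta_\tau}\sum_p \pi_p \log \pi^{(\tau)}_p$ is linear in $\pi$, not an additive constant, though this does not affect the convexity claim).
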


\begin{proof} Consider the Lagrangian of the problem, with dual variable $\lambda \in \Rbb$ associated to the constraint $\sum_{p \in \Pcal_k} \pi_p = 1$,
\begin{equation*}
\Lcal (\pi ; \lambda ) = \sum _{p \in \Pcal_k} \pi_p \frac{\ell^k(\mu^{(\tau)}) }{ \rho} + \frac{1}{\eta_\tau} \sum _{p \in \Pcal_k} \pi_p \log \frac{\pi_p}{\pi^{(\tau)}_p} + \lambda \left( \sum_{p \in \Pcal_k} \pi_p - 1 \right),
\end{equation*}
its gradient is given by
\begin{align*}
\frac{\partial}{\partial \pi_p} \Lcal (\pi ; \lambda) &= \frac{\ell^k_p(\mu^{(\tau)})}{\rho} + \frac{1}{\eta_\tau} \left(\log \frac{\pi_p}{\pi_p^{(\tau)}} + 1 \right) + \lambda \\
\frac{\partial}{\partial \lambda} \Lcal (\pi ; \lambda) &= \sum _{p \in \Pcal_k} \pi_p -1
\end{align*}
and $(\pi^\star, \lambda^\star)$ are primal-dual optimal if and only if the gradient of $\Lcal$ vanishes at $(\pi^\star, \lambda^\star)$, that is,
\al{
&\pi^*_p = \pi_p^{(\tau)}\exp\parenth{-1 - \eta_\tau \lambda - \eta_\tau \frac{\ell^k(\mu^{(\tau)}) }{ \rho}} \\
&\sum_{p \in \Pcal_k} \pi^\star_p = 1
}
which can be rewritten as $\pi^\star_p = \frac{1}{\alpha} \pi_p^{(\tau)} \exp\parenth{-\eta_\tau \frac{\ell^k(\mu^{(\tau)})}{\rho}}$, with $\alpha = \exp\parenth{1+\eta_\tau\lambda} = \sum_{p' \in \Pcal_k} \pi^{(\tau)}_{p'} \alpha\exp\parenth{-\eta \frac{\ell^k(\mu^{(\tau)})}{\rho}}$ is the normalization constant. Thus $\pi^\star$ satisfies the Hedge update equation~\eqref{eq:hedge_update}.
\end{proof}

The objective function in~\eqref{eq:hedge_reg_div} is the sum of an instantaneous loss term $\braket{\pi}{\frac{\ell^k(\mu^{(\tau)})}{\rho}}$ and a regularization term $\frac{1}{\eta_\tau} D_{KL}(\pi \| \pi^{(\tau)})$ which penalizes deviations from the previous distribution $\pi^{(\tau)}$, with a regularization coefficient $\frac{1}{\eta_\tau}$. The greedy problem (with no regularization term) would yield a pure strategy which concentrates all the mass on the bundle which had minimal loss on the previous iteration. With the regularization term, the player ``hedges her bet'' by penalizing too much deviation from the previous distribution. The coefficient $\eta_\tau$ determines the relative importance of the two terms in the objective function. In particular, as $\eta_\tau \rightarrow 0$, the solution to the problem~\eqref{eq:hedge_reg_div} converges to $\pi^{(\tau)}$ since the regularization term dominates the instantaneous loss term. In other words, as $\eta_\tau$ converges to $0$, the player stops learning from new observations, which justifies calling $\eta_\tau$ a \emph{learning rate}.

\begin{rem} \label{rem:mul} The sequence of distributions given by the Hedge algorithm also satisfy, for all $\tau$,
\begin{equation}
\label{eq:hedge_update_multiplicative}
\pi^{(\tau + 1)} \propto \parenth{ \pi_p^{(0)} \exp\parenth{- \sum_{t =0}^\tau \eta_t \frac{\ell^k_p(\mu^{(t)}) }{ \rho } } }_{p \in \Pcal_k} 
\end{equation}
\end{rem}

This follows from the update equation~\eqref{eq:hedge_update} and a simple induction on $\tau$. In particular, when $\eta_\tau = \gamma_\tau$, the term $\sum_{t = 0}^\tau \eta_t \ell^k_p(\mu^{(t)})$ coincides with the cumulative discounted loss ${\Lscr^k_p}^{(\tau)}$ defined in~\eqref{eq:cumulative_bundle_loss}. This motivates using the discount factors $\gamma_\tau$ as learning rates. We discuss this in the next proposition.

\begin{proposition}
\label{prop:hedge_bound}
Consider a congestion game with a sequence of discount factors $(\gamma_\tau)_{\tau \in \Nbb}$ satisfying Assumption~\ref{assumption:discounts}. Then the Hedge algorithm with learning rates $(\gamma_\tau)$ satisfies the following regret bound: for any sequence of distributions $(\mu^{(\tau)})_\tau$ and any initial strategy $\pi^{(0)}$,
\begin{equation*}
\Exp[{R}{^{(T)}}(x)] \leq -\rho \log \pi^{(0)}_{\min} + \frac{\rho}{8} \sum _{\tau = 0}^T \gamma_\tau^2,
\end{equation*}
where $\pi_{\min}^{(0)} = \min_{p \in \Pcal_k} \pi^{(0)}_p$. In particular, when $\frac{\sum_{\tau \leq T} \gamma_\tau^2}{\sum_{\tau \leq T} \gamma_\tau} \rightarrow 0$, the Hedge algorithm with rates $(\gamma_\tau)$ has sublinear discounted regret in expectation.
\end{proposition}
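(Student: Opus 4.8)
The plan is to use the standard log-partition (potential function) argument for exponential weights, with Hoeffding's lemma controlling the per-iteration increment. By Remark~\ref{rem:mul} applied with $\eta_\tau = \gamma_\tau$, the unnormalized weight on bundle $p$ after iteration $\tau$ is $\pi_p^{(0)} \exp(-{\Lscr_p^k}^{(\tau)}/\rho)$, so I would introduce the potential
\[
W^{(T)} = \sum_{p \in \Pcal_k} \pi_p^{(0)} \exp\parenth{-\frac{{\Lscr_p^k}^{(T)}}{\rho}},
\]
with the convention $W^{(-1)} = 1$ (empty sum of discounted losses). The key structural observation is that $\pi^{(\tau)}$ is precisely the normalization of these weights, namely $\pi_p^{(\tau)} = \pi_p^{(0)} \exp(-{\Lscr_p^k}^{(\tau-1)}/\rho) / W^{(\tau-1)}$.

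First I would establish a lower bound on $\log W^{(T)}$ by discarding all but the term of the best bundle $p^\star \in \arg\min_{p} {\Lscr_p^k}^{(T)}$, which gives $\log W^{(T)} \geq \log \pi_{\min}^{(0)} - \min_{p} {\Lscr_p^k}^{(T)}/\rho$. Next I would telescope $\log W^{(T)} = \sum_{\tau=0}^T \log\parenth{W^{(\tau)}/W^{(\tau-1)}}$. Using the recursion ${\Lscr_p^k}^{(\tau)} = {\Lscr_p^k}^{(\tau-1)} + \gamma_\tau \ell_p^k(\mu^{(\tau)})$ together with the normalization identity above, the ratio rewrites as the expectation $\Exp_{p \sim \pi^{(\tau)}}[\exp(-\gamma_\tau \ell_p^k(\mu^{(\tau)})/\rho)]$.

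The crux is to bound this increment. Since $\ell_p^k(\mu^{(\tau)})/\rho \in [0,1]$ by Proposition~\ref{prop:bound}, the random variable $-\gamma_\tau \ell_p^k(\mu^{(\tau)})/\rho$ takes values in an interval of length $\gamma_\tau$, so Hoeffding's lemma yields
\[
\log \frac{W^{(\tau)}}{W^{(\tau-1)}} \leq -\frac{\gamma_\tau}{\rho}\braket{\pi^{(\tau)}}{\ell^k(\mu^{(\tau)})} + \frac{\gamma_\tau^2}{8}.
\]
Summing over $\tau = 0, \dots, T$ and recognizing $\Exp[{L}{^{(T)}}(x)] = \sum_\tau \gamma_\tau \braket{\pi^{(\tau)}(x)}{\ell^k(\mu^{(\tau)})}$ produces an upper bound on $\log W^{(T)}$. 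Combining it with the lower bound, multiplying through by $\rho$, and recalling that $\Exp[{R}{^{(T)}}(x)] = \Exp[{L}{^{(T)}}(x)] - \min_{p} {\Lscr_p^k}^{(T)}$ (the minimum being deterministic) yields exactly the claimed inequality. I expect the Hoeffding step to be the main obstacle: one must check the lemma's hypotheses and, in particular, track that the relevant interval has length $\gamma_\tau$ and not $\gamma_\tau/\rho$, which is precisely why the loss is normalized by $\rho$ in the update~\eqref{eq:hedge_update}.

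Finally, for the sublinearity claim I would divide the regret bound by $\sum_{\tau = 0}^T \gamma_\tau$. The first term tends to $0$ because $\sum_\tau \gamma_\tau \to \infty$ by Assumption~\ref{assumption:discounts}, and the second tends to $0$ by the hypothesis ${\sum_{\tau \leq T}\gamma_\tau^2}/{\sum_{\tau \leq T}\gamma_\tau} \to 0$. Since this upper bound dominates $\frac{1}{\sum_\tau \gamma_\tau}[\Exp[{R}{^{(T)}}(x)]]^+$ and vanishes, the algorithm has sublinear discounted regret in expectation.
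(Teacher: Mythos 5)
Your proof is correct and takes essentially the same route as the paper's: your potential $W^{(T)}$ is exactly $\exp\parenth{\xi({\Lscr^k}^{(T)})}$ for the paper's log-partition function $\xi$, the per-iteration increment is bounded by the same application of Hoeffding's lemma to the loss normalized by $\rho$, and the lower bound via the best bundle is the same final step. The only (cosmetic) improvement is your convention $W^{(-1)} = 1$, which lets you telescope all $T+1$ increments uniformly, whereas the paper treats the $\tau = 0$ term separately.
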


\begin{proof}
Given an initial strategy $\pi^{(0)}$, define \inlinefunc{\xi}{\Rbb^{\Pcal_k}}{\Rbb}{u}{\log \parenth{ \sum_{p \in \Pcal_k} \pi^{(0)}_p \exp (- \frac{u_p}{\rho} )} }.
Recalling the expression of the cumulative bundle loss ${\Lscr^k_p}^{(\tau)} = \sum_{t = 0}^\tau \gamma_t \ell^k_p(\mu^{(t)})$, we have for all $\tau \geq 0$:
\begin{align*}
\xi ({\Lscr^k}^{(\tau +1)}) - \xi({\Lscr^k}^{(\tau)})
&= \log \left( \sum _{p \in \Pcal_k} \frac{ \pi^{(0)}_p\exp\parenth{-\frac{{\Lscr_p^k}^{(\tau)}}{\rho}}  }{ \sum _{p' \in \Pcal_k} \exp\parenth{- \frac{ {\Lscr_{p'}^k}^{(\tau)} }{ \rho } } } \exp\parenth{-\gamma_{\tau+1} \frac{\ell^k_p(\mu^{(\tau+1)}) }{ \rho } } \right) \\
&= \log \parenth{ \sum _{p \in \Pcal_k} \pi_p^{(\tau+1)} \exp\parenth{-\gamma_{\tau+1} \frac{ \ell^k_p(\mu^{(\tau+1)}) }{ \rho }} } \\
& \leq - \gamma_{\tau+1} \sum _{p \in \Pcal_k} \pi_p^{(\tau+1)} \frac{\ell^k_p(\mu^{(\tau+1)}) }{ \rho } + \frac{\gamma_{\tau+1}^2 }{ 8 }
\end{align*}
The last inequality follows from Hoeffding's lemma (see Appendix~\ref{app:hoeffding}), since $0 \leq \frac{\ell^k_p(\mu^{(\tau)}) }{ \rho } \leq 1$. Summing over $\tau \in \{0, \ldots, T-1 \}$, we have for all $p$:
\[
\xi({\Lscr^k}^{(T)}) - \xi({\Lscr^k}^{(0)}) \leq -\sum_{\tau = 1}^T \gamma_\tau \sum _{p \in \Pcal^k}  \pi_p^{(\tau)} \frac{ \ell^k_p(\mu^{(\tau)}) }{ \rho } + \frac{1}{8} \sum_{\tau = 1}^T \gamma_\tau^2
\]
But we also have
\al{
\xi({\Lscr^k}^{(0)}) 
&= \log \parenth{\sum_{p \in \Pcal_k} \pi^{(0)}_p \exp\parenth{ - \gamma_0 \frac{ \ell^k(\mu^{(0)} }{ \rho } } } 
\leq -\gamma_0 \sum_{p \in \Pcal_k} \pi_p^{(0)} \frac{\ell^k_p(\mu^{(0)})}{\rho} + \frac{\gamma_0^2}{8}
}
And as $\log$ is increasing, we have for all $p_0 \in \Pcal_k$, $\log(\pi_{p_0}^{(0)}\exp(-\frac{{\Lscr^k_{p_0}}^{(T)}}{\rho})) \leq \xi({\Lscr^k}^{(T)})$, thus
\[
- \frac{ {\Lscr_{p_0}^k}^{(\tau)} }{ \rho } + \log \pi^{(0)}_{p_0} \leq \xi({\Lscr^k}^{(T)}) \leq -\sum_{\tau = 0}^T \gamma_\tau \sum _{p \in \Pcal^k}  \pi_{p}^{(\tau)} \frac{ \ell^k_{p}(\mu^{(\tau)}) }{ \rho } + \frac{1}{8} \sum_{\tau = 0}^T \gamma_\tau^2
\]
Rearranging, we have for all $p \in \Pcal_k$
\begin{equation*}
\sum _{\tau =0}^T \gamma_\tau \sum_{p \in \Pcal_k}  \pi_{p}^{(\tau)} \ell^k_{p}(\mu^{(\tau)}) - {\Lscr_{p_0}^k}^{(T)} \leq -\frac{\rho}{8} \log \pi_{p_0}^{(0)} + \rho \sum _{\tau = 0}^T \gamma_\tau^2
\end{equation*}
and we obtain the desired inequality by maximizing both sides over $p_0 \in \Pcal_k$.
\end{proof}

The previous proposition provides an upper-bound on the expected regret of the Hedge algorithm, of the form
\[
\frac{\Exp\sqbr{R^{(T)}(x)} }{ \sum_{\tau \leq T} \gamma_\tau} \leq -\rho\pi^{(0)}_{\min} \frac{1}{\sum_{\tau \leq T} \gamma_\tau} + \frac{\rho}{8} \frac{ \sum_{\tau \leq T}\gamma_\tau^2 }{ \sum_{\tau \leq T}\gamma_\tau }
\]
Given Assumption~\ref{assumption:discounts} on the discount factors, we have $\lim_{T \rightarrow \infty} \frac{\sum_{\tau \leq T} \gamma_\tau^2}{\sum_{\tau \leq T} \gamma_\tau} = 0$ (see Fact~\ref{fact:gamma_squares} in the Appendix), which proves that the discounted regret is sub-linear. This also provides a bound on the convergence rate. For example, if $\gamma_\tau \sim \frac{1}{\tau}$, then the upper-bound is equivalent to $\frac{c}{\log T}$, converges to zero as $T \rightarrow \infty$, albeit slowly. A better bound can be obtained for sequences of discount factors which are not square-summable, for example, taking $\gamma_\tau \sim \frac{1}{\sqrt{\tau}}$, the upper-bound is equivalent to $\frac{c\log T}{T^{\frac{1}{2}}}$.

We now have one example of an online learning algorithm with sublinear discounted regret. Furthermore, we have an interpretation of the sequence $\gamma_\tau$ as learning rates, which provides additional intuition on Assumption~\ref{assumption:discounts} on $(\gamma_\tau)$: decreasing the learning rates will help the system converge. 

In the next section, we start our analysis of the population dynamics when all players apply a learning algorithm with sublinear discounted regret.


\section{Convergence in the \cesaro\ sense}
\label{sec:cesaro}
As discussed in Proposition~\ref{prop:population_regret_sublinear}, if almost every player applies an algorithm with sublinear discounted regret in expectation, then the population-wide discounted regret is sublinear (almost surely). We now show that whenever the population has sublinear discounted regret, the sequence of distributions $(\mu^{(\tau)})_\tau$ converges in the sense of \cesaro. That is, $\sum_{\tau \leq T} \gamma_\tau \mu^{(\tau)} / \sum_{\tau \leq T} \gamma_\tau$ converges to the set of Nash equilibria. We also show that we have convergence of a dense subsequence. First, we give some definitions.

\begin{definition}[Convergence in the sense of \cesaro]
Fix a sequence of positive weights $(\gamma_\tau)_{\tau \in \Nbb}$. A sequence $(u^{(\tau)})_{\tau \in \Nbb}$ of elements of a normed vector space $(F, \| \cdot \|)$ converges to $u \in F$ in the sense of \cesaro\ means with respect to $(\gamma_\tau)_\tau$ if
\begin{equation*}
\lim _{T \to \infty} \frac{\sum_{\tau \in \Nbb : \tau \leq T} \gamma_\tau u^{(\tau)}}{\sum_{\tau \in \Nbb : \tau \leq T} \gamma_\tau} = u.
\end{equation*}
We write $u^{(\tau)} \toCes u$.
\end{definition}

The Stolz-\cesaro\ theorem states that if $(u^{(\tau)})_\tau$ converges to $u$, then it converges in the sense of \cesaro\ means with respect to any non-summable sequence $(\gamma_\tau)_\tau$, see for example~\cite{muresan2009concrete}. The converse is not true in general. However, if a sequence converges \emph{absolutely} in the sense of \cesaro\ means, i.e. $\| u^{(\tau)} - u \| \toCes 0$,
then a dense subsequence of $(u^{(\tau)})_\tau$ converges to $u$. To show this, we first show that absolute \cesaro\ convergence implies statistical convergence, as defined below.

\begin{definition}[Statistical convergence] Fix a sequence of positive weights $(\gamma_\tau)_\tau$. A sequence $(u^{(\tau)})_{\tau \in \Nbb}$ of elements of a normed vector space $(F, \| \cdot \|)$ converges to $u \in F$ statistically with respect to $(\gamma_\tau)$ if for all $\epsilon > 0$, the set of indexes $\Ical_\epsilon = \{ \tau \in \Nbb \colon \| u^{(\tau)} - u \| \geq \epsilon \}$ has zero density with respect to $(\gamma_\tau)$. The density of a subset of integers $\Ical \subset \Nbb$, with respect to the sequence of positive weights $(\gamma_\tau)$, is defined to be the limit, if it exists
\begin{equation*}
\lim _{T \to \infty} \frac{\sum _{\tau \in \Ical : \tau \leq T} \gamma_\tau} {\sum _{\tau \in \Nbb : \tau \leq T} \gamma_\tau}.
\end{equation*}
\end{definition}

\begin{lemma} \label{lem:2}
If $(u^{(\tau)})_\tau$ converges to $u$ absolutely in the sense of \cesaro\ means with respect to $(\gamma_\tau)$, then it converges to $u$ statistically with respect to $(\gamma_\tau)$.
\end{lemma}
\begin{proof} Let $\epsilon > 0$. We have for all $T \in \Nbb$,
\begin{equation*}
0 \leq \frac{\sum_{\tau \in \Ical_\epsilon \colon \tau \leq T} \gamma_\tau \epsilon} {\sum _{\tau \in \Nbb \colon \tau \leq T} \gamma_\tau} \leq \frac{\sum _{\tau \in \Nbb : \tau \leq T} \gamma_\tau \| u^{(\tau)} - u \| }{\sum _{\tau \in \Nbb : \tau \leq T} \gamma_\tau}
\end{equation*}
which converges to $0$ since $(u^{(\tau)})_\tau$ converges to $u$ absolutely in the sense of \cesaro\ means. Therefore
 $\Ical_\epsilon$ has zero density for all $\epsilon$.
\end{proof}

We can now show convergence of a dense subsequence.

\begin{proposition}
\label{prop:cesaro_implies_dense_subseq}
If $(u^{(\tau)})_{\tau \in \Nbb}$ converges to $u$ absolutely in the sense of \cesaro\ means with respect to $(\gamma_\tau)$, then there exists a subset of indexes $\Tcal \subset \Nbb$ of density one, such that the subsequence $(u^{(\tau)})_{\tau \in \Tcal}$ converges to $u$.
\end{proposition}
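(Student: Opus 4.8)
The plan is to construct the dense subsequence explicitly by choosing a sequence of shrinking error tolerances and using the statistical convergence guaranteed by Lemma~\ref{lem:2}. First I would invoke Lemma~\ref{lem:2} to pass from absolute \cesaro\ convergence to statistical convergence, so that for every $\epsilon > 0$ the set $\Ical_\epsilon = \{\tau \in \Nbb \colon \|u^{(\tau)} - u\| \geq \epsilon\}$ has zero density with respect to $(\gamma_\tau)$. Equivalently, its complement $\Tcal_\epsilon = \Nbb \setminus \Ical_\epsilon = \{\tau \colon \|u^{(\tau)} - u\| < \epsilon\}$ has density one. The difficulty is that these are a continuum of sets, one for each $\epsilon$, whereas I need a single index set $\Tcal$ of density one along which convergence holds in the usual sense.

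The standard device is a diagonal-type construction. I would fix a decreasing sequence $\epsilon_j = 1/j \to 0$ and consider the nested sets $\Tcal_{\epsilon_1} \supseteq \Tcal_{\epsilon_2} \supseteq \cdots$ (nesting holds since $\|u^{(\tau)} - u\| < \epsilon_{j+1}$ implies $\|u^{(\tau)} - u\| < \epsilon_j$), each of density one. Because each $\Tcal_{\epsilon_j}$ has density one, I can pick a strictly increasing sequence of thresholds $T_1 < T_2 < \cdots$ such that for all $T \geq T_j$ the partial-weight ratio
\[
\frac{\sum_{\tau \in \Tcal_{\epsilon_j} \colon \tau \leq T} \gamma_\tau}{\sum_{\tau \in \Nbb \colon \tau \leq T} \gamma_\tau} \geq 1 - \frac{1}{j}.
\]
I then define $\Tcal$ by splicing together the tails: put $\tau \in \Tcal$ if and only if $\tau \in \Tcal_{\epsilon_j}$ where $j$ is chosen so that $T_j \leq \tau < T_{j+1}$. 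Concretely, $\Tcal = \bigcup_{j \geq 1} \left(\Tcal_{\epsilon_j} \cap [T_j, T_{j+1})\right)$. On each block $[T_j, T_{j+1})$ this set consists only of indices $\tau$ with $\|u^{(\tau)} - u\| < \epsilon_j = 1/j$, so along $\Tcal$ we have $\|u^{(\tau)} - u\| \to 0$, giving genuine convergence of the subsequence $(u^{(\tau)})_{\tau \in \Tcal}$ to $u$.

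It remains to verify that $\Tcal$ itself has density one, which I expect to be the main obstacle since $\Tcal$ is not one of the sets whose density we already control. I would bound the density of the complement $\Nbb \setminus \Tcal$ from above. For $T_j \leq T < T_{j+1}$, the indices $\tau \leq T$ that lie outside $\Tcal$ are, on each earlier block $[T_i, T_{i+1})$ with $i \leq j$, contained in $\Nbb \setminus \Tcal_{\epsilon_i} = \Ical_{\epsilon_i}$; since $\Ical_{\epsilon_i} \subseteq \Ical_{\epsilon_j}$ for $i \leq j$ by nesting, the excluded weight up to $T$ is at most $\sum_{\tau \in \Ical_{\epsilon_j} \colon \tau \leq T} \gamma_\tau$, whose ratio to $\sum_{\tau \leq T} \gamma_\tau$ is at most $1/j$ by the choice of $T_j$. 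Hence for $T \geq T_j$,
\[
\frac{\sum_{\tau \in \Nbb \setminus \Tcal \colon \tau \leq T} \gamma_\tau}{\sum_{\tau \in \Nbb \colon \tau \leq T} \gamma_\tau} \leq \frac{1}{j},
\]
and letting $T \to \infty$ (so that $j \to \infty$) shows the complement has zero density, i.e. $\Tcal$ has density one. The two facts together — density one and $\|u^{(\tau)} - u\| \to 0$ along $\Tcal$ — establish the proposition. The care required is entirely in the block-splicing bookkeeping and in ensuring the monotone nesting of the $\Ical_{\epsilon_j}$ is used correctly to control the accumulated excluded weight.
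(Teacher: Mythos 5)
Your proposal follows essentially the same argument as the paper's proof: invoke Lemma~\ref{lem:2}, choose thresholds $T_j$ beyond which the weighted density of $\Ical_{1/j}$ is at most $1/j$, splice the blocks $[T_j, T_{j+1})$, and use the nesting $\Ical_{1/i} \subseteq \Ical_{1/j}$ (for $i \leq j$) to bound the accumulated excluded weight --- the paper merely does the same bookkeeping with the bad set $\Ical = \bigcup_j \left( \Ical_{1/j} \cap \{T_j, \dots, T_{j+1}-1\} \right)$ and sets $\Tcal = \Nbb \setminus \Ical$. The only (trivial) slip is that your $\Tcal$ excludes the finite initial segment $\{0, \dots, T_1 - 1\}$, so your bound on the complement's weight should also include the fixed quantity $\sum_{\tau < T_1} \gamma_\tau$, which is harmless since $\sum_{\tau \leq T} \gamma_\tau \to \infty$; equivalently, just include that segment in $\Tcal$, as the paper implicitly does.
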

\begin{proof}
By Lemma~\ref{lem:2}, for all $\epsilon > 0$, the set $\Ical_\epsilon = \{ \tau \in \Nbb \colon \| u^{(\tau)} - u \| \geq \epsilon \}$ has zero density. We will construct a set $\Ical \subset \Nbb$ of zero density, such that the subsequence $(u_\tau)_{\tau \in \Nbb \setminus \Ical}$ converges. For all $k \in \Nbb^*$, let
\[
p_k(T) = \sum _{\tau \in \Ical_{\frac{1}{k}} \colon \tau \leq T} \gamma_\tau
\]
Since $\frac {p_{k}(T)}{\sum _{\tau \in \Nbb \colon \tau \leq T} \gamma_\tau}$ converges to $0$ as $T \to \infty$, there exists $T_k > 0$ such that for all $T \geq T_k$, $\frac {p_{k}(T)}{\sum _{\tau \in \Nbb \colon \tau \leq T} \gamma_\tau} \leq \frac{1}{k}$. Without loss of generality, we can assume that $(T_k)_{k \in \Nbb^*}$ is increasing. Now, let
\[
\Ical =\bigcup _{k \in \Nbb^*} ( \Ical_{\frac{1}{k}} \cap \{ T_{k}, \ldots, T_{k+1}-1 \} ).
\]
Then we have for all $k \in \Nbb^*$, $\Ical \cap \{0, \ldots, T_{k+1} - 1\} = \parenth{\cup_{j = 1}^{k} \Ical_{\frac{1}{j}} } \cap \{0, \ldots, T_{k+1}-1\}$. But since $\Ical_1 \subset \Ical_{\frac{1}{2}} \subset \dots \subset \Ical_{\frac{1}{k}}$, we have $\Ical \cap \{0, \dots, T_{k+1}-1\} \subset \Ical_{\frac{1}{k}} \cap \{0, \dots, T_{k+1}-1\}$, thus for all $T$ such that $T_k \leq T < T_{k+1}$, we have
\[
\frac{\sum _{\tau \in \Ical \colon \tau \leq T} \gamma_\tau}{\sum _{\tau \in \Nbb \colon \tau \leq T} \gamma_\tau} 
\leq \frac{\sum _{\tau \in \Ical_{\frac{1}{k}} \colon \tau \leq T} \gamma_\tau}{\sum _{\tau \in \Nbb \colon \tau \leq T} \gamma_\tau} 
= \frac{p_k(T)}{\sum _{\tau \in \Nbb \colon \tau \leq T} \gamma_\tau}  \leq \frac{1}{k}
\]
which proves that $\Ical$ has zero density.

Let $\Tcal = \Nbb \setminus \Ical$. We have that $\Tcal$ has density one, and it remains to prove that the subsequence $(u^{(\tau)})_{\tau \in \Tcal}$ converges to $u$. Since $\Tcal$ has density one, it has infinitely many elements, and for all $k$, there exists $S_k \in \Tcal$ such that $S_k \geq T_k$. For all $\tau \in \Tcal$ with $\tau \geq S_k$, there exists $k' \geq k$ such that $T_{k'} \leq \tau < T_{k'+1}$. Since $\tau \notin \Ical$ and $T_{k'} \leq \tau < T_{k'+1}$, we must have $\tau \notin \Ical_{\frac{1}{k'}}$, therefore
\[
\| u^{(\tau)} - u \| < \frac{1}{k'} \leq \frac{1}{k}.
\]
This proves that $(u^{(\tau)})_{\tau \in \Tcal}$ converges to $u$.
\end{proof}

We now present the main result of this section, which concerns the convergence of the sequence of population distributions $(\mu^{(\tau)})$ to the set $\Ncal$ of Nash equilibria. We say that $(\mu^{(\tau)})$ converges to $\Ncal$ if $d(\mu^{(\tau)}, \Ncal) \rightarrow 0$, where $d(\mu, \Ncal) = \inf_{\nu \in \Ncal} \|\mu - \nu\|$.

\begin{theorem} \label{thm:cesaro} Consider a congestion game with discount factors $(\gamma_\tau)_\tau$ satisfying Assumption~\ref{assumption:discounts}. Assume that for all $k \in \{1, \dots, K\}$, population $k$ has sublinear discounted regret. Then the sequence of distributions $(\mu^{(\tau)})_\tau$ converges to the set of Nash equilibria in the sense of \cesaro\ means with respect to $(\gamma_\tau)$. Furthermore, there exists a dense subsequence $(\mu_\tau)_{\tau \in \Tcal}$ which converges to $\Ncal$.
\end{theorem}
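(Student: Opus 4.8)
The plan is to reduce the convergence of the distributions to the convergence of the \emph{potential values} $V(\mu^{(\tau)})$ toward the minimum $V_\Ncal$, and then to exploit the fact (Theorem~\ref{thm:rosenthal}) that $\Ncal$ is exactly the set of minimizers of the convex, continuous function $V$ on the compact set $\Delta$. Write $\Gamma_T = \sum_{\tau = 0}^T \gamma_\tau$ and $\bar\mu^{(T)} = \Gamma_T^{-1}\sum_{\tau = 0}^T \gamma_\tau \mu^{(\tau)}$ for the \cesaro\ mean. The central claim I would establish first is that the potential gap vanishes in \cesaro\ mean, i.e.
\[
\frac{1}{\Gamma_T}\sum_{\tau = 0}^T \gamma_\tau\parenth{V(\mu^{(\tau)}) - V_\Ncal} \longrightarrow 0.
\]
Since each summand is nonnegative (as $V_\Ncal = \min_\Delta V$), this is automatically an \emph{absolute} \cesaro\ convergence of $V(\mu^{(\tau)})$ to $V_\Ncal$, which will feed directly into the dense-subsequence statement.

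To prove this claim I would link the potential gap to the population regret through convexity. Fix any $\nu \in \Ncal$, so that $V(\nu) = V_\Ncal$. Convexity of $V$ gives $V(\mu^{(\tau)}) - V_\Ncal \le \braket{\nabla V(\mu^{(\tau)})}{\mu^{(\tau)} - \nu}$; using the gradient identity~\eqref{eq:potential_derivative}, namely $\nabla_{\mu^k} V(\mu) = m(\Xcal_k)\ell^k(\mu)$, together with the definition of the average loss (Definition~\ref{def:avg_loss}), the right-hand side equals $\sum_k m(\Xcal_k)\parenth{\bar\ell^k(\mu^{(\tau)}) - \braket{\ell^k(\mu^{(\tau)})}{\nu^k}}$. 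Multiplying by $\gamma_\tau$, summing over $\tau \le T$, and recognizing $\sum_\tau \gamma_\tau \bar\ell^k(\mu^{(\tau)}) = {L^k}^{(T)}$ and $\sum_\tau \gamma_\tau \braket{\ell^k(\mu^{(\tau)})}{\nu^k} = \sum_{p \in \Pcal_k} \nu^k_p {\Lscr_p^k}^{(T)} \ge \min_{p \in \Pcal_k} {\Lscr_p^k}^{(T)}$ (the last step since $\nu^k \in \Delta^{\Pcal_k}$ is a probability vector), I obtain
\[
\sum_{\tau = 0}^T \gamma_\tau\parenth{V(\mu^{(\tau)}) - V_\Ncal} \le \sum_{k = 1}^K m(\Xcal_k)\, {R^k}^{(T)}.
\]
Dividing by $\Gamma_T$, the right-hand side is bounded above by $\sum_k m(\Xcal_k)\, \Gamma_T^{-1}\sqbr{{R^k}^{(T)}}^+$, which tends to $0$ by the sublinear discounted regret hypothesis; since the left-hand side is nonnegative, the claim follows by squeezing.

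Finally I would convert value-convergence into set-convergence with a compactness lemma: for every $\epsilon > 0$ the set $\set{\mu \in \Delta : d(\mu,\Ncal) \ge \epsilon}$ is compact and contains no minimizer of $V$, so the continuous function $V$ attains on it a value strictly larger than $V_\Ncal$; hence there is $\delta(\epsilon) > 0$ with $V(\mu) - V_\Ncal < \delta(\epsilon) \Rightarrow d(\mu,\Ncal) < \epsilon$ (the implication being vacuous should the set be empty). For the \cesaro\ statement, Jensen's inequality applied to the convex $V$ and the convex combination $\bar\mu^{(T)} \in \Delta$ gives $V_\Ncal \le V(\bar\mu^{(T)}) \le \Gamma_T^{-1}\sum_\tau \gamma_\tau V(\mu^{(\tau)}) \to V_\Ncal$, so $V(\bar\mu^{(T)}) \to V_\Ncal$ and the lemma yields $d(\bar\mu^{(T)},\Ncal) \to 0$. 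For the dense subsequence, the absolute \cesaro\ convergence of $V(\mu^{(\tau)})$ to $V_\Ncal$ lets me invoke Proposition~\ref{prop:cesaro_implies_dense_subseq} (in the space $\Rbb$) to extract a density-one set $\Tcal$ along which $V(\mu^{(\tau)}) \to V_\Ncal$, and the lemma again gives $d(\mu^{(\tau)},\Ncal) \to 0$ on $\Tcal$. The main obstacle is the first step, the convexity argument bounding the potential gap by the regret, since it is precisely what ties the game-theoretic no-regret guarantee to the optimization-theoretic potential; the compactness lemma and its two applications are then routine.
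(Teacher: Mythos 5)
Your proposal is correct and follows essentially the same route as the paper's proof: the same convexity-plus-gradient bound $\sum_\tau \gamma_\tau\parenth{V(\mu^{(\tau)}) - V_\Ncal} \le \sum_k m(\Xcal_k){R^k}^{(T)}$, the same application of Proposition~\ref{prop:cesaro_implies_dense_subseq} to the real sequence $V(\mu^{(\tau)})$, the same Jensen step for the \cesaro\ mean, and a compactness argument (your quantitative $\delta(\epsilon)$ lemma) that is equivalent to the paper's Lemma~\ref{lem:continuity}. No gaps to report.
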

\begin{proof} 
First, we observe the following fact:
\begin{lemma}
\label{lem:continuity}
A sequence $(\nu^{(\tau)})$ in $\Delta$ converges to $\Ncal$ only if $(V(\nu^{(\tau)}))$ converges to $V_\Ncal$, the value of $V$ on $\Ncal$.
\end{lemma}

Indeed, suppose by contradiction that $V(\nu^{(\tau)}) \to V_\Ncal$ but $\nu^{(\tau)} \not\to \Ncal$. Then there would exist $\epsilon > 0$ and a subsequence $(\nu^{(\tau)})_{\tau \in \Tcal}$, $\Tcal \subset \Nbb$ such that $d(\nu^{(\tau)}, \Ncal) \geq \epsilon$ for all $\tau \in \Tcal$. Since~$\Delta$ is compact, we can extract a further subsequence $(\nu^{(\tau)})_{\tau \in \Tcal'}$ which converges to some $\nu \notin \Ncal$. But by continuity of $V$, $(V(\nu^{(\tau)}))_{\tau \in \Tcal'}$ converges to $V(\nu) > V_\Ncal$, a contradiction.

Consider the potential function $V$ defined in equation~\eqref{eq:pot_lat}. 
By convexity of $V$ and the expression~\eqref{eq:potential_derivative} of its gradient, we have for all $\tau$ and for all $\mu \in \Delta$:
\begin{equation*}
V(\mu^{(\tau)}) - V(\mu) \leq \braket{\nabla V(\mu^{(\tau)}) }{\mu^{(\tau)} - \mu} = \sum _{k = 1}^K m(\Xcal_k) \braket{\ell^k(\mu^{(\tau)}) }{ {\mu_p^k}^{(\tau)} - \mu^k_p}
\end{equation*}
then taking the time-weighted sum up to iteration $T$,
\begin{align*}
\sum _{\tau = 0}^T \gamma_\tau (V(\mu^{(\tau)}) - V(\mu))
& \leq \sum_{k=1}^K m(\Xcal_k) \left[ \sum_{\tau = 0}^T \gamma_\tau \braket{{\mu^k}^{(\tau)} }{ \ell^k(\mu^{(\tau)}) } - \braket{\mu^k }{ \sum_{\tau = 0}^T \gamma_\tau \ell^k(\mu^{(\tau)}) }\right] \\
&= \sum_{k=1}^K m(\Xcal_k) \left[ {L^k}^{(T)} - \braket{\mu^k }{ {\Lscr^k}^{(T)} }\right] \\
& \leq \sum_{k = 1}^K m(\Xcal_k) {R^k}^{(T)}
\end{align*}
where for the last inequality, we use the fact that $\braket{\mu^k}{{\Lscr^k}^{(T)}} \geq \min_{p \in \Pcal_k} {\Lscr^k_p}^{(T)}$. In particular, when $\mu$ is a Nash equilibrium, by Theorem~\ref{thm:rosenthal}, $V(\mu) = \min_{\mu \in \Delta} V(\mu) = V_\Ncal$, thus
\[
\frac{\sum_{\tau = 0}^T \gamma_\tau |V(\mu^{(\tau)}) - V_\Ncal|}{\sum_{\tau = 0}^T \gamma_\tau} \leq \sum_{k = 1}^K m(\Xcal_k)\frac{{R^k}^{(T)}}{\sum_{\tau = 0}^T \gamma_\tau }
\]
Since the population-wide regret ${R^k}^{(T)}$ is assumed to be sublinear for all $k$, we have $|V(\mu^{(\tau)}) - V_\Ncal| \toCes 0$. By Proposition~\ref{prop:cesaro_implies_dense_subseq}, there exists $\Tcal \subset \Nbb$ of density one, such that $(V(\mu^{(\tau)}))_{\tau \in \Tcal}$ converges to $V_{\Ncal}$. And it follows that $(\mu^{(\tau)})_{\tau \in \Tcal}$ converges to $\Ncal$. This proves the second part of the theorem. To prove the first part, we observe that, by convexity of $V$,
\[
V_{\Ncal} 
\leq V \parenth{ \frac{\sum _{\tau = 0}^T \gamma_\tau \mu^{(\tau)} }{\sum_{\tau = 0}^T \gamma_\tau}  } 
\leq \frac{\sum_{\tau = 0}^T \gamma_\tau V(\mu^{(\tau)}) } {\sum_{\tau = 0}^T \gamma_\tau}
=V_{\Ncal} + \frac{\sum_{\tau = 0}^T \gamma_\tau (V(\mu^{(\tau)}) - V_{\Ncal}) } {\sum_{\tau = 0}^T \gamma_\tau}
\]
and the upper bound converges to $V_\Ncal$. Therefore $\parenth{ \frac{\sum_{\tau \leq T} \gamma_\tau \mu^{(\tau)}}{\sum_{\tau\leq T} \gamma_\tau } }_{T \in \Nbb}$ converges to $\Ncal$.
\end{proof}

To conclude this section, we observe that the \cesaro\ convergence result of Theorem~\ref{thm:cesaro} can be generalized to any game with a convex potential function. 


\section{Continuous-time dynamics}
\label{sec:replicator}
We now turn to the harder question of convergence of $(\mu^{(\tau)})_\tau$: we seek to derive sufficient conditions under which the sequence $(\mu^{(\tau)})$ converges to $\Ncal$. In this section, we study a continuous-time limit of the update equation given by the Hedge algorithm. The resulting ODE, known as the replicator equation, will be useful in proving strong convergence results in the next section.

\subsection{The Replicator dynamics}
\label{subsec:replicator_motivation}
To motivate the study of the replicator dynamics from an online learning point of view, we first derive the continuous-time replicator dynamics as a limit of the discrete Hedge dynamics, as discussed below. Assume that in each population $\Xcal_k$, all players start from the same initial distribution ${\pi^k}^{(0)} \in \Delta^{\Pcal_k}$, and apply the Hedge algorithm with learning rates $(\gamma_\tau)$. As a result, the sequence of distributions $({\mu^k}^{(\tau)})$ satisfies the Hedge update rule~\eqref{eq:hedge_update}. Now suppose the existence of an underlying continuous time $t \in \Rbb^+$, and write $\mucont(t)$ the distribution at time $t$. Suppose that the updates occur at discrete times $T_{\tau}$, $\tau \in \Nbb$, such that the time steps are given by a decreasing, vanishing sequence $\epsilon_\tau$. That is, $T_{\tau+1} - T_\tau = \epsilon_\tau$. Then we have for all $k$ and all $p \in \Pcal_k$, using Landau notation:
\begin{align*}
\mucont^k_p(T_{\tau+1}) &= {\mu^k_p}^{(\tau+1)} \\
&= {\mu_p^k}^{(\tau)} \frac{ e^{-\gamma_\tau \frac{\ell^k_p(\mu^{(\tau)})}{\rho} } }{\sum _{p' \in \Pcal_k} {\mu_{p'}^k}^{(\tau)} e^{-\gamma_\tau \frac{ \ell^k_{p'}(\mu^{(\tau)}) }{ \rho } } } \\
& = {\mu^k_p}^{(\tau)} \frac{ 1- \gamma_\tau \frac{ \ell^k_p( \mu^{(\tau)} ) }{ \rho } + o(\gamma_\tau)  }{1 - \gamma_\tau \sum _{p' \in \Pcal_k} {\mu^k_{p'}}^{(\tau)} \frac{\ell^k_{p'}(\mu^{(\tau)}) }{ \rho } + o(\gamma_\tau)} \\
& = \mucont^k_p(T_\tau) \left[ 1 + \gamma_\tau \frac{ \bar{\ell}^k(\mu^{(\tau)}) - \ell^k_p(\mu^{(\tau)}) }{ \rho }\right] + o(\gamma_\tau)
\end{align*}
Thus,
\begin{equation*}
\frac{ \mucont^k_p(T_{\tau+1}) - \mucont^k_p(T_\tau) } {T_{\tau+1} - T_\tau} \frac{\epsilon_\tau}{\gamma_\tau} 
= \mucont^k_p(T_\tau) \frac{ \bar{\ell}^k(\mu(\tau)) - \ell^k_p(\mu(\tau)) }{ \rho } + o (1)
\end{equation*}

In particular, if we take the discretization time steps $\epsilon_\tau$ to be equal to the sequence of learning rate $\gamma_\tau$, the expression simplifies, and taking the limit as $\gamma_\tau \rightarrow 0$, we obtain the following ODE system:

\begin{equation}
\label{eq:replicator}
\begin{cases}
\mucont(0) \in \mathring{\Simp}
\\ \forall k, \ \forall p \in \Pcal_k, \frac{d \mucont^k_p(t)}{dt} = \mucont^k_p(t) \frac{\bar{\ell}^k(\mucont(t)) - \ell^k_p(\mucont(t)) }{ \rho } 
\end{cases}
\end{equation}
where $ \mathring{\Simp} = \{ \mu \in \Simp \colon \forall k, \ \forall p \in \Pcal_k, \mu^k_p > 0 \}$ is the relative interior of $\Delta$. Here, we require that the initial distribution have positive weights on all bundles for the following reason: whenever $\mucont^k_p(0) = 0$, any solution trajectory will have $\mucont^k_p (t) \equiv 0$. It is impossible for such trajectories to converge to the set of Nash equilibria $\Ncal$ if the support of equilibria in $\Ncal$ contains $p$. In other words, the replicator dynamics cannot expand the support of the initial distribution, therefore we require that the initial distribution be supported everywhere.

Equation~\eqref{eq:replicator} defines a vector field $F : \Delta \rightarrow \Hcal$, where $\Hcal$ is the product $\Hcal = \Hcal^{\Pcal_1} \times \dots \times \Hcal^{\Pcal_K}$, and
\[
\Hcal^{\Pcal_k} = \left\{v \in \Rbb^{\Pcal_k} : \sum_{p \in \Pcal} v_p = 0 \right\}
\]
is the linear hyperplane parallel to the simplex $\Delta^{\Pcal_k}$. Indeed, we have for all $\mu \in \Delta$ and for all $k$,
\al{
\sum_{p \in \Pcal_k} F^k_p(\mu) 
= \sum_{p \in \Pcal_k} \ell^k_p(\mu) \mu^k_p -  \bar{\ell}^k(\mu) \sum_{p \in \Pcal_k} \mu^k_p = 0.
}
The following proposition ensures that the solutions remain in the relative interior and are defined on all times.

\begin{proposition}
\label{prop:ODE_solution}
The ODE~\eqref{eq:replicator} has a unique solution $\mu(t)$ which remains in $\mathring{\Delta}$ and is defined on $\Rbb_+$.
\end{proposition}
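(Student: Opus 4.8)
The statement has three parts: existence/uniqueness of a solution, invariance of the relative interior $\mathring{\Delta}$, and global existence on $\Rbb_+$. The plan is to dispatch existence and local uniqueness by verifying that the vector field $F$ is locally Lipschitz, then establish interior-invariance by exploiting the multiplicative structure of the replicator dynamics (each $\mucont^k_p$ obeys a linear-in-$\mucont^k_p$ equation, so coordinates cannot cross zero), and finally leverage invariance plus compactness of $\Delta$ to extend the solution to all of $\Rbb_+$.

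First I would address local existence and uniqueness. The right-hand side of~\eqref{eq:replicator} is $F^k_p(\mucont) = \mucont^k_p \,(\bar{\ell}^k(\mucont) - \ell^k_p(\mucont))/\rho$. Since the $\ell^k_p$ are affine-in-$c$ compositions $M^\top c(\bar M \mu)$ with $c$ Lipschitz by Assumption~\ref{ass:congestion_functions}, each $\ell^k_p$ is Lipschitz on the compact set $\Delta$, hence so is $\bar{\ell}^k$; the products with the bounded factor $\mucont^k_p$ keep $F$ locally Lipschitz on an open neighborhood of $\Delta$. The Picard--Lindel\"of theorem then yields a unique maximal solution on some interval $[0, T_{\max})$. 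I would also note, as observed in the text preceding the proposition, that $\sum_{p \in \Pcal_k} F^k_p(\mucont) = 0$, so $\sum_{p} \mucont^k_p(t)$ is constant, equal to $1$ by the initial condition; thus the solution stays on the affine hull of $\Delta$ for as long as it exists.

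Next I would prove invariance of $\mathring{\Delta}$, which is the crux. The key is that~\eqref{eq:replicator} can be read, for each fixed $(k,p)$, as a scalar linear ODE $\dot{\mucont}^k_p(t) = g^k_p(t)\, \mucont^k_p(t)$ with time-dependent coefficient $g^k_p(t) = (\bar{\ell}^k(\mucont(t)) - \ell^k_p(\mucont(t)))/\rho$, which along the trajectory is a bounded measurable (indeed continuous) function of $t$. Integrating gives the closed form
\[
\mucont^k_p(t) = \mucont^k_p(0)\,\exp\!\parenth{\int_0^t g^k_p(s)\,ds},
\]
so each coordinate keeps the sign of its initial value. Since $\mucont^k_p(0) > 0$ for all $k,p$, every coordinate stays strictly positive, and combined with $\sum_p \mucont^k_p(t) = 1$ this shows the trajectory remains in $\mathring{\Delta}$ on its whole interval of existence. (Uniqueness is what guarantees that this integral representation is consistent and that no coordinate spuriously vanishes in finite time.)

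Finally I would conclude global existence. The standard escape-time criterion states that if $T_{\max} < \infty$ then the solution must leave every compact subset of the domain as $t \to T_{\max}$. But the trajectory remains in $\Delta$, which is compact, and $F$ is bounded there; so $\mucont(t)$ cannot escape to infinity and stays in a fixed compact set, forcing $T_{\max} = \infty$. I expect the interior-invariance step to be the main obstacle, not because it is deep but because it requires the right viewpoint: one must resist treating $\mathring{\Delta}$ as an open domain on which blow-up toward the boundary could occur, and instead use the exponential representation above to rule out boundary contact outright. The affine-constraint observation ($\sum_p \mucont^k_p = 1$) is what closes the gap between "coordinates positive" and "trajectory in $\mathring{\Delta}$."
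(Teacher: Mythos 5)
Your proposal is correct, and it takes a genuinely different route from the paper on the crux step (invariance of $\mathring{\Delta}$). The paper proves positivity of each coordinate by contradiction: it takes $t_0$ to be the first time some coordinate $\mucont^k_{p_0}$ vanishes, constructs an auxiliary system in which that coordinate is frozen at zero, and invokes uniqueness (backward in time through the point $\mucont(t_0)$) to conclude the two trajectories coincide, contradicting $\mucont^k_{p_0}(t) > 0$ for $t < t_0$. You instead observe that along the (already existing) trajectory each coordinate solves the scalar linear equation $\dot{\mucont}^k_p = g^k_p(t)\,\mucont^k_p$ with $g^k_p$ continuous and bounded, whence the integrating-factor identity
\[
\mucont^k_p(t) = \mucont^k_p(0)\exp\parenth{\int_0^t g^k_p(s)\,ds} > 0 .
\]
Your approach is more elementary and quantitative: it gives an explicit positive lower bound on each coordinate over any compact time interval, and it sidesteps the slightly delicate backward-in-time construction of the paper's auxiliary system. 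One small correction: your parenthetical claim that uniqueness of the original system is needed for the integral representation is not right --- the representation follows from the product rule alone, since $\frac{d}{dt}\sqbr{\mucont^k_p(t)\,e^{-\int_0^t g^k_p(s)ds}} = 0$; this actually makes your argument even more self-contained than you suggest. The remaining steps (Picard--Lindel\"of for existence/uniqueness, conservation of $\sum_p \mucont^k_p$, and the escape-time/compactness argument for $T_{\max} = \infty$) match the paper's proof, which cites Cauchy--Lipschitz and Theorem~2.4 of~\cite{khalil1992nonlinear} for the same purposes.
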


\begin{proof}
First, since the congestion functions $c_r$ are assumed to be Lipschitz continuous, so is the vector field $F$. We thus have existence and uniqueness of a solution by the Cauchy-Lipschitz theorem.

To show that the solution remains in the relative interior of $\Delta$, we observe that for all $k$, 
$
\frac{d}{dt} \sum _{p \in \Pcal_k} \mucont^k_p(t) = \sum _{p \in \Pcal_k} F^k_p(\mucont(t)) = 0
$ by the previous observation. Therefore, $\sum_{p \in \Pcal_k} \mucont^k_p(t)$ is constant and equal to $1$. To show that $\mucont^k_p(t) > 0$ for all $t$ in the solution domain, assume by contradiction that there exists $t_0 > 0$ and $p_0 \in \Pcal_k$ such that $\mucont^k_{p_0}(t_0) = 0$. Since the solution trajectories are continuous, we can assume, without loss of generality, that $t_0$ is the infimum of all such times (thus for all $t < t_0$, $\mucont_{p_0}(t) > 0$). Now consider the new system given by
\[
\begin{aligned}
&\dot{\tilde{\mucont}}_p = \frac{1}{\rho}(\bar{\ell} (\tilde{\mucont})  - \ell_p(\tilde{\mucont})) \tilde{\mucont}_p & \forall p \neq p_0 \\
&\tilde{\mucont}_p(t_0) = \mucont_p(t_0) & \forall p \neq p_0
\end{aligned}
\]
and $\tilde{\mucont}_{p_0}(t)$ is identically equal to $0$. Any solution of the new system, defined on $(t_0 - \delta, t_0]$, is also a solution of equation~\eqref{eq:replicator}. Since $\mucont(t_0) = \tilde{\mucont}(t_0)$, we have $\mucont \equiv \tilde{\mucont}$ by uniqueness of the solution. This leads to a contradiction since by assumption, for all $t < t_0$, $\mucont_p(t) > 0$ but $\tilde{\mucont}_p(t) = 0$.

This proves that $\mucont$ remains in $\mathring{\Delta}$. Furthermore, since $\Delta$ is compact, we have by Theorem~2.4 in \cite{khalil1992nonlinear} that the solution is defined on $\Rbb_+$ (otherwise it would eventually leave any compact set).
\end{proof}

Equation~\eqref{eq:replicator} is also studied in Evolutionary Game Theory and is referred to as the \emph{replicator dynamics} (see~\cite{weibull1997evolutionary} for example). It arises from the following model: for all time $t$, players of population $\Xcal_k$ are matched in random pairs, and each pair compares their losses. If the two players have strategies $p, p' \in \Pcal_k$, the player with higher loss imitates the strategy of the other player with probability proportional to the difference in losses (hence the name replicator), that is, if $\ell^k_p(\mucont) > \ell^k_{p'}(\mucont)$, then the first player switches to bundle $p'$ with probability $\frac{\ell^k_p(\mucont) - \ell^k_{p'}(\mucont)}{\rho}$. Under this model, we have
\al{
\frac{d\mucont^k_p}{dt} 
&= \mucont^k_p \parenth{
-\sum_{\substack{p' \in \Pcal_k : \\ \ell^k_p(\mucont) > \ell^k_{p'}(\mucont)}} \mucont^k_{p'} \frac{\ell^k_p(\mucont) - \ell^k_{p'}(\mucont)}{\rho}
+\sum_{\substack{p' \in \Pcal_k : \\ \ell^k_p(\mucont) \leq \ell^k_{p'}(\mucont)}} \mucont^k_{p'} \frac{\ell^k_{p'}(\mucont) - \ell^k_{p}(\mucont)}{\rho}
} \\
&= \mucont^k_p\sum_{p' \in \Pcal_k} \mucont^k_{p'} \frac{\ell^k_{p'}(\mucont) - \ell^k_{p}(\mucont)}{\rho} \\
&= \mucont^k_p\frac{\bar{\ell}^k(\mucont) - \ell^k_{p}(\mucont)}{\rho}
}
which results in the same ODE~\eqref{eq:replicator}.


\subsection{Stationary points of the replicator dynamics}
We first give a characterization of stationary points of the replicator dynamics applied to the congestion game.
\begin{proposition}
A product distribution $\mu$ is a stationary point for the replicator dynamics~\eqref{eq:replicator} if and only if the bundle losses $\ell^k_p(\mu)$ are equal on the support of $\mu^k$.
\end{proposition}

This follows immediately from equation~\eqref{eq:replicator}. We observe in particular that all Nash equilibria are stationary points, but a stationary point may not be a Nash equilibrium in general: one may have a stationary point $\mu$ such that $\mu^k_p = 0$ but $\ell^k_p(\mu)$ is strictly lower than losses of bundles in the support, which violates the condition in Definition~\ref{def:nash} of a Nash equilibrium.

A stationary point $\mu$ with support $\Pcal_1'\times \dots \times \Pcal_K'$ can be viewed as a Nash equilibrium of a modified congestion game, in which the bundle set of each population $\Xcal_k$ is restricted to $\Pcal_k'$. For this reason, stationary points have been called \emph{restricted Nash equilibria} by Fischer and V\"{o}cking in~\cite{fischer2004evolution}. We will denote the set of stationary points by $\Rcal\Ncal$, in reference to the aforementioned paper.

\begin{rem}
\label{rem:finite_res_set}
By the previous observation, a stationary point with support $\Pcal_1'\times \dots \times \Pcal_K'$ is a minimizer of the potential function $V$ on the product $\Delta^{\Pcal_1'} \times \dots \times \Delta^{\Pcal_K'}$. As the number of support sets is finite, the set of potential values of stationary points $V( \Rcal\Ncal)$ is also finite.
\end{rem}

\subsection{Convergence of the replicator dynamics}
\label{subsec:convergence_to_stationary}
In~\cite{fischer2004evolution}, Fischer and V\"{o}cking prove, using a Lyapunov argument, that all solution trajectories of the replicator system asymptotically approach the set of stationary points $\Rcal\Ncal$. Unfortunately, this result only guarantees convergence to a superset of Nash equilibria. However, this will be useful in the next section, and we present a proof for completeness.

\begin{proposition}[Fischer and V\"{o}cling, \cite{fischer2004evolution}]
\label{prop:convergence_to_stationary}
Every solution of the system~\eqref{eq:replicator} converges to the set of stationary points $\Rcal\Ncal$.
\end{proposition}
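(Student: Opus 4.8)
The plan is to use the Rosenthal potential $V$ of~\eqref{eq:pot_lat} as a Lyapunov function for the replicator system~\eqref{eq:replicator}, exploiting the gradient identity~\eqref{eq:potential_derivative}, namely $\partial V / \partial \mu^k_p = m(\Xcal_k)\ell^k_p(\mu)$. First I would differentiate $V$ along a solution trajectory $\mucont(t)$. By the chain rule and the expression of the vector field,
\[
\frac{d}{dt} V(\mucont(t)) = \sum_{k=1}^K \frac{m(\Xcal_k)}{\rho} \sum_{p \in \Pcal_k} \mucont^k_p \, \ell^k_p(\mucont) \parenth{\bar{\ell}^k(\mucont) - \ell^k_p(\mucont)}.
\]
Since $\bar{\ell}^k(\mucont) = \sum_{p} \mucont^k_p \ell^k_p(\mucont)$ is exactly the $\mucont^k$-average of the losses (Definition~\ref{def:avg_loss}), the inner sum collapses to $\parenth{\bar{\ell}^k}^2 - \sum_p \mucont^k_p \parenth{\ell^k_p}^2 = -\var_{\mucont^k}[\ell^k(\mucont)]$, the variance of the loss vector under $\mucont^k$. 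Hence
\[
\frac{d}{dt} V(\mucont(t)) = -\sum_{k=1}^K \frac{m(\Xcal_k)}{\rho}\var_{\mucont^k}[\ell^k(\mucont)] \leq 0,
\]
so $V$ is non-increasing along trajectories, and the derivative vanishes precisely when, for every $k$, the losses $\ell^k_p(\mucont)$ are constant on the support of $\mucont^k$ --- which by the characterization of stationary points given above is exactly the defining condition of $\Rcal\Ncal$.

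From here I would finish with a LaSalle/Barbalat-type argument. Since $V \geq 0$ is bounded below and non-increasing, $V(\mucont(t))$ converges to a limit, and integrating the identity above gives
\[
\int_0^\infty \sum_{k=1}^K \frac{m(\Xcal_k)}{\rho}\var_{\mucont^k}[\ell^k(\mucont(t))] \, dt < \infty.
\]
Writing $g(\mu) = \sum_{k} \frac{m(\Xcal_k)}{\rho}\var_{\mu^k}[\ell^k(\mu)]$, the map $t \mapsto g(\mucont(t))$ is uniformly continuous because $\mucont$ remains in the compact set $\Delta$ (Proposition~\ref{prop:ODE_solution}) and $\dot{\mucont}$ is consequently bounded; Barbalat's lemma then yields $g(\mucont(t)) \to 0$. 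Finally, $g$ is continuous, non-negative, and $g^{-1}(0) = \Rcal\Ncal$, so a compactness argument upgrades this to convergence of the trajectory to the set: any subsequential limit $\mu^\star$ of $\mucont(t)$ satisfies $g(\mu^\star) = 0$ by continuity, hence $\mu^\star \in \Rcal\Ncal$, which forces $d(\mucont(t), \Rcal\Ncal) \to 0$.

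The derivative computation is routine once one recognizes the variance structure; the step requiring the most care is the passage from the integral bound on $\dot V$ to genuine convergence of the trajectory, where uniform continuity (or, equivalently, invariance of the $\omega$-limit set in a direct LaSalle formulation) must be invoked. The main conceptual subtlety is that $\Rcal\Ncal$ is a \emph{set} rather than a single point --- it may contain stationary distributions of smaller support lying on the boundary of $\Delta$ --- so convergence is only to the set, and one should not expect a single limit point without additional hypotheses. This is exactly why this proposition yields convergence to the superset $\Rcal\Ncal \supseteq \Ncal$ rather than to $\Ncal$ itself, motivating the stronger conditions imposed in the next section.
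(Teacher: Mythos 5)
Your proof is correct, and its core coincides with the paper's: both use the Rosenthal potential $V$ as a Lyapunov function, compute $\frac{d}{dt}V(\mucont(t))$ via the gradient identity~\eqref{eq:potential_derivative}, and arrive at the same expression — your ``negative variance'' form is exactly the paper's Jensen-inequality step, with the same equality case ($\ell^k$ constant on the support of $\mucont^k$, i.e.\ $\mucont \in \Rcal\Ncal$). Where you diverge is the closing argument. The paper invokes the LaSalle--Krasovskii invariance principle (citing Khalil), which requires the additional observation that the largest invariant subset of $\{\dot V = 0\} = \Rcal\Ncal$ is $\Rcal\Ncal$ itself (immediate, since stationary points form an invariant set). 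You instead integrate $\dot V$ to get $\int_0^\infty g(\mucont(t))\,dt < \infty$ with $g = -\dot V \geq 0$, apply Barbalat's lemma (justified by uniform continuity, which follows from compactness of $\Delta$ and boundedness of the vector field there), and finish with a subsequence--compactness argument using $g^{-1}(0) = \Rcal\Ncal$. Both closings are valid and standard; yours is more self-contained in that it never needs the notion of an invariant set or a cited invariance theorem, at the cost of explicitly verifying uniform continuity and running the compactness argument, which LaSalle packages into a single citation. Your final remark — that convergence is only to the set $\Rcal\Ncal \supseteq \Ncal$ and not to $\Ncal$ or to a single point — is also exactly the caveat the paper makes, and correctly identifies why the stronger machinery of Section~\ref{sec:conv} is needed.
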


\begin{proof} Consider the potential function $V$ defined by equation~\eqref{eq:pot_lat}. The function $V$ is continuously differentiable and its derivative along the vector field of the ODE is given by:
\begin{equation*}
\begin{aligned}
\dot{V}(\mu) 
&= \braket{\nabla V(\mu)}{F(\mu)} \\
&= \sum _{k=1} ^K m(\Xcal_k) \sum _{p \in \Pcal_k} \ell^k_p( \mu ) \mu^k_p \left(  \sum_{p' \in \Pcal_k} \mu^k_{p'} \ell^k_{p'}(\mu) -\ell^k_p(\mu) \right) \\
& = \sum _{k=1} ^K m(\Xcal_k) \left[ \left(\sum _{p \in \Pcal_k} \mu^k_p \ell^k_p(\mu) \right)^2 - \sum _{p \in \Pcal_k} \mu^k_p \ell^k_p(\mu)^2 \right]
\end{aligned}
\end{equation*}
By Jensen's inequality, $\dot{V}(\mu) \leq 0$, with equality if and only if $\mu \in \Rcal\Ncal$. Therefore $V$ is defined on the compact set $\Simp$, and is decreasing along the vector field $F$. By the LaSalle-Krasovskii invariance principle, $\mucont(t)$ approaches the largest invariant set contained in the set where $\dot{V}$ vanishes, $\{\mu \in \Delta: \dot{V}(\mu) = 0\} = \Rcal\Ncal$ (for example by Theorem 4.4 in \cite{khalil1992nonlinear}). But since $\Rcal\Ncal$ itself is an invariant set, $\mucont(t)$ approaches $\Rcal\Ncal$. \end{proof}

\subsection{A discrete-time replicator equation: the REP update rule}

Inspired by the continuous-time replicator dynamics, we propose a discrete-time multiplicative update rule by discretizing the ODE~\eqref{eq:replicator}. The resulting algorithm has many desirable properties such as sublinear discounted regret and simplicity of implementation. We call it the REP algorithm in reference to the replicator ODE.

The vector field $F$ can be written in the following form: for all $k$, $F^k(\mucont) = G^k(\mucont, \ell(\mucont))$ where for all $p$,
\[
G^k_p(\mucont, \ell) = \mucont^k_p \frac{\braket{\mucont^k}{\ell^k} - \ell^k_p}{\rho}
\]
This motivates the following update rule for a player $x \in \Xcal_k$ with distribution $\pi^{(\tau)}(x)$:
\[
\pi^{(\tau +1)}(x) = \pi^{(\tau)}(x) + \eta_\tau G^k(\pi^{(\tau)}(x), \ell(\mu^{(\tau)}))
\]

\begin{definition}[Discrete Replicator algorithm]
\label{def:REP}
The REP algorithm, applied by player $x \in \Xcal_k$, with initial distribution $\pi^{(0)} \in \Delta^{\Pcal_k}$ and learning rates $(\eta_\tau)_{\tau \in \Nbb}$ with $\eta_\tau \leq 1$, is an online learning algorithm $(\tensor[^x]{U}{^{(\tau)}})_{\tau \in \Nbb}$ such that the $\tau$-th update function is given by $\tensor[^x]{U}{^{(\tau)}} ( (\ell^k(\mu^{(t)}))_{t \leq \tau}, \pi^{(\tau)} ) = \pi^{(\tau+1)}$, such that
\begin{equation}
\label{eq:rep_update}
\pi^{(\tau+1)}_p - \pi_p^{(\tau)} = \eta_\tau \pi_p^{(\tau)} \frac{ \braket{\pi^{(\tau)}}{\ell^k(\mu^{(\tau)})} - \ell^k_p(\mu^{(\tau)}) }{ \rho }
\end{equation}
\end{definition}

Here, $\braket{\pi^{(\tau)}}{\ell^k(\mu^{(\tau)})} - \ell^k_p(\mu^{(\tau)})$ is the expected instantaneous regret of the player, with respect to bundle $p$. Thus the REP update can also be expressed in terms of the previous distribution and the expected instantaneous regret.

Under the REP update, the sequence of strategy profiles $\pi^{(\tau)}$ remains in the product of simplexes $\Delta$, provided $\eta_\tau \leq 1$ for all $\tau$. Indeed, for all $\tau \in \Nbb$, we have
\begin{equation*}
\sum_{p \in \Pcal_k} \pi _p^{(\tau+1)} 
= \sum_{p \in \Pcal_k} \pi_p^{(\tau)} + \frac{\eta_\tau}{\rho} \left[  \bar{\ell}^k(\mu^{(\tau)}) - \sum_{p \in \Pcal_k} \mu_p^{(\tau)} \ell^k_p(\mu^{(\tau)})  \right] = \sum_{p \in \Pcal_k} \pi_p^{(\tau)}
\end{equation*}
and
\[
1+ \eta_\tau \frac{\bar{\ell}^k(\mu^{(\tau)}) - \ell^k_p(\mu^{(\tau)})  }{ \rho } \geq 1 - \eta_\tau \geq 0
\]
if $\eta_\tau \leq 1$, which guarantees that $\pi^{(\tau)}$ remains in $\Delta$.


We now show that the REP update rule with learning rates $(\gamma_\tau)$ has sublinear discounted regret. First, we prove the following lemma, for general online learning problems with signed losses.

\begin{lemma}
\label{lem:regretbound}
Consider a discounted online learning problem, with sequence of discount factors $(\gamma_\tau)$, with $\gamma_\tau \leq \frac{1}{2}$ for all $\tau$. Let $\Pcal_k$ be the finite decision set, and assume that the losses are signed and bounded, $m_p^{(\tau)} \in [-1, 1]$ for all $\tau$ and $p \in \Pcal$. Then the multiplicative-weights algorithm defined by the update rule
\begin{equation}
\label{eq:mw_signed}
\pi^{(\tau+1)} \propto \parenth{ \pi_p^{(\tau)} (1 - \gamma_\tau m_p^{(\tau)}) }_{p \in \Pcal_k}
\end{equation}
has the following regret bound: for all $\Tau$ and all $p \in \Pcal_k$,
\begin{equation*}
\sum_{0 \leq \tau \leq \Tau} \gamma_\tau \braket{m^{(\tau)}} {\pi^{(\tau)}} \leq  - \log \pi_{\min}^{(0)} + \sum_{0 \leq \tau \leq \Tau} \gamma_\tau m_p^{(\tau)} + \sum_{0 \leq \tau \leq \Tau} \gamma_\tau^2 | m_p^{(\tau)} |
\end{equation*}
where $\pi_{\min}^{(0)} = \min_{p \in \Pcal_k } \pi_p^{(0)}$.
\end{lemma}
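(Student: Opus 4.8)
The plan is to mirror the potential-function argument used for the Hedge bound in Proposition~\ref{prop:hedge_bound}, but adapted to the linearized multiplicative weights~\eqref{eq:mw_signed} rather than the exponential update. First I would introduce the unnormalized weights $w^{(0)}_p = \pi^{(0)}_p$ and $w^{(\tau+1)}_p = w^{(\tau)}_p(1 - \gamma_\tau m^{(\tau)}_p)$, together with the total weight $W^{(\tau)} = \sum_{p \in \Pcal_k} w^{(\tau)}_p$, so that $\pi^{(\tau)}_p = w^{(\tau)}_p / W^{(\tau)}$ and $W^{(0)} = 1$. The potential I would track is $\log W^{(\tau)}$, and the proof reduces to bounding it from above via its one-step increments and from below via a single decision $p$.

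The core computation is the one-step evolution of this potential. Factoring out $W^{(\tau)}$ gives
\[
\frac{W^{(\tau+1)}}{W^{(\tau)}} = \sum_{p \in \Pcal_k} \pi^{(\tau)}_p (1 - \gamma_\tau m^{(\tau)}_p) = 1 - \gamma_\tau \braket{m^{(\tau)}}{\pi^{(\tau)}}.
\]
Since $|m^{(\tau)}_p| \le 1$ and $\gamma_\tau \le \frac{1}{2}$, the argument of the logarithm is positive (it lies in $[\tfrac12,\tfrac32]$), so I can apply the global inequality $\log(1-x) \le -x$ to obtain $\log\parenth{W^{(\tau+1)}/W^{(\tau)}} \le -\gamma_\tau \braket{m^{(\tau)}}{\pi^{(\tau)}}$. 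Telescoping over $\tau \in \{0, \dots, T\}$ and using $\log W^{(0)} = 0$ yields
\[
\sum_{0 \le \tau \le T} \gamma_\tau \braket{m^{(\tau)}}{\pi^{(\tau)}} \le -\log W^{(T+1)}.
\]

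Next I would lower-bound $W^{(T+1)}$ by dropping all but one term: for any fixed $p \in \Pcal_k$, $W^{(T+1)} \ge w^{(T+1)}_p = \pi^{(0)}_p \prod_{\tau = 0}^{T}(1 - \gamma_\tau m^{(\tau)}_p)$, so that $-\log W^{(T+1)} \le -\log \pi^{(0)}_p - \sum_{\tau} \log(1 - \gamma_\tau m^{(\tau)}_p)$. To control each factor I would use the elementary inequality $-\log(1 - y) \le y + y^2$, valid for $|y| \le \frac{1}{2}$, which I would verify by checking that $g(y) = y + y^2 + \log(1-y)$ satisfies $g(0)=0$ with $g'(y) = y(1-2y)/(1-y)$ of the sign of $y$ on $[-\tfrac12,\tfrac12]$, forcing $g \ge 0$ there. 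Applying this with $y = \gamma_\tau m^{(\tau)}_p$, whose absolute value is at most $\gamma_\tau \le \frac12$, and then using $(m^{(\tau)}_p)^2 \le |m^{(\tau)}_p|$ because $|m^{(\tau)}_p| \le 1$, gives $-\log(1 - \gamma_\tau m^{(\tau)}_p) \le \gamma_\tau m^{(\tau)}_p + \gamma_\tau^2 |m^{(\tau)}_p|$. Summing and bounding $-\log \pi^{(0)}_p \le -\log \pi^{(0)}_{\min}$ assembles the claimed inequality.

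The argument is mostly bookkeeping once the two logarithmic inequalities are in place; the only genuinely delicate point is $-\log(1-y) \le y + y^2$, whose validity is limited to $|y| \le \frac12$, and this is exactly what forces the hypothesis $\gamma_\tau \le \frac{1}{2}$. This inequality plays the same role here that Hoeffding's lemma played in the exponential-update analysis, trading the $\gamma_\tau^2/8$ remainder there for the $\gamma_\tau^2 |m^{(\tau)}_p|$ remainder here.
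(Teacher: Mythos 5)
Your proof is correct and follows essentially the same argument as the paper: the same unnormalized-weight potential (your $\log W^{(\tau)}$ is the paper's $\log \xi^{(\tau)}$), the same upper bound via $\log(1-x) \leq -x$, the same single-coordinate lower bound, and the same key inequality $-\log(1-\gamma m) \leq \gamma m + \gamma^2|m|$, which you verify in the variable $y = \gamma m$ while the paper verifies it in $\gamma$ for fixed $m$. The only cosmetic difference is that you telescope logarithms step by step rather than exponentiating and taking logs once at the end.
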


\begin{proof}
We extend the proof of Theorem 2.1 in~\cite{arora2012multiplicative} to the discounted case. By a simple induction, we have for all $\Tau$, $\pi^{(\Tau)}$ is proportional to the vector $w^{(\Tau)}$, defined as follows
\[
w^{(\Tau)}_p = \pi_p^{(0)} \prod _{0 \leq \tau < \Tau} (1 - \gamma_\tau m_p^{(\tau)}).
\]
Define the function $\xi^{(\Tau)} = \sum_{p} w_p^{(\Tau)}$. Then $\pi^{(\Tau)}_p = \frac{w^{(\Tau)}_p}{\xi^{(\Tau)}}$, and we have for all $\Tau$:
\begin{align*}
\xi^{(\Tau +1)} 
&= \sum_p w_p^{(\Tau+1)} = \sum_p w_p^{(\Tau)} (1 - \gamma_{\Tau} m_p^{(\Tau)} ) \\
&= \xi^{(\Tau)} - \gamma_{\Tau} \sum_p m_p^{(\Tau)} \pi_p^{(\Tau)} \xi^{(\Tau)} \\
&= \xi^{(\Tau)} \parenth{ 1 - \gamma_{\Tau} \braket{ m^{(\Tau)} }{ \pi^{(\Tau)} } } \\
& \leq \xi^{(\Tau)} e^{ - \gamma_{\Tau} \braket{ m^{(\Tau)} }{ \pi^{(\Tau)} } }
\end{align*}
Thus, by induction on $\Tau$,
\[
\xi^{(\Tau+1)} \leq \exp \left( -\sum_{0 \leq \tau \leq \Tau} \gamma_\tau \braket{ m^{(\tau)} }{ \pi^{(\tau)} } \right)
\]
We also have for all $p$,
\[
\xi^{(\Tau+1)} \geq w_p^{(\Tau+1)} \geq \pi_{\min}^{(0)} \prod_{0 \leq \tau \leq \Tau } (1-\gamma_tm_p^{(\tau)})
\]
Combining the bounds on $\xi^{(\tau)}$ and taking logarithms, we have
\[
\sum_{0 \leq \tau \leq \Tau} \gamma_\tau \braket{m^{(\tau)}}{\pi^{(\tau)}} 
\leq -\log \pi_{\min}^{(0)} - \sum_{0 \leq \tau \leq \Tau} \log(1 - \gamma_\tau m_p^{(\tau)})
\]
To obtain the desired bound, it suffices to show that for all $m \in [-1, 1]$ and $\gamma \in [0, \frac{1}{2}]$,
\[
-\log (1-\gamma m) \leq \gamma m + \gamma^2|m|
\]
Define $h_m(\gamma) = -\log (1-\gamma m) - \gamma m - \gamma^2|m|$. We have $h_m(0) = 0$, and $h_m'(\gamma) = \frac{m}{1-\gamma m} - m - 2\gamma|m| = \frac{\gamma |m|(|m| + 2\gamma m - 2)}{1 - \gamma m}$. Observing that $|m| + 2\gamma m - 2 \leq 0$ for all $m \in [-1, 1]$ and $\gamma \in [0, \frac{1}{2}]$, we have that $h_m'(\gamma) \leq 0$ on $[0, \frac{1}{2}]$, and it follows that $h_m(\gamma) \leq h_m(0) = 0$.
\end{proof}

\begin{proposition} \label{prop:REPregret} If the sequence of discounts $(\gamma_\tau)$ is bounded by $\frac{1}{2}$ and is such that $\sum_{\tau \leq T} \gamma_\tau^2 / \sum_{\tau \leq T} \gamma_\tau$ converges to $0$, then the REP algorithm with learning rates $\gamma_\tau$ has sublinear discounted regret.
\end{proposition}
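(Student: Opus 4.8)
The plan is to recognize the REP update as a special case of the signed-loss multiplicative-weights rule~\eqref{eq:mw_signed} and then invoke Lemma~\ref{lem:regretbound} directly. First I would rewrite the update~\eqref{eq:rep_update} as $\pi_p^{(\tau+1)} = \pi_p^{(\tau)}\left(1 - \gamma_\tau m_p^{(\tau)}\right)$ with the signed loss $m_p^{(\tau)} = \frac{\ell^k_p(\mu^{(\tau)}) - \braket{\pi^{(\tau)}}{\ell^k(\mu^{(\tau)})}}{\rho}$. The key structural observation is that this loss is centered: $\braket{m^{(\tau)}}{\pi^{(\tau)}} = 0$. This has two consequences. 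First, the normalizing constant $\sum_{p}\pi_p^{(\tau)}(1-\gamma_\tau m_p^{(\tau)}) = 1 - \gamma_\tau\braket{m^{(\tau)}}{\pi^{(\tau)}}$ equals $1$, so the REP iterate coincides \emph{exactly}, not merely up to normalization, with the rule~\eqref{eq:mw_signed} (this is the same computation that showed REP preserves the product of simplexes). Second, it collapses the left-hand side of the Lemma's bound to zero.

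Next I would check the Lemma's hypotheses: $\gamma_\tau \le \frac12$ is assumed, and $m_p^{(\tau)} \in [-1,1]$ because both $\ell^k_p(\mu^{(\tau)})$ and its convex average $\braket{\pi^{(\tau)}}{\ell^k(\mu^{(\tau)})}$ lie in $[0,\rho]$ by Proposition~\ref{prop:bound}. Applying Lemma~\ref{lem:regretbound}, using $\braket{m^{(\tau)}}{\pi^{(\tau)}} = 0$ together with $|m_p^{(\tau)}| \le 1$, and multiplying through by $\rho$ gives, for every $p \in \Pcal_k$,
\[
\sum_{\tau \le T}\gamma_\tau\braket{\pi^{(\tau)}}{\ell^k(\mu^{(\tau)})} - {\Lscr^k_p}^{(T)} \le -\rho\log\pi_{\min}^{(0)} + \rho\sum_{\tau \le T}\gamma_\tau^2 .
\]
The left side is the expected cumulative discounted loss of the REP player minus that of the fixed bundle $p$; maximizing over $p$ (equivalently, subtracting $\min_p {\Lscr^k_p}^{(T)}$) yields $\Exp[R^{(T)}(x)] \le -\rho\log\pi_{\min}^{(0)} + \rho\sum_{\tau\le T}\gamma_\tau^2$, the exact analogue of the Hedge bound in Proposition~\ref{prop:hedge_bound}. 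Dividing by $\sum_{\tau\le T}\gamma_\tau$, the first term vanishes because the denominator diverges (Assumption~\ref{assumption:discounts}) and the second vanishes by the standing hypothesis $\sum_{\tau\le T}\gamma_\tau^2 / \sum_{\tau\le T}\gamma_\tau \to 0$; this already establishes sublinear discounted regret in expectation.

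The remaining, and genuinely delicate, step is to upgrade this to the almost-sure statement required by the definition of sublinear discounted regret, since $R^{(T)}(x)$ depends on the realized draws $A^{(\tau)}(x)\sim\pi^{(\tau)}(x)$ while the bound above controls only its expectation. I would write $R^{(T)}(x) = \Exp[R^{(T)}(x)] + \left(L^{(T)}(x) - \Exp[L^{(T)}(x)]\right)$ and observe that the bracketed term is a sum of martingale differences $\gamma_\tau\left(\ell^k_{A^{(\tau)}(x)}(\mu^{(\tau)}) - \braket{\pi^{(\tau)}(x)}{\ell^k(\mu^{(\tau)})}\right)$ bounded in absolute value by $\rho\gamma_\tau$. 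A Hoeffding--Azuma estimate then controls $\Pro\left(|L^{(T)}(x) - \Exp[L^{(T)}(x)]| > \epsilon\sum_{\tau\le T}\gamma_\tau\right)$ by $2\exp\left(-\epsilon^2(\sum_{\tau\le T}\gamma_\tau)^2 / (2\rho^2\sum_{\tau\le T}\gamma_\tau^2)\right)$, whose exponent tends to $-\infty$ precisely because $\sum_{\tau\le T}\gamma_\tau^2/\sum_{\tau\le T}\gamma_\tau \to 0$. The main obstacle is that this gives convergence in probability directly; promoting it to almost-sure convergence of the normalized deviation needs a Borel--Cantelli argument along a suitable subsequence of times, with the gaps between consecutive subsequence indices controlled using the monotonicity and divergence of $\sum_{\tau\le T}\gamma_\tau$. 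Once the normalized martingale term vanishes almost surely, combining it with the expected bound gives $\frac{1}{\sum_{\tau\le T}\gamma_\tau}[R^{(T)}(x)]^+ \to 0$ almost surely, as required.
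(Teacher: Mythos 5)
Your core argument is exactly the paper's: the paper likewise sets $m_p^{(\tau)} = \bigl(\ell^k_p(\mu^{(\tau)}) - \braket{\pi^{(\tau)}}{\ell^k(\mu^{(\tau)})}\bigr)/\rho$, observes that this instantaneous regret is centered, $\braket{m^{(\tau)}}{\pi^{(\tau)}} = 0$, checks $|m_p^{(\tau)}| \leq 1$, and feeds this into Lemma~\ref{lem:regretbound} to arrive at the same bound $-\rho\log\pi^{(0)}_{\min} + \rho\sum_{\tau\le T}\gamma_\tau^2$. Your side remark that the normalizing constant of~\eqref{eq:mw_signed} equals one for centered losses---so that REP coincides \emph{exactly} with the rule covered by the Lemma, not merely up to normalization---is left implicit in the paper, but it is indeed what makes the Lemma apply verbatim.

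Where you genuinely differ is the final step, and there you have identified (and repaired) something the paper's own proof glosses over. The quantity the Lemma controls, $\sum_{\tau\le T}\gamma_\tau\braket{\pi^{(\tau)}}{\ell^k(\mu^{(\tau)})} - \min_{p} {\Lscr^k_p}^{(T)}$, is $\Exp[R^{(T)}(x)]$, the regret in expectation; the paper nevertheless states its conclusion as a bound on $R^{(T)}(x)$ itself and stops, whereas the definition of (unqualified) sublinear discounted regret requires almost-sure convergence of the realized, random regret. Your martingale decomposition with Hoeffding--Azuma does close this gap: writing $G_T = \sum_{\tau\le T}\gamma_\tau$, one can pick indices $T_j$ with $G_{T_j}\sim \delta j$, making the Azuma tails summable so Borel--Cantelli applies along the subsequence, while the between-index error is at most $\rho\bigl(1 - G_{T_j}/G_{T_{j+1}}\bigr) \to 0$; a cleaner alternative is the martingale strong law of large numbers, which applies here because $\sum_\tau \gamma_\tau^2/G_\tau^2 < \infty$ whenever $(\gamma_\tau)$ is bounded and non-summable (telescope $\gamma_\tau/G_\tau^2 \le 1/G_{\tau-1} - 1/G_\tau$). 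So your proof is the paper's proof plus a rigorous upgrade from expected to realized regret. It is worth noting that in the only place this proposition is used (Theorem~\ref{thm:conv}), the REP update is applied to the population strategies, where $\pi^{(\tau)} = {\mu^k}^{(\tau)}$ and the regret is almost surely deterministic by Proposition~\ref{prop:distribution_deterministic}; there the two notions coincide and the paper's shorter argument suffices, but for the proposition as literally stated your extra step is the honest one.
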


\begin{proof} Let
\[
r_p^{(\tau)} = \braket{\pi^{(\tau)}}{\ell^k(\mu^{(\tau)} ) } - \ell^k_p(\mu^{(\tau)})  \in [-\rho, \rho]
\]
be the \emph{instantaneous regret} of the player. Then the REP update can be viewed as a multiplicative-weights algorithm with update rule~\eqref{eq:mw_signed}, in which the vector of signed losses is given by $m_p^{(\tau)} = - \frac{{r_p}^{(\tau)} }{ \rho } \in [-1, 1]$, and discount factors $(\gamma_\tau)$. Observing that $\braket{{r}^{(\tau)}}{ \pi^{(\tau)}} = 0$, we have by Lemma~\ref{lem:regretbound}, for all $p \in \Pcal_k$:
\begin{equation*}
\frac{1}{\rho} \sum_{0 \leq \tau \leq T} \gamma_\tau r_p^{(\tau)} \leq  - \log \pi_{\min}^{(0)} + \sum _{0 \leq \tau \leq T} \gamma_\tau^2
\end{equation*}
Rearranging and taking the maximum over $p \in \Pcal_k$, we obtain the following bound on the discounted regret
\begin{align*}
R^{(T)}(x) \leq - \rho\log \pi_{\min}^{(0)} + \rho\sum _{0 \leq \tau \leq T} \gamma_\tau^2
\end{align*}
which shows ${\lim \sup} _{T \to \infty} \frac{1}{\sum_{\tau \leq T} \gamma_\tau} R^{(T)}(x) \leq 0$.
\end{proof}

Interestingly, the REP update can also be obtained as the solution to a regularized version of the greedy update $\min_{\pi \in \Delta^{\Pcal_k}} \braket{\pi}{\frac{\ell^k(\mu^{(\tau)})}{\rho}}$, similarly to the Hedge update (see Proposition~\ref{prop:hedge_reg_div}), with a different regularization function.

\begin{proposition}
\label{prop:rep_reg}
The REP update rule is solution to the following regularized problem:
\begin{equation*}
\{\pi^{(\tau+1)} \} = \arg\min _{\pi \in \Delta} \braket{\pi}{ \frac{ \ell^k(\mu^{(\tau)}) }{ \rho }} + \frac{1}{\eta_\tau} R(\pi \| \pi^{(\tau)} )
\end{equation*}
where $R(\pi \| \nu) = \frac{1}{2} \sum_{p \in \Pcal_k} \pi_p \left( \frac{\pi_p}{\nu_p} -1 \right)^2$ is a divergence measure of $\pi$ from $\nu$.
\end{proposition}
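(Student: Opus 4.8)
The plan is to follow the same Lagrangian route used in the proof of Proposition~\ref{prop:hedge_reg_div}, replacing the Kullback--Leibler regularizer by $R$. Writing $\nu = \pi^{(\tau)}$ for brevity and relaxing only the equality constraint $\sum_{p \in \Pcal_k}\pi_p = 1$ with a multiplier $\lambda \in \Rbb$, I would form the Lagrangian
\[
\Lcal(\pi;\lambda) = \braket{\pi}{\frac{\ell^k(\mu^{(\tau)})}{\rho}} + \frac{1}{\eta_\tau} R(\pi \| \nu) + \lambda\parenth{\sum_{p \in \Pcal_k}\pi_p - 1},
\]
and look for a stationary point in the relative interior of $\Simp^{\Pcal_k}$, deferring the nonnegativity constraints to a final check.

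The crux is the gradient of the regularizer. Setting $u_p = \pi_p/\nu_p$, the stationarity condition reads
\[
\frac{\ell^k_p(\mu^{(\tau)})}{\rho} + \frac{1}{\eta_\tau}\frac{\partial R}{\partial \pi_p} + \lambda = 0 \qquad \forall p \in \Pcal_k.
\]
For this first-order condition to rearrange exactly into the additive replicator step~\eqref{eq:rep_update}, I need $\partial R/\partial \pi_p$ to be an \emph{affine} function of the ratio $u_p$, equal to $u_p - 1 = \pi_p/\nu_p - 1$. Granting this, stationarity gives $\pi_p = \nu_p\parenth{1 - \eta_\tau\parenth{\ell^k_p(\mu^{(\tau)})/\rho + \lambda}}$; imposing $\sum_p \pi_p = 1$ and using $\sum_p \nu_p = 1$ pins down $\lambda = -\braket{\nu}{\ell^k(\mu^{(\tau)})}/\rho$, which substituted back yields exactly $\pi_p = \nu_p\parenth{1 + \eta_\tau\frac{\braket{\nu}{\ell^k(\mu^{(\tau)})} - \ell^k_p(\mu^{(\tau)})}{\rho}}$, i.e. the REP update~\eqref{eq:rep_update}.

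Two checks remain, and I expect the first to be the real subtlety. The whole computation hinges on the gradient of the regularizer being affine in the ratio $\pi_p/\pi^{(\tau)}_p$, so that the first-order condition is linear and the additive step emerges after normalization; this is precisely the property of the $\chi^2$ (Pearson) divergence $\frac{1}{2}\sum_{p}(\pi_p - \pi^{(\tau)}_p)^2/\pi^{(\tau)}_p$, whose gradient equals $\pi_p/\pi^{(\tau)}_p - 1$ and which is manifestly strictly convex. I would therefore compute $\partial R/\partial\pi_p$ explicitly and confirm it takes this linear form, keeping in mind that a regularizer matching it only to leading order would reproduce~\eqref{eq:rep_update} merely to first order in $\eta_\tau$ rather than exactly. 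Second, I must confirm that the nonnegativity constraints are inactive, so that the interior stationary point is the global optimum: this follows from the bound established just before the statement, namely that $\eta_\tau \le 1$ forces $1 + \eta_\tau(\braket{\pi^{(\tau)}}{\ell^k(\mu^{(\tau)})} - \ell^k_p(\mu^{(\tau)}))/\rho \ge 1 - \eta_\tau \ge 0$, so the candidate stays in the relative interior. Strict convexity of the objective on $\Simp^{\Pcal_k}$ (a linear term plus the strictly convex divergence) then guarantees that this interior point is the unique minimizer, which is the singleton claimed.
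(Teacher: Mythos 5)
Your Lagrangian route is exactly the paper's: relax the simplex equality constraint, impose stationarity, pin down the multiplier by normalization, and recover the additive replicator step. The check you deferred, however, is precisely where the paper itself is inconsistent, and your warning about it is on target. For the regularizer \emph{as stated}, $R(\pi\|\nu) = \frac{1}{2}\sum_p \pi_p\parenth{\pi_p/\nu_p - 1}^2$, which weights by the \emph{variable} $\pi_p$, the gradient is not affine in the ratio: writing $u_p = \pi_p/\nu_p$,
\begin{equation*}
\frac{\partial R}{\partial \pi_p} = \frac{1}{2}\parenth{u_p - 1}^2 + u_p\parenth{u_p - 1} = \frac{3}{2}u_p^2 - 2u_p + \frac{1}{2},
\end{equation*}
so the first-order condition is quadratic in $\pi_p$ and reproduces the REP update only to first order in $\eta_\tau$ --- exactly the failure mode you anticipated. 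The paper's own proof silently replaces the weight $\pi_p$ by $\pi_p^{(\tau)}$: its Lagrangian contains $\frac{1}{2\gamma_\tau}\sum_p \pi_p^{(\tau)}\parenth{\pi_p/\pi_p^{(\tau)} - 1}^2$, i.e. the Pearson $\chi^2$ divergence $\frac{1}{2}\sum_p (\pi_p - \pi_p^{(\tau)})^2/\pi_p^{(\tau)}$ that you identify, whose gradient is $u_p - 1$ and for which your multiplier computation goes through verbatim. So the proposition should be read with $\nu_p$ in place of $\pi_p$ as the weight in $R$; with that correction, your argument coincides with the paper's proof and is complete.

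Your two closing checks go beyond what the paper records and are both sound: inactivity of the nonnegativity constraints follows from $\eta_\tau \le 1$ as you say, and strict convexity of the (corrected) divergence yields the singleton claim --- with the small caveat, also unaddressed in the paper, that strict convexity and indeed well-definedness of $R(\cdot\|\pi^{(\tau)})$ require $\pi_p^{(\tau)} > 0$ for all $p \in \Pcal_k$.
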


\begin{proof} The proof is similar to Proposition~\ref{prop:hedge_reg_div}. We define the Lagrangian:
\begin{equation*}
\Lcal (\pi ; \lambda ) = \sum _{p \in \Pcal} \pi_p \frac{\ell^k( \mu^{(\tau)} ) }{ \rho } + \frac{1}{2 \gamma_\tau} \sum _{p \in \Pcal_k} \pi_p^{(\tau)} \left( \frac{\pi_p}{\pi_p^{(\tau)}} -1 \right)^2 - \lambda \left( \sum_{p \in \Pcal_k} \pi_p - 1 \right)
\end{equation*}
where $\lambda \in \Rbb$ is the dual variable for the constraint $ \sum_{p \in \Pcal _k} \pi_p = 1$. Its gradient is:
\begin{align*}
\frac{\partial}{\partial \pi_p} \Lcal (\pi ; \lambda) 
&= \frac{\ell^k_p(\mu^{(\tau)}) }{ \rho } + \frac{1}{\gamma_\tau} \left(\frac{\pi_p}{\pi_p^{(\tau)}} - 1 \right) - \lambda \mbox{ forall } p \in \Pcal_k  \\
\frac{\partial}{\partial \lambda} \Lcal (\pi ; \lambda) &= - \sum _{p \in \Pcal_k} \pi_p +1
\end{align*}
and $(\pi^\star, \lambda^\star)$ are primal-dual optimal if and only if
\begin{equation*}
\frac{\pi^\star_p}{\pi_p^{(\tau)}} = 1 + \gamma_\tau \parenth{ \lambda - \frac{ \ell^k_p(\mu^{(\tau)}) }{ \rho } } \mbox{ and } \sum _{p \in \Pcal_k} \pi^\star_p = 1
\end{equation*}
Multiplying by $\pi_p^{(\tau)}$ and taking the sum over $p \in \Pcal_k$, we have $1 = 1 + \gamma_\tau\lambda^\star - \gamma_\tau\braket{ \pi^{(\tau)} }{\frac{\ell^k(\mu^{(\tau)}) }{\rho}}$, i.e. $\lambda^\star = \braket{ \pi^{(\tau)} }{\frac{\ell^k(\mu^{(\tau)}) }{\rho}}$, thus the solution $\pi^\star$ satisfies the REP update rule \eqref{eq:rep_update}.
\end{proof}


\section{Strong convergence of discounted no-regret learning}
\label{sec:conv}

In this section, we give sufficient conditions which guarantee convergence of the sequence of population strategies. The idea is to show that, under these conditions, the discrete process $(\mu^{(\tau)})_{\tau \in \Nbb}$, approaches, in a certain sense, the trajectories of the continuous-time replicator dynamics. Then one can show, using a Lyapunov function, that any limit point of the discrete process must lie in the set of stationary points $\Rcal\Ncal$. With an additional argument, we show that, in fact, limit points lie in the set~$\Ncal$ of Nash equilibria.

We start by reviewing results from the theory of stochastic approximation, which we use in the proof of Theorem~\ref{thm:conv}.

\subsection{Results from the theory of stochastic approximation}
We summarize results from Bena\"{i}m~\cite{benaim1999dynamics}. Let $\Dcal \subset \Rbb^n$, and consider a dynamical system given by the ODE
\begin{equation}
\label{eq:gen_ode}
\dot{\mu} = F(\mu)
\end{equation}
where $F : \Dcal \to \Rbb^n$ is a continuous globally integrable vector field, with unique integral curves which remain in $\Dcal$. Let $\Phi$ be the associated flow function
\begin{align*}
\Phi \colon \Rbb_+ \times \Dcal &\to \Dcal \\
(t, \mu) & \mapsto \Phi_t(\mu)
\end{align*}
such that $t \mapsto \Phi_t(\mu^{(0)})$ is the solution trajectory of~\eqref{eq:gen_ode} with initial condition $\mu(0) = \mu^{(0)}$.

\subsubsection{Discrete-time approximation} We now define what it means for a discrete process to approach the trajectories of the system~\eqref{eq:gen_ode}.

Let $(\mu^{(\tau)})_\tau$ be a discrete-time process with values in $\Dcal$. $(\mu^{(\tau)})_\tau$ is said to be a discrete-time approximation of the dynamical system~\eqref{eq:gen_ode} if there exists a sequence $(\gamma_\tau)_{\tau \in \Nbb}$ of nonnegative real numbers such that $\sum_{\tau \in \Nbb} \gamma_\tau = \infty$ and $\lim_{\tau \to \infty} \gamma_\tau = 0$, and a sequence of deterministic or random perturbations $U^{(\tau)} \in \Rbb^n$ such that for all~$\tau$,
\begin{equation}
\mu^{(\tau + 1)} - \mu^{(\tau)} = \gamma_\tau \parenth{ F(\mu^{(\tau)}) + U^{(\tau+1)} }.
\end{equation}

Given such a discrete-time approximation, we can define the affine interpolated process of $(\mu^{(\tau)})$: let $T_\tau = \sum _{t = 0}^{\tau} \gamma_t$ as in Section~\ref{subsec:replicator_motivation}.

\begin{definition}[Affine interpolated process] The continuous time affine interpolated process of the discrete process $( \mu^{(\tau)} )_{\tau \in \Nbb}$ is the function $M: \Rbb_+ \rightarrow \Rbb^n$ defined by
\begin{equation*}
M(T_\tau + s) = \mu^{(\tau)} + s \frac{ \mu^{(\tau +1)} - \mu^{(\tau)} }{ \gamma_\tau }, \quad \forall \tau \in \Nbb \text{ and } \forall s \in [0, \gamma_\tau ) 
\end{equation*}
\end{definition}

The next proposition gives sufficient conditions for an affine interpolated process to be an APT.

\begin{proposition} [Proposition 4.1 in~\cite{benaim1999dynamics}]
\label{prop:APT}
Let $M$ be the affine interpolated process of the discrete-time approximation $(\mu^{(\tau)})$, and assume that:
\begin{enumerate}[1.]
\item For all $T>0$, 
\begin{equation}
\label{eq:perturbation_condition}
\lim_{\tau_1 \rightarrow \infty} \max_{\tau_2: \sum\limits_{\tau = \tau_1}^{\tau_2} \gamma_\tau < T} \left\| \sum_{\tau = \tau_1}^{\tau_2} \gamma_\tau U^{(\tau + 1)} \right\| = 0
\end{equation}
\item $\sup _{\tau \in \Nbb} \| \mu^{(\tau)} \| < \infty$
\end{enumerate}
Then $M$ is an APT of the flow $\Phi$ induced by the vector field $F$.
\end{proposition}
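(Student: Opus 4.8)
The aim is to verify the defining property of an asymptotic pseudotrajectory: for every $T>0$,
\[
\lim_{t \to \infty} \sup_{0 \le s \le T} \| M(t+s) - \Phi_s(M(t)) \| = 0.
\]
The plan is to write $M$ as the solution of a \emph{perturbed} version of the ODE~\eqref{eq:gen_ode} and then compare it to the exact flow by a Gronwall argument, showing the accumulated error over a window of length $T$ is driven to zero by the two hypotheses. First I would record their consequences. Since $M$ interpolates affinely between the points $\mu^{(\tau)}$, hypothesis~2 implies that the range of $M$ is bounded, hence contained in a compact set $\Kcal$; enlarging $\Kcal$ to absorb the flow trajectories $\{\Phi_s(x) : x \in \Kcal,\ 0 \le s \le T\}$ (compact by continuity of $\Phi$) yields a compact $\Kcal' \subset \Dcal$ on which $F$ is bounded by some $C_F$ and --- in the present application, where $F$ is the Lipschitz replicator field --- Lipschitz with constant $L$. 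On each interval $[T_\tau, T_{\tau+1})$ the interpolation has derivative $\dot M(u) = F(\mu^{(\tau)}) + U^{(\tau+1)}$, so writing $\tau(u)$ for the index with $T_{\tau(u)} \le u < T_{\tau(u)+1}$,
\[
\dot M(u) = F(M(u)) + \sqbr{ F(M(T_{\tau(u)})) - F(M(u)) } + U^{(\tau(u)+1)}.
\]

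Next, fix $T>0$ and a grid point $t_0 = T_{\tau_1}$, and set $y(s) = \Phi_s(M(t_0))$. Subtracting the integral forms of $M(t_0+s)$ and $y(s)$ and using the Lipschitz bound on $F$ gives
\[
\| M(t_0 + s) - y(s) \| \le L \int_0^s \| M(t_0+u) - y(u)\|\,du + R(\tau_1, T),
\]
where $R(\tau_1,T)$ collects the two error sources: the \emph{intra-interval discretization error} $\int \|F(M(T_{\tau(u)})) - F(M(u))\|\,du$ and the \emph{cumulative perturbation} $\sup_{s' \le T}\bigl\| \int_{t_0}^{t_0+s'} U^{(\tau(u)+1)}\,du\bigr\|$. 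Gronwall's inequality then yields $\sup_{0\le s\le T}\|M(t_0+s) - y(s)\| \le R(\tau_1,T)\, e^{LT}$, so it remains to prove $R(\tau_1,T) \to 0$ as $\tau_1 \to \infty$.

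For the discretization error I would bound $\|M(T_{\tau(u)}) - M(u)\| \le \gamma_{\tau(u)}\parenth{C_F + \|U^{(\tau(u)+1)}\|}$; since $\gamma_\tau \to 0$ and, by the single-term case of hypothesis~1, $\gamma_\tau \|U^{(\tau+1)}\| \to 0$, the supremum of this over the window vanishes, and the integral is at most $LT$ times it. The cumulative perturbation is exactly the quantity controlled by hypothesis~1: over full sub-intervals $\int_{t_0}^{t_0+s'} U^{(\tau(u)+1)}\,du = \sum_{\tau=\tau_1}^{\tau_2-1}\gamma_\tau U^{(\tau+1)}$ where $t_0+s' \in [T_{\tau_2}, T_{\tau_2+1})$, and since $\sum_{\tau=\tau_1}^{\tau_2-1}\gamma_\tau = T_{\tau_2}-T_{\tau_1} \le s' \le T$ the index $\tau_2-1$ is admissible, so this sum is dominated by the maximum in~\eqref{eq:perturbation_condition} and tends to $0$; the leftover partial interval contributes at most $\gamma_{\tau_2}\|U^{(\tau_2+1)}\| \to 0$. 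Hence $R(\tau_1,T) \to 0$, which establishes the APT property along grid points $t_0 = T_{\tau_1}$. Extending it to arbitrary $t_0$ follows from the already-established smallness of $\|M(t_0) - M(T_{\tau(t_0)})\|$ together with uniform continuity of $\Phi$ on the compact set $[0,T] \times \Kcal'$.

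The main obstacle I anticipate is precisely the bookkeeping for the cumulative perturbation: hypothesis~\eqref{eq:perturbation_condition} is phrased as a constrained maximum over those $\tau_2$ with $\sum_{\tau=\tau_1}^{\tau_2}\gamma_\tau < T$, and matching the continuous-time integral $\int U\,du$ to this sum --- correctly handling the fractional boundary interval and verifying the horizon constraint along the window --- is the delicate step. A secondary point is the reliance on $F$ being Lipschitz; in the fully general statement of~\cite{benaim1999dynamics}, where $F$ is merely continuous, the Gronwall step must be replaced by an estimate through the modulus of continuity of $F$ on $\Kcal'$, which is more involved but unnecessary here since the replicator field is Lipschitz.
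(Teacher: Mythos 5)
The paper offers no proof of this statement to compare against: it is imported verbatim as Proposition 4.1 of Bena\"{i}m~\cite{benaim1999dynamics}, so the only meaningful comparison is with Bena\"{i}m's original argument. Your Gronwall-based proof is correct in substance and is the classical route for perturbed Euler schemes: the decomposition $\dot M(u) = F(M(u)) + [F(M(T_{\tau(u)})) - F(M(u))] + U^{(\tau(u)+1)}$, the bound $\|M(u)-M(T_{\tau(u)})\| \le \gamma_{\tau(u)}(C_F + \|U^{(\tau(u)+1)}\|)$, the Gronwall closure, and the passage from grid points to arbitrary $t$ via continuous dependence are all sound. Your observation that the single-term case of~\eqref{eq:perturbation_condition} already yields $\gamma_\tau \|U^{(\tau+1)}\| \to 0$ is exactly the right device to avoid assuming bounded perturbations, which the proposition as stated does not grant you.

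The genuine difference from Bena\"{i}m is the one you flag yourself: your argument needs $F$ locally Lipschitz, whereas his Proposition 4.1 assumes only a continuous vector field with unique integral curves, and his proof is structurally different --- it characterizes APTs through limits of time-translates of $M$ (equicontinuity plus Arzel\`{a}--Ascoli show every such limit is an integral curve, and uniqueness of integral curves then forces the APT property), with no Gronwall step. What your approach buys is elementarity and explicit error bounds ($R(\tau_1,T)e^{LT}$); what it gives up is generality, but nothing that matters here, since Assumption~\ref{ass:congestion_functions} makes the replicator field Lipschitz on $\Delta$, so every invocation of Proposition~\ref{prop:APT} in this paper is covered by your version. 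One small bookkeeping repair: when you dominate $\sum_{\tau=\tau_1}^{\tau_2-1}\gamma_\tau U^{(\tau+1)}$ by the maximum in~\eqref{eq:perturbation_condition}, admissibility requires the strict inequality $\sum_{\tau=\tau_1}^{\tau_2-1}\gamma_\tau < T$, while your window only guarantees $\le T$; since the hypothesis holds for every horizon, simply invoke it with horizon $2T$ and the boundary case disappears.
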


Furthermore, we have the following sufficient condition for property~\eqref{eq:perturbation_condition} to hold:

\begin{proposition}
\label{prop:robbins}
Let $(\mu^{(\tau)})_{\tau \in \Nbb}$ be a discrete time approximation of the system~\eqref{eq:gen_ode}. Let $(\Omega, \Fcal, \mathbb{P})$ be a probability space and $(\Fcal_\tau)_{\tau \in \Nbb}$ a filtration of $\Fcal$. Suppose that the perturbations satisfy the Robbins-Monroe conditions: for all $\tau \in \Nbb$,
\begin{enumerate}[i)]
\item $U^{(\tau)}$ is measurable with respect to $\Fcal_\tau$
\item $\Exp [U^{(\tau+1)} | \Fcal_\tau ] = 0$
\end{enumerate}

Furthermore, suppose that there exists $q\geq 2$ such that
\begin{align*}
\sup_{\tau \in \Nbb} \Exp [ \| U^{(\tau)} \|^q ] < \infty &&\text{and} && \sum _{\tau \in \Nbb} \gamma_\tau ^{1 + q/2} < \infty
\end{align*}
Then, condition (1) of Proposition~\ref{prop:APT} holds with probability one.
\end{proposition}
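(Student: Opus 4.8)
The plan is to recognize that the weighted perturbations $W_\tau := \gamma_\tau U^{(\tau+1)}$ form a martingale difference sequence with respect to $(\Fcal_\tau)$: since $\gamma_\tau$ is deterministic and $\Exp[U^{(\tau+1)}\mid\Fcal_\tau]=0$, the partial sums $M_n := \sum_{\tau=0}^{n-1}\gamma_\tau U^{(\tau+1)}$ define an $\Rbb^n$-valued martingale. The quantity in~\eqref{eq:perturbation_condition} is exactly an oscillation of this martingale, because $\sum_{\tau=\tau_1}^{\tau_2}\gamma_\tau U^{(\tau+1)}=M_{\tau_2+1}-M_{\tau_1}$, and the constraint $\sum_{\tau=\tau_1}^{\tau_2}\gamma_\tau<T$ restricts $\tau_2$ to a window of bounded rescaled length $T$ ahead of $\tau_1$. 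So it suffices to show that the supremum of $\|M_n-M_{\tau_1}\|$ over such windows tends to $0$ almost surely as $\tau_1\to\infty$.

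First I would obtain a moment bound on the windowed oscillation. Fix $\tau_1$ and let $N(\tau_1)$ be the last index with $\sum_{\tau=\tau_1}^{N(\tau_1)}\gamma_\tau<T$. Applying Doob's $L^q$ maximal inequality to the segment martingale $(M_n-M_{\tau_1})_{n>\tau_1}$, followed by the Burkholder--Davis--Gundy inequality, gives
\begin{equation*}
\Exp\!\left[\max_{\tau_1\le \tau_2\le N(\tau_1)}\left\|M_{\tau_2+1}-M_{\tau_1}\right\|^q\right]\le C_q\,\Exp\!\left[\left(\sum_{\tau=\tau_1}^{N(\tau_1)}\gamma_\tau^2\|U^{(\tau+1)}\|^2\right)^{q/2}\right].
\end{equation*}
Since $q/2\ge1$, Jensen's inequality applied with weights $\gamma_\tau^2$, together with $\sup_\tau\Exp[\|U^{(\tau)}\|^q]<\infty$, bounds the right-hand side by a constant multiple of $\left(\sum_{\tau=\tau_1}^{N(\tau_1)}\gamma_\tau^2\right)^{q/2}$. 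Here I would use that $(\gamma_\tau)$ is decreasing (Assumption~\ref{assumption:discounts}): for $\tau\ge\tau_1$ we have $\gamma_\tau\le\gamma_{\tau_1}$, so $\sum_{\tau=\tau_1}^{N(\tau_1)}\gamma_\tau^2\le\gamma_{\tau_1}\sum_{\tau=\tau_1}^{N(\tau_1)}\gamma_\tau\le\gamma_{\tau_1}(T+\gamma_0)$, which yields a bound of the form $\Exp[(\text{windowed oscillation})^q]\le C\,\gamma_{\tau_1}^{q/2}$.

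The main obstacle is that a naive union bound over all starting indices $\tau_1$ would require $\sum_{\tau_1}\gamma_{\tau_1}^{q/2}<\infty$, which does \emph{not} follow from the hypothesis $\sum_\tau\gamma_\tau^{1+q/2}<\infty$ when $q>2$. I would resolve this by a Borel--Cantelli argument over a sparse set of starting indices rather than over every $\tau_1$. Partition $\Nbb$ into consecutive blocks $[m_j,m_{j+1})$ each of rescaled length $\sum_{\tau=m_j}^{m_{j+1}-1}\gamma_\tau\in[T,T+\gamma_0)$; such a partition exists because $\gamma_\tau\to0$. Every window of rescaled length $<T$ starting at an arbitrary $\tau_1\in[m_j,m_{j+1})$ is contained in the union of two consecutive blocks $[m_j,m_{j+2})$, so the oscillation $A_{\tau_1}$ is controlled by the double-block oscillation $B_j:=\max_{m_j\le n\le m_{j+2}}\|M_n-M_{m_j}\|$ through $A_{\tau_1}\le 2B_{j(\tau_1)}$, and the moment bound above gives $\Exp[B_j^q]\le C'\gamma_{m_j}^{q/2}$. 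Crucially, since each block has rescaled length $\ge T$ and $\gamma$ is decreasing, $\sum_{\tau=m_j}^{m_{j+1}-1}\gamma_\tau^{1+q/2}\ge\gamma_{m_{j+1}-1}^{q/2}\sum_{\tau=m_j}^{m_{j+1}-1}\gamma_\tau\ge T\,\gamma_{m_{j+1}-1}^{q/2}$, so summing over $j$ and shifting the index by monotonicity shows $\sum_j\gamma_{m_j}^{q/2}\le\tfrac{1}{T}\sum_\tau\gamma_\tau^{1+q/2}<\infty$. This is exactly where the exponent $1+q/2$ enters: the extra factor $\gamma$ supplied by the block length of order $T$ upgrades the unavailable $(q/2)$-summability to the available $(1+q/2)$-summability.

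Finally, for each fixed $\epsilon>0$, Markov's inequality gives $\sum_j\Prob[B_j\ge\epsilon]\le\epsilon^{-q}C'\sum_j\gamma_{m_j}^{q/2}<\infty$, so by Borel--Cantelli $B_j\to0$ almost surely, and hence $A_{\tau_1}\to0$ almost surely since $j(\tau_1)\to\infty$. Intersecting the full-measure events over a countable sequence $\epsilon=1/k$, and noting that the quantity in~\eqref{eq:perturbation_condition} is nondecreasing in $T$ so that it suffices to treat integer values of $T$, shows that~\eqref{eq:perturbation_condition} holds simultaneously for all $T>0$ with probability one. The vector-valued Burkholder--Davis--Gundy step reduces to the scalar case by treating each of the finitely many coordinates separately, so no dimension-dependent difficulty arises.
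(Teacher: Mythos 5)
Your proof is correct, but a preliminary remark is in order: the paper does not prove this proposition at all — it is imported verbatim from Bena\"{i}m's stochastic approximation theory (it is Proposition 4.2 of~\cite{benaim1999dynamics}, quoted alongside Proposition~\ref{prop:APT}) — so the relevant comparison is with that standard proof. Your argument shares its architecture: the martingale $M_n=\sum_{\tau<n}\gamma_\tau U^{(\tau+1)}$, Doob/Burkholder--Davis--Gundy moment bounds on windowed oscillations, blocks of rescaled length of order $T$, Markov plus Borel--Cantelli, and a countable intersection over $\epsilon$ and integer $T$; your diagnosis of why a naive union bound over all $\tau_1$ fails, and the double-block covering that fixes it, are exactly the right ideas. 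The genuine difference is how the exponent $1+q/2$ enters. You invoke monotonicity of $(\gamma_\tau)$ (Assumption~\ref{assumption:discounts}) twice: to bound the block quadratic variation $\sum_{\tau\in\text{block}}\gamma_\tau^2\le\gamma_{m_j}(T+\gamma_0)$, and then to convert $\sum_\tau\gamma_\tau^{1+q/2}<\infty$ into $\sum_j\gamma_{m_j}^{q/2}<\infty$. The standard proof instead applies Jensen's inequality \emph{within} each block, with weights $\gamma_\tau/\sum_{\text{block}}\gamma_\tau$, which gives $\bigl(\sum_{\text{block}}\gamma_\tau^2\bigr)^{q/2}\le\bigl(\sum_{\text{block}}\gamma_\tau\bigr)^{q/2-1}\sum_{\text{block}}\gamma_\tau^{1+q/2}$, and block-summability follows at once with no monotonicity assumption. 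The trade-off: your route is slightly more elementary and perfectly legitimate in this paper, where $(\gamma_\tau)$ are the discount factors and Assumption~\ref{assumption:discounts} is standing; but it proves a strictly narrower statement than the one displayed, since the hypotheses of the proposition (via the definition of a discrete-time approximation) require only $\gamma_\tau\ge 0$, $\gamma_\tau\to 0$, $\sum_\tau\gamma_\tau=\infty$, not that $(\gamma_\tau)$ be decreasing. If you replace your two monotonicity steps by the single Jensen step above, you recover the result under exactly its stated hypotheses; the remaining details of your write-up (the window-to-martingale-increment identification, the index shift $\gamma_{m_{j+1}}\le\gamma_{m_{j+1}-1}$, the monotonicity of the quantity in~\eqref{eq:perturbation_condition} with respect to $T$, and the coordinatewise reduction of the vector-valued BDG inequality) are all sound as written.
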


\subsubsection{Chain transitivity} We next give an important property of limit points of bounded asymptotic pseudo-trajectories, given in Theorem~\ref{thm:benaim}.
\begin{definition}[Pseudo-orbit and chain transitivity]
A $(\delta, T)$-pesudo-orbit from $a \in \Dcal$ to $b \in \Dcal$ is a finite sequence of partial trajectories. It is given by a sequence of points $(t_i, y_i), i \in \{ 0, \dots, k-1\}$, and the corresponding sequence of partial trajectories
\begin{equation*}
\{ \Phi_t(y_i) \colon 0 \leq t \leq t_i \}; \ i = 0, \ldots, k-1
\end{equation*}
such that $t_i \geq T$ for all $i$, and
\begin{align*}
&d(y_0, a) < \delta \\
&d(\Phi_{t_i} (y_i), y_{i+1}) < \delta, \ i=0, \ldots, k-1 \\
&y_k = b
\end{align*}
The conditions are illustrated in Figure~\ref{fig:orbit}. We write $\Phi \colon a \hookrightarrow _{\delta, T} b$ if there exists a $(\delta, T)$-pesudo-orbit from $a$ to $b$. We write $a \hookrightarrow b$ if $a \hookrightarrow _{\delta, T} b$ for all $\delta, T > 0$. The flow $\Phi$ is said to be chain transitive if $a \hookrightarrow b$ for all $a, b \in \Dcal$.
\end{definition}

\begin{figure}[h]
\centering
\includegraphics[width=0.7\linewidth]{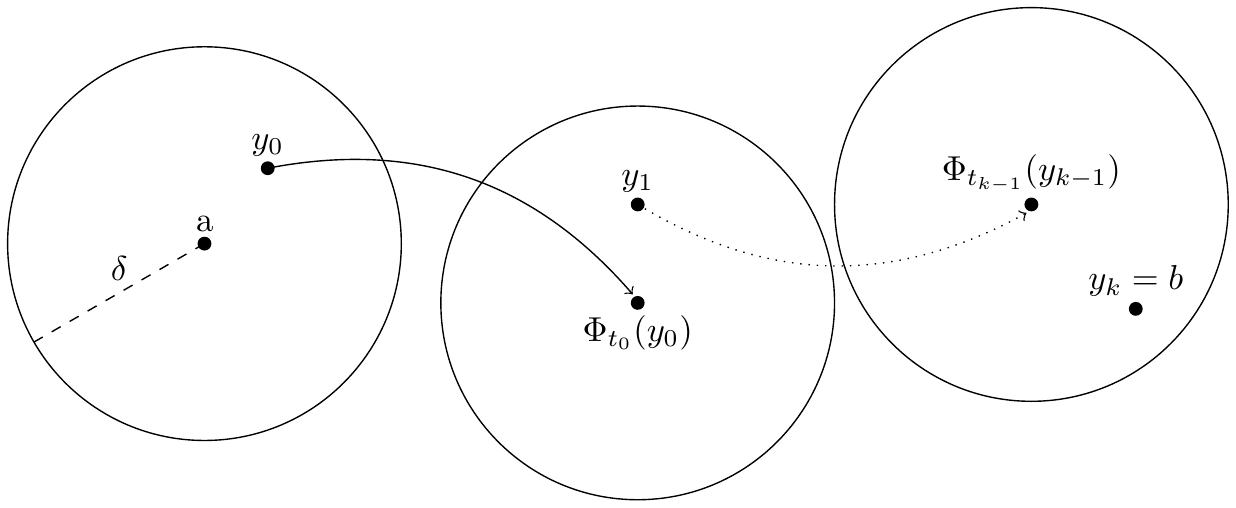} 
\caption{A $(\delta, T)$-pesudo-orbit from $a$ to $b$.}
\label{fig:orbit}
\end{figure}

In the remainder of this section, let $\Gamma \subset \Dcal$ be a compact invariant set for~$\Phi$, that is, $\Phi_t(\Gamma) \subseteq \Gamma$ for all $t \in \Rbb^+$.
\begin{definition}[Internally chain transitive set]
The compact invariant set $\Gamma$ is internally chain transitive if the restriction of $\Phi$ to $\Gamma$ is chain transitive.
\end{definition}

\begin{theorem} [Theorem 5.7 in~\cite{benaim1999dynamics}]
\label{thm:benaim} Let $X$ be a bounded APT of~\eqref{eq:gen_ode}. Then the limit set
\begin{equation*}
L(X) = \bigcap_{t \geq 0} \overline{ \{ X(s) \colon s \geq t \} }
\end{equation*}
is internally chain transitive.
\end{theorem}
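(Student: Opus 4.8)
The plan is to follow the classical argument for limit sets of asymptotic pseudo-trajectories. Recall that $X$ being an APT of $\Phi$ means that for every $T > 0$,
\[
\lim_{t \to \infty} \sup_{0 \leq s \leq T} \| X(t+s) - \Phi_s(X(t)) \| = 0.
\]
First I would record the elementary structural facts. Since $X$ is bounded, $\overline{\{X(s) : s \geq 0\}}$ is compact, so $L(X)$, a nested intersection of nonempty compacts, is nonempty and compact, and $d(X(t), L(X)) \to 0$ as $t \to \infty$ (a bounded trajectory cannot stay bounded away from its own limit set). For forward invariance, fix $a \in L(X)$ and $s \geq 0$ and pick $t_n \to \infty$ with $X(t_n) \to a$. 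The APT property gives $\| X(t_n + s) - \Phi_s(X(t_n)) \| \to 0$, while continuity of the flow gives $\Phi_s(X(t_n)) \to \Phi_s(a)$; hence $X(t_n + s) \to \Phi_s(a)$, and since $t_n + s \to \infty$ we get $\Phi_s(a) \in L(X)$. Thus $\Phi_s(L(X)) \subseteq L(X)$, so $L(X)$ is a compact invariant set in the sense required here.

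The heart of the proof is to show that the restriction of $\Phi$ to $L(X)$ is chain transitive. Fix $a, b \in L(X)$ and $\delta, T > 0$; I would build a $(\delta, T)$-pseudo-orbit from $a$ to $b$ lying in $L(X)$ by sampling the trajectory $X$ over a long window and projecting the samples onto $L(X)$. Choose a small accuracy $\eta$ (to be fixed at the end) and a threshold $\bar t$ beyond which, simultaneously, the APT estimate $\sup_{0 \leq s \leq 2T} \|X(t+s) - \Phi_s(X(t))\| < \eta$ holds and $d(X(t), L(X)) < \eta$. Using $a \in L(X)$, pick $u_0 \geq \bar t$ with $\|X(u_0) - a\| < \eta$; using $b \in L(X)$, pick a time $t_b > u_0$, as large as needed, with $\|X(t_b) - b\| < \eta$. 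Partition $[u_0, t_b]$ into consecutive times $u_0 < u_1 < \cdots < u_m = t_b$ with each gap $t_i := u_{i+1} - u_i \in [T, 2T]$, which is possible once $t_b - u_0$ is large enough.

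For each $i$ choose $z_i \in L(X)$ with $\|X(u_i) - z_i\| < \eta$ (possible since $u_i \geq \bar t$), and set $y_i = z_i$ for $i < m$, $y_m = b$. Then $\|y_0 - a\| \leq \|z_0 - X(u_0)\| + \|X(u_0) - a\| < 2\eta$, and for the transitions I would estimate, via the triangle inequality,
\[
\|\Phi_{t_i}(z_i) - z_{i+1}\| \leq \|\Phi_{t_i}(z_i) - \Phi_{t_i}(X(u_i))\| + \|\Phi_{t_i}(X(u_i)) - X(u_{i+1})\| + \|X(u_{i+1}) - z_{i+1}\|,
\]
where the middle term is $<\eta$ by the APT bound (as $t_i \leq 2T$) and the last is $<\eta$ by the choice of $z_{i+1}$ (with the final step also using $\|X(t_b) - b\| < \eta$). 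The first term is controlled by uniform continuity of $(t,x) \mapsto \Phi_t(x)$ on $[0,2T] \times N$, where $N$ is a compact neighborhood of $L(X)$: there is a modulus $\omega$ with $\|\Phi_{t_i}(z_i) - \Phi_{t_i}(X(u_i))\| \leq \omega(\eta)$. Finally I would fix $\eta$ small enough that $2\eta < \delta$ and $\omega(\eta) + 2\eta < \delta$, so every pseudo-orbit inequality holds; all partial trajectories $\{\Phi_t(z_i) : 0 \leq t \leq t_i\}$ lie in $L(X)$ by forward invariance, giving the required internal pseudo-orbit and hence $a \hookrightarrow b$ within $L(X)$.

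The main obstacle, and the step needing the most care, is precisely this last one: the sampled points $X(u_i)$ lie only \emph{near} $L(X)$, not on it, so one must replace them by genuine points $z_i \in L(X)$ and then argue that propagating an $O(\eta)$ perturbation through the flow for a bounded time $t_i \in [T,2T]$ does not blow up. This is exactly where uniform continuity of $\Phi$ on a compact set is essential (available because $F$ has unique, globally defined integral curves and $L(X)$ is compact), and it is why the accuracy $\eta$ must be chosen \emph{after} the modulus $\omega$ is known rather than before.
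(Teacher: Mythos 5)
The paper does not actually prove this statement: it is imported verbatim as Theorem 5.7 of Bena\"{i}m's notes \cite{benaim1999dynamics}, so there is no internal proof to compare against, and your proposal must be judged on its own. It is correct, and it is the classical ``shadowing'' argument: (i) $L(X)$ is nonempty, compact, attracts $X(t)$, and is forward invariant --- your limit argument combining the APT property with continuity of $\Phi_s$ is exactly right, and forward invariance is all that the paper's definition of invariance ($\Phi_t(\Gamma)\subseteq\Gamma$) requires; (ii) pseudo-orbits inside $L(X)$ are built by sampling $X$ at times spaced in $[T,2T]$, replacing the samples by nearby points of $L(X)$, and controlling the three error terms by the APT bound, the attraction estimate, and uniform continuity of the flow, with $\eta$ chosen only after the modulus $\omega$ is known. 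Two refinements are worth making explicit. First, in this paper's application the domain is $\Dcal=\Delta$, a product of simplexes with empty interior in $\Rbb^n$, so the flow is not defined on any $\Rbb^n$-neighborhood of $L(X)$; your ``compact neighborhood $N$ of $L(X)$'' should be replaced by a compact subset of the domain containing all points involved --- the closure $C$ of $\{X(s):s\geq 0\}$ works, since $z_i\in L(X)\subseteq C$ and $X(u_i)\in C$, and uniform continuity of $(t,x)\mapsto\Phi_t(x)$ on $[0,2T]\times C$ is all your estimate uses. Second, the joint continuity of $\Phi$ that this invokes is available because uniqueness of integral curves of a continuous vector field implies continuous dependence on initial conditions; since the paper never states this, you should. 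Finally, for comparison with the cited source: Bena\"{i}m's own proof takes a genuinely different route, characterizing internally chain transitive sets as compact invariant sets whose restricted flow admits no proper attractor and then showing that the limit set of an APT cannot contain one; your direct construction is the more elementary of the two and uses nothing beyond the definitions the paper quotes.
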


Finally, we give the following property of internally chain transitive sets:

\begin{proposition} [Proposition 6.4 in~\cite{benaim1999dynamics}] \label{prop:chaintrans_const} Let $\Gamma \subset \Dcal$ be a compact invariant set and suppose that there exists a Lyapunov function $V: \Dcal \rightarrow \Rbb$ for $\Gamma$ (that is, $V$ is continuous and $\frac{d}{dt}V(x(t)) = \braket{\nabla V(x(t))}{ F(x(t))} < 0$ for all $x \notin \Gamma$) such that $V(\Gamma)$ has empty interior. Then every internally chain transitive set $L$ is contained in $\Gamma$ and $V$ is constant on $L$.
\end{proposition}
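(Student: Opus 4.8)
The plan is to prove the two conclusions in the reverse of the stated order: first that $V$ is constant on $L$, and then deduce $L \subseteq \Gamma$ almost for free. Throughout I use that $V$ is \emph{non-increasing} along the flow: since $\frac{d}{dt}V(\Phi_t(x)) = \braket{\nabla V(\Phi_t(x))}{F(\Phi_t(x))} \le 0$ everywhere (strictly negative off $\Gamma$ by hypothesis, and $\le 0$ on $\Gamma$ by continuity), we get $V(\Phi_t(x)) \le V(x)$ for all $t \ge 0$, with a strict drop whenever the trajectory spends positive time outside $\Gamma$. I also record that $L$, being internally chain transitive, is compact, invariant and connected, so $V(L)$ is a compact interval $[m, M]$; the goal of the main step is to show $m = M$.

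\textbf{Main step: $V$ is constant on $L$.} Suppose not, so $m < M$, attained at $p, q \in L$ respectively. This is where the empty-interior hypothesis enters. Since $V(\Gamma)$ is compact (continuous image of a compact set) with empty interior, its complement is open and dense, so I can pick a value $v_0 \in (m, M)$ and a radius $\epsilon_0 > 0$ with $m < v_0 - \epsilon_0$, $v_0 + \epsilon_0 < M$, and $[v_0 - \epsilon_0, v_0 + \epsilon_0] \cap V(\Gamma) = \emptyset$. On the compact ``band'' $S = \{x \in L : |V(x) - v_0| \le \epsilon_0\}$, no point lies in $\Gamma$, so $\dot V < 0$ on $S$; by compactness there is a uniform rate $\alpha > 0$ with $\frac{d}{dt}V \le -\alpha$ on $S$. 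Consequently any trajectory decreases $V$ at rate at least $\alpha$ while inside the band, crosses it downward in time at most $2\epsilon_0/\alpha$, and can never re-enter it from below. Now invoke chain transitivity: $p \hookrightarrow q$, so for every $\delta, T > 0$ there is a $(\delta, T)$-pseudo-orbit $(y_i, t_i)$ in $L$ from $p$ to $q$, with $y_k = q$. I fix $T > 2\epsilon_0/\alpha$ and $\delta$ small enough that the modulus of continuity satisfies $\omega_V(\delta) < \epsilon_0/2$. The key claim, proved by downward induction on $i$, is that $V(y_{i+1}) > v_0 + \epsilon_0$ forces $V(y_i) > v_0 + \epsilon_0$: the jump changes $V$ by less than $\omega_V(\delta) < \epsilon_0/2$, so $V(\Phi_{t_i}(y_i)) > v_0 + \epsilon_0/2$, and since the flow only decreases $V$ we get $V(y_i) > v_0 + \epsilon_0/2$; but if $y_i$ lay in the band, the long flow segment ($t_i \ge T > 2\epsilon_0/\alpha$) would push its $V$-value below $v_0 - \epsilon_0$ and the following jump could not climb back above $v_0 + \epsilon_0/2$, a contradiction, so $y_i$ lies strictly above the band. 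Propagating back to $i = 0$ gives $V(y_0) > v_0 + \epsilon_0$, contradicting $V(y_0) \le V(p) + \omega_V(\delta) = m + \omega_V(\delta) < v_0 - \epsilon_0$. Hence no pseudo-orbit reaches $q$, contradicting $p \hookrightarrow q$, and therefore $m = M$: $V$ is constant on $L$, with value $c = m = M \in V(\Gamma)$.

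\textbf{Concluding $L \subseteq \Gamma$.} Fix any $a \in L$. By invariance its forward orbit stays in $L$, on which $V \equiv c$, so $t \mapsto V(\Phi_t(a))$ is constant and $\frac{d}{dt}V(\Phi_t(a)) = 0$ for all $t \ge 0$; at $t = 0$ this reads $\braket{\nabla V(a)}{F(a)} = 0$, which by the strict-decrease hypothesis forces $a \in \Gamma$. Thus $L \subseteq \Gamma$ and $V$ is constant on $L$, as claimed.

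\textbf{Main obstacle.} The delicate point is the main step, and specifically the ``staircase'' phenomenon: a pseudo-orbit can in principle climb $V$ by accumulating many small upward jumps, so the naive telescoping bound $V(q) \le V(p) + (k+1)\omega_V(\delta)$ is useless because the number of jumps $k$ blows up as $\delta \to 0$. The resolution — and the only place the empty-interior assumption is genuinely needed — is to locate a clean band around a level $v_0 \notin V(\Gamma)$ on which $V$ decreases at a uniform positive rate; choosing the flow-time floor $T$ larger than the band-crossing time $2\epsilon_0/\alpha$ then prevents any pseudo-orbit from ratcheting upward across that level. Verifying that the uniform rate $\alpha$ exists (continuity of $\dot V$ and compactness of $S$) and that $T$ can be taken large while $\delta$ is taken small is the technical heart of the argument.
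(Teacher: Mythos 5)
Your proof is correct, but note first that the paper itself contains no proof of this statement: it is imported verbatim from Bena\"{i}m's monograph (Proposition 6.4 there), so the relevant comparison is with Bena\"{i}m's argument. His route runs through attractor theory: internally chain transitive sets are shown to be \emph{attractor-free} (the restricted flow admits no proper attractor), and then for any level $c \notin V(\Gamma)$ with $\min_L V < c < \max_L V$, the open sublevel set $\{x \in L : V(x) < c\}$ is nonempty, forward invariant, and generates an attractor of $\Phi|_L$, which by attractor-freeness must be all of $L$; this forces $\max_L V \leq c$, a contradiction, so $(\min_L V, \max_L V) \subseteq V(\Gamma)$, and empty interior of $V(\Gamma)$ collapses the interval to a point. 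Your band-crossing argument reaches the same dichotomy by hand: instead of invoking the attractor-free property, you choose a clean level $v_0 \notin V(\Gamma)$, extract a uniform decay rate $\alpha$ on the band by compactness, and take the flow-time floor $T > 2\epsilon_0/\alpha$ so that no $(\delta,T)$-pseudo-orbit can ratchet upward across the band. This is a genuinely different and more self-contained proof: it uses only the notions the paper actually defines (pseudo-orbits and internal chain transitivity, not attractors), at the price of redoing explicitly the quantitative bookkeeping that the attractor machinery packages abstractly. The concluding step ($V$ constant on $L$ plus invariance forces $L \subseteq \Gamma$) is the same in both.

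Two small repairs. First, your justification that $\dot V \leq 0$ \emph{on} $\Gamma$ ``by continuity'' only covers boundary points of $\Gamma$; for interior points continuity of $\braket{\nabla V}{F}$ gives nothing. The fix uses hypotheses you already have: if $\braket{\nabla V(x_0)}{F(x_0)} \neq 0$ for some $x_0 \in \Gamma$, then $t \mapsto V(\Phi_t(x_0))$ is strictly monotone on a short time interval, and since the forward orbit of $x_0$ stays in $\Gamma$ by invariance, $V(\Gamma)$ would contain a nondegenerate interval, contradicting the empty-interior hypothesis; hence $\dot V \equiv 0$ on $\Gamma$ and the global monotonicity you rely on holds. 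Second, in the final contradiction the bound should read $m + \omega_V(\delta) < v_0 - \epsilon_0/2$ rather than $< v_0 - \epsilon_0$; this is immaterial, since either bound contradicts $V(y_0) > v_0 + \epsilon_0$.
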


\subsection{The AREP class}
Now, we are ready to define a class of online learning algorithms which we call AREP for Approximate REPlicator. An AREP online algorithm can be viewed as a perturbed version of the replicator algorithm.
\begin{definition}[AREP algorithm]
\label{def:AREP}
An online learning algorithm, applied by player $x \in \Xcal_k$, with output sequence $(\pi^{(\tau)})_{\tau \in \Nbb}$, is said to be an approximate replicator (AREP) algorithm if its update equation can be written as
\begin{equation}
\label{eq:AREP}
\pi_p^{(\tau+1)}-\pi_p^{(\tau)} 
= \gamma_\tau \parenth{ \pi_p^{(\tau)} \frac{ \braket{\pi^{(\tau)}}{\ell^k(\mu^{(\tau)})} - \ell^k_p(\mu^{(\tau)}) }{ \rho } + U_p^{(\tau)} }
\end{equation}
where $(U^{(\tau)})_{\tau \in \Nbb}$ is a bounded sequence of stochastic perturbations with values in $\Rbb^{\Pcal_k}$, and which satisfies condition~\eqref{eq:perturbation_condition}.
\end{definition}

In particular, the REP algorithm given in Definition~\ref{def:REP} is an AREP algorithm with zero perturbations. It turns out that the Hedge algorithm also belongs to the AREP class.
\begin{proposition}
\label{prop:HedgeREP}
The Hedge algorithm with learning rates $(\gamma_\tau)_\tau$ satisfying Assumption~\ref{assumption:discounts} is an AREP algorithm.
\end{proposition}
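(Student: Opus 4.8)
The plan is to show that the Hedge update can be written in the AREP form \eqref{eq:AREP}, namely as a replicator increment plus a perturbation $U^{(\tau)}$ that is bounded and satisfies the APT condition \eqref{eq:perturbation_condition}. First I would start from the multiplicative Hedge update \eqref{eq:hedge_update} and perform a Taylor expansion of the exponential weights in the learning rate $\gamma_\tau$, exactly as was done in Section~\ref{subsec:replicator_motivation} when deriving the replicator ODE. Writing $e^{-\gamma_\tau \ell^k_p/\rho} = 1 - \gamma_\tau \ell^k_p/\rho + O(\gamma_\tau^2)$ in both numerator and denominator of the normalized update, and expanding the ratio, I would obtain
\[
\pi_p^{(\tau+1)} - \pi_p^{(\tau)} = \gamma_\tau\, \pi_p^{(\tau)}\frac{\braket{\pi^{(\tau)}}{\ell^k(\mu^{(\tau)})} - \ell^k_p(\mu^{(\tau)})}{\rho} + \gamma_\tau\, U_p^{(\tau)},
\]
which \emph{defines} the perturbation $U_p^{(\tau)}$ as the collection of all higher-order terms, divided by $\gamma_\tau$. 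The key structural point is that this is an algebraic identity, so $U^{(\tau)}$ is exactly determined; no approximation is discarded.

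The main work is then to control $U^{(\tau)}$. I would first verify boundedness: since $\ell^k_p(\mu) \in [0,\rho]$ by Proposition~\ref{prop:bound} and $\pi^{(\tau)} \in \Delta^{\Pcal_k}$, the exponents $\gamma_\tau \ell^k_p/\rho$ lie in a bounded set, and because the remainder terms carry an extra factor of $\gamma_\tau$, the quotient $U^{(\tau)}_p = (\text{remainder})/\gamma_\tau$ is $O(\gamma_\tau)$ uniformly in $\tau$ and $p$. Concretely I expect a bound of the form $\|U^{(\tau)}\| \leq C \gamma_\tau$ for some constant $C$ depending only on $\rho$ and $|\Pcal_k|$, which in particular shows $(U^{(\tau)})_\tau$ is bounded (indeed vanishing, since $\gamma_\tau \to 0$ by Assumption~\ref{assumption:discounts}).

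The part requiring the most care is verifying condition~\eqref{eq:perturbation_condition}, namely that for every $T>0$,
\[
\lim_{\tau_1 \to \infty}\ \max_{\tau_2:\, \sum_{\tau=\tau_1}^{\tau_2}\gamma_\tau < T}\ \Bigl\|\sum_{\tau=\tau_1}^{\tau_2}\gamma_\tau U^{(\tau+1)}\Bigr\| = 0.
\]
Here I would exploit the $\|U^{(\tau)}\| \leq C\gamma_\tau$ estimate: the tail sum is bounded by $C\sum_{\tau=\tau_1}^{\tau_2}\gamma_\tau^2$, and over any window with $\sum_{\tau=\tau_1}^{\tau_2}\gamma_\tau < T$ this is controlled by $C\,T\,\max_{\tau \geq \tau_1}\gamma_\tau$, which tends to $0$ as $\tau_1 \to \infty$ since $\gamma_\tau \to 0$. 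Thus the deterministic bound on the increments is strong enough to give \eqref{eq:perturbation_condition} directly, without invoking the Robbins-Monroe martingale machinery of Proposition~\ref{prop:robbins}. The one technical obstacle I anticipate is making the Taylor remainder estimate fully uniform: one must be careful that the denominator $\sum_{p'} \pi^{(\tau)}_{p'} e^{-\gamma_\tau \ell^k_{p'}/\rho}$ stays bounded away from $0$ (which it does, since it is a convex combination of terms each at least $e^{-\gamma_\tau}$), so that dividing by it does not spoil the $O(\gamma_\tau^2)$ control on the remainder. Once this uniform bound is in hand, the perturbation satisfies both requirements of Definition~\ref{def:AREP} and the Hedge algorithm is shown to be AREP.
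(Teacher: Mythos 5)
Your proof is correct, and its skeleton is the same as the paper's: write the Hedge increment exactly as the replicator increment plus a residual $U^{(\tau)}$, show $\|U^{(\tau)}\| = O(\gamma_\tau)$ uniformly, and then kill the window sums in \eqref{eq:perturbation_condition} via $\sum_{\tau=\tau_1}^{\tau_2}\gamma_\tau^2 \leq \gamma_{\tau_1}\sum_{\tau=\tau_1}^{\tau_2}\gamma_\tau < \gamma_{\tau_1}T \to 0$, which is exactly the paper's closing step. Where you differ is in how the $O(\gamma_\tau)$ estimate is obtained. The paper introduces the auxiliary ``softmin'' loss $\tilde{\ell}^k{}^{(\tau)} = -\frac{\rho}{\gamma_\tau}\log\sum_{p'}\pi^{(\tau)}_{p'}\exp(-\gamma_\tau\ell^k_{p'}(\mu^{(\tau)})/\rho)$, splits the perturbation exactly into a term involving $\theta(x)=e^x-x-1$ (handled by $\theta(x)\sim x^2/2$) plus a term proportional to $\tilde{\ell}^k{}^{(\tau)}-\bar{\ell}^k{}^{(\tau)}$, and controls the latter by concavity of the logarithm together with Hoeffding's lemma, yielding the sharp bound $0 \leq \bar{\ell}^k{}^{(\tau)}-\tilde{\ell}^k{}^{(\tau)} \leq \rho\gamma_\tau/8$. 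You instead expand numerator and denominator of the normalized ratio directly and absorb everything into a Taylor remainder, using that the denominator is a convex combination of terms bounded below by $e^{-\gamma_\tau} \geq e^{-\gamma_0} > 0$; this is more elementary (no Hoeffding, no auxiliary quantity) and, since your residual is defined as the exact difference, nothing is discarded, so the identity-plus-bound structure is sound. The trade-off is purely cosmetic versus quantitative: the paper's route recycles Hoeffding's lemma (already used for Proposition~\ref{prop:hedge_bound}) and produces clean explicit constants, while yours is self-contained and shorter; both give $\|U^{(\tau)}\| \leq C\gamma_\tau$ with $C$ uniform in $\tau$ (using that $(\gamma_\tau)$ is decreasing, hence bounded by $\gamma_0$), which is all that Definition~\ref{def:AREP} requires. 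You are also right that no appeal to Proposition~\ref{prop:robbins} is needed, since the perturbation here is deterministic given $(\mu^{(\tau)})_\tau$; the paper likewise verifies \eqref{eq:perturbation_condition} directly.
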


\begin{proof} Let $(\pi^{(\tau)})_{\tau \in \Nbb}$ be the sequence of strategies, and let $(\mu^{(\tau)})_\tau$ be any sequence of population distributions. By definition of the Hedge algorithm, we have
\vspace{-10pt}
\[
\pi_p^{(\tau+1)} = \pi_p^{(\tau)} \exp\parenth{- \gamma_\tau \frac{\ell^k_p(\mu^{(\tau)})}{\rho}} / \sum_{p' \in \Pcal_k} \pi_{p'}^{(\tau)} \exp\parenth{- \gamma_\tau \frac{\ell^k_{p'}(\mu^{(\tau)})}{\rho} }
\]
which we can write in the form of equation~\eqref{eq:AREP}, with perturbation terms
{\small
\al{
U_p^{(\tau+1)} & = \frac{\pi_p^{(\tau)}}{\gamma_\tau} 
\sqbr{ 
\exp \parenth{-\gamma_\tau \frac{ \ell^k_p(\mu^{(\tau)}) - {\tilde{\ell}^k}{}^{(\tau)} }{\rho} } + \gamma_\tau \frac{ \ell^k_p(\mu^{(\tau)}) - \tilde{\ell}^k{}^{(\tau)} }{\rho} -1
}
+ \pi_p^{(\tau)} \frac{\tilde{\ell}^k{}^{(\tau)} - \bar{\ell}^k{}^{(\tau)}}{\rho}
}}%
where
\al{
\tilde{\ell}^k{}^{(\tau)} &= - \frac{\rho}{\gamma_\tau} \log \sum _{p' \in \Pcal_k} \pi_{p'}^{(\tau)} \exp\parenth{- \gamma_\tau \frac{ \ell^k_{p'} (\mu^{(\tau)}) }{ \rho } } \\
\bar{\ell}^k{}^{(\tau)} &= \braket{\pi(\tau)}{\ell^k(\mu(\tau))}
}
Letting $\theta(x) = e^x - x -1$, we have for all $p \in \Pcal_k$:
\begin{align*}
U_p^{(\tau+1)} &= \frac{\pi_p^{(\tau)}}{\gamma_\tau} \theta \left(  - \gamma_\tau \frac{\ell^k_p(\mu^{(\tau)}) - \tilde{\ell}^k{}^{(\tau)} }{\rho} \right) + \frac{\pi_p^{(\tau)}}{\rho} (\tilde{\ell}^k{}^{(\tau)} - \bar{\ell}^k{}^{(\tau)})
\end{align*}
The first term is a $O(\gamma_\tau)$ as $\theta(x) \sim_0 x^2/2$. To bound the second term, we have by concavity of the logarithm
\begin{align*}
\tilde{\ell}^k{}^{(\tau)}
&= - \frac{\rho}{\gamma_\tau} \log \sum _{p' \in \mathcal{P}_k} \pi_{p'}^{(\tau)} \exp\parenth{-\gamma_\tau \frac{\ell_{p'}(\mu^{(\tau)}) }{ \rho } } \\ 
& \leq \sum _{p' \in \mathcal{P}_k} \pi_{p'}^{(\tau)} \ell^k_{p'}(\mu^{(\tau)}) = \bar{\ell}^k{}^{(\tau)}
\end{align*}
And by Hoeffding's lemma,
\begin{align*}
\log \sum _{p' \in \Pcal_k} \pi_{p'} \exp\parenth{-\gamma_\tau \frac{\ell_{p'}(\mu^{(\tau)}) }{ \rho } }   
&\leq - \gamma_\tau \sum _{p' \in \mathcal{P}_k} \pi_{p'}^{(\tau)} \frac{\ell_{p'}(\mu^{(\tau)})}{\rho} + \frac{\gamma_\tau^2}{8}
\end{align*}%
Rearranging, we have $0 \leq \bar{\ell}^k(\tau) - \tilde{\ell}^k(\tau)  \leq  \frac {\rho \gamma_\tau} {8}$, therefore $U_p(\tau +1) = O( \gamma_\tau )$, and
\[
\left\| \sum _{\tau = \tau_1} ^{\tau_2} \gamma_\tau U(\tau+1) \right\|= O \parenth{ \sum _{\tau=\tau_1} ^{\tau_2} \gamma_t^2 }.
\]
Finally, since $\gamma_\tau \downarrow 0$, $\max_{\tau_2 : \sum_{\tau = \tau_1}^{\tau_2}} \sum_{\tau_1}^{\tau_2} \gamma_\tau^2$ converges to $0$ as $\tau_1 \rightarrow \infty$, therefore condition~\eqref{eq:perturbation_condition} is verified.
\end{proof}

\subsection{Convergence of AREP algorithms with sublinear discounted regret}
We now give the main convergence result. 

\begin{theorem}
\label{thm:conv}
Suppose that the population strategies $(\mu^{(\tau)})_\tau$ obey an AREP update rule with sublinear discounted regret. Then $(\mu^{(\tau)})$ converges to the set of Nash equilibria~$\Ncal$.
\end{theorem}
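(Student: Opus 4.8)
The plan is to recognize the AREP update as a discrete-time stochastic approximation of the replicator ODE~\eqref{eq:replicator} and then to apply the chain-transitivity machinery of Bena\"{i}m summarized above. Writing the population update in the form $\mu^{(\tau+1)} - \mu^{(\tau)} = \gamma_\tau\big(F(\mu^{(\tau)}) + U^{(\tau)}\big)$ on $\Dcal = \Delta$, I first observe that its drift is \emph{exactly} the replicator field: since for each $k$ one has $\braket{\mu^k}{\ell^k(\mu)} = \bar{\ell}^k(\mu)$ by Definition~\ref{def:avg_loss}, the term $\mu^k_p\big(\braket{\mu^k}{\ell^k(\mu)} - \ell^k_p(\mu)\big)/\rho$ appearing in~\eqref{eq:AREP} equals $F^k_p(\mu)$ from~\eqref{eq:replicator}. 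Thus $(\mu^{(\tau)})$ is a discrete-time approximation of~\eqref{eq:replicator} in the sense of Section~\ref{sec:conv}, with vanishing, non-summable steps $\gamma_\tau$ (Assumption~\ref{assumption:discounts}).

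Next I verify the hypotheses of Proposition~\ref{prop:APT} for the affine interpolated process $M$. Condition~\eqref{eq:perturbation_condition} is built directly into Definition~\ref{def:AREP} of an AREP algorithm, and the boundedness condition $\sup_\tau \|\mu^{(\tau)}\| < \infty$ is immediate because $\mu^{(\tau)} \in \Delta$, a compact set. Hence $M$ is a bounded APT of the flow $\Phi$ generated by $F$, and Theorem~\ref{thm:benaim} shows that its limit set $L(M) = \bigcap_{t\ge 0}\overline{\{M(s):s\ge t\}}$ is internally chain transitive. Since $M(T_\tau) = \mu^{(\tau)}$ and the interpolation error satisfies $\|M(T_\tau+s)-\mu^{(\tau)}\| = O(\gamma_\tau)$ (using that $F$ is bounded on $\Delta$ and $U^{(\tau)}$ is bounded), $L(M)$ coincides with the set of accumulation points of the discrete sequence $(\mu^{(\tau)})$.

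I then apply Proposition~\ref{prop:chaintrans_const} with $\Gamma = \Rcal\Ncal$ and the Rosenthal potential $V$ as Lyapunov function. The set $\Rcal\Ncal$ is compact and invariant, since its points are rest points of $F$; the computation in the proof of Proposition~\ref{prop:convergence_to_stationary} gives $\dot{V}(\mu) = \braket{\nabla V(\mu)}{F(\mu)} < 0$ for every $\mu \notin \Rcal\Ncal$; and by Remark~\ref{rem:finite_res_set} the image $V(\Rcal\Ncal)$ is a finite set, hence has empty interior. Therefore every internally chain transitive set, and in particular $L(M)$, is contained in $\Rcal\Ncal$, and $V$ is constant on $L(M)$, say $V \equiv c$ there.

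The main obstacle, and the reason the regret hypothesis is indispensable, is to upgrade this conclusion from the superset $\Rcal\Ncal$ to $\Ncal$ itself, since the replicator dynamics alone may approach a spurious restricted equilibrium. Here I invoke Theorem~\ref{thm:cesaro}: sublinear discounted regret yields a density-one subsequence $(\mu^{(\tau)})_{\tau\in\Tcal}$ converging to $\Ncal$, so $L(M)$ contains at least one point $\mu^\star \in \Ncal$ with $V(\mu^\star) = V_\Ncal$, which forces $c = V_\Ncal$. Finally, any $\mu \in \Rcal\Ncal$ minimizes $V$ over a sub-product of simplexes, so $V(\mu) \ge V_\Ncal$, with equality only when $\mu$ minimizes $V$ over all of $\Delta$, that is $\mu \in \Ncal$ by Theorem~\ref{thm:rosenthal}. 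Hence $L(M) \subseteq \{\mu \in \Rcal\Ncal : V(\mu) = V_\Ncal\} \subseteq \Ncal$, so every accumulation point of $(\mu^{(\tau)})$ lies in $\Ncal$, and by compactness of $\Delta$ this gives $d(\mu^{(\tau)},\Ncal)\to 0$.
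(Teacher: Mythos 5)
Your proof is correct, and its machinery is the same as the paper's: you recognize the AREP update on $(\mu^{(\tau)})$ as a discrete-time approximation of the replicator field, invoke Proposition~\ref{prop:APT} and Theorem~\ref{thm:benaim} to conclude that the limit set $L(M)$ of the interpolated process is internally chain transitive, and then apply Proposition~\ref{prop:chaintrans_const} with $\Gamma = \Rcal\Ncal$ and the Rosenthal potential $V$ (whose image $V(\Rcal\Ncal)$ is finite by Remark~\ref{rem:finite_res_set}) to obtain $L(M) \subseteq \Rcal\Ncal$ with $V$ constant on $L(M)$. Where you genuinely diverge is the endgame, i.e., how the sublinear-regret hypothesis pins that constant down to $V_\Ncal$. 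The paper first upgrades to whole-sequence convergence of the potential values $V(\mu^{(\tau)})$ (every limit point equals the constant $v$), then notes that ordinary convergence implies \cesaro\ convergence, uses the \cesaro-means part of Theorem~\ref{thm:cesaro} to get $V(\mu^{(\tau)}) \toCes V_\Ncal$, deduces $v = V_\Ncal$ by uniqueness of the limit, and finally converts $V(\mu^{(\tau)}) \to V_\Ncal$ into $\mu^{(\tau)} \to \Ncal$ via Lemma~\ref{lem:continuity}. You instead invoke the dense-subsequence part of Theorem~\ref{thm:cesaro} to plant a point $\mu^\star \in \Ncal$ inside $L(M)$ (compactness of $\Delta$ plus closedness of $\Ncal$ makes this extraction valid), which forces the constant to be $V_\Ncal$; then Rosenthal's theorem gives $\{\mu \in \Delta : V(\mu) = V_\Ncal\} = \Ncal$, so $L(M) \subseteq \Ncal$ outright, and a standard compactness argument finishes. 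Your route is slightly leaner: it bypasses both the whole-sequence convergence of the potential values and Lemma~\ref{lem:continuity}, whereas the paper's route yields the extra byproduct that $V(\mu^{(\tau)})$ itself converges. One remark: your claim that $L(M)$ \emph{coincides} with the set of accumulation points of $(\mu^{(\tau)})$ is true given your $O(\gamma_\tau)$ interpolation bound (boundedness of $F$ on $\Delta$ and of the perturbations $U^{(\tau)}$), but only the easy inclusion --- accumulation points of the discrete sequence lie in $L(M)$ --- is actually needed for your argument.
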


\begin{proof}
By assumption, we have
\al{
\mu^{(\tau+1)}_p - \mu^{(\tau)}_p
&= \gamma_\tau \parenth{ G^k_p\parenth{ \mu^{(\tau)}, \ell(\mu^{(\tau)} ) } + U_p^{(\tau+1)} } \\
&= \gamma_\tau \parenth{ F^k_p(\mu^{(\tau)} ) + U_p^{(\tau+1)} }
}
where, by definition of the AREP class, the perturbations $U^{(\tau)}$ satisfy condition $\emph{1}$ of Proposition~\ref{prop:APT}. Condition $\emph{2}$ is also satisfied since the sequence $(\mu^{(\tau)})_\tau$ lies in the compact set $\Delta$. Thus by Proposition~\ref{prop:APT}, the affine interpolated process $M$ of $(\mu^{(\tau)})_\tau$ is an APT of the continuous-time replicator system $\dot{\mucont} = F(\mucont)$. Thus by Theorem~\ref{thm:benaim}, the limit set $L(M)$ is internally chain transitive.

Consider the set of restricted Nash equilibria $\Rcal\Ncal$. This set is invariant ($\Rcal\Ncal$ is the set of stationary points of the vector field) and compact ($\Rcal\Ncal$ is the finite union of compact sets by Remark~\ref{rem:finite_res_set}). The Rosenthal potential function $V$ is a Lyapunov function for $\Rcal\Ncal$ (see proof of Theorem~\ref{thm:cesaro}), and $V(\Rcal\Ncal)$ has empty interior since it is a finite set by Remark~\ref{rem:finite_res_set}. Therefore we can apply Proposition~\ref{prop:chaintrans_const} to conclude that the set of limit points $L(M)$ is contained in $\Rcal\Ncal$ and $V$ is constant over $L(M)$. Let $v$ be this constant value.

Next, we show that the sequence of potentials $V(\mu^{(\tau)})$ converges. Let $\hat{v}$ be a limit point of $V(\mu^{(\tau)})$. Then by Lemma~\ref{lem:continuity}, $\hat{v} = V(\hat{\mu})$ where $\hat{\mu}$ is a limit point of $(\mu^{(\tau)})$. In particular, $\hat{\mu} \in L(M)$, thus $\hat{v} = V(\hat{\mu}) = v$. This shows that the bounded sequence $(V(\mu^{(\tau)}))$ has a unique limit point $v$, therefore it converges to~$v$, and it remains to show that $v = V_\Ncal$ to conclude (by Lemma~\ref{lem:continuity}).

To show that $v = V_\Ncal$, we first observe that since $V(\mu^{(\tau)}) \to v$, we also have $V(\mu^{(\tau)}) \toCes v$. But the population dynamics is also assumed to have sublinear discounted regret, thus by Theorem~\ref{thm:cesaro}, $V(\mu^{(\tau)}) \toCes V_\Ncal$. By uniqueness of the limit, we must have $v = V_\Ncal$.
\end{proof}

Note that Theorem~\ref{thm:conv} assumes that the AREP update rule is applied to the population dynamics $(\mu^{(\tau)})$, not to individual strategies $\pi^{(\tau)}(x)$. One sufficient condition for $\mu^{(\tau)}$ to satisfy an AREP update is that for each $k$, all players in $\Xcal_k$ start from a common initial distribution ${\pi^k}^{(0)} = {\mu^k}^{(0)}$, and apply the same update rule. This guarantees that for all $\tau$ and for all $x$, $\mu^{(\tau)} = \pi^{(\tau)}(x)$.

\subsection{Convergence of the REP and Hedge algorithms}
We apply Theorem~\ref{thm:conv} to show convergence of the REP and Hedge algorithms.
\begin{corollary}
If $(\mu^{(\tau)})$ obeys the REP update rule with learning rates $\gamma_\tau$ satisfying Assumption~\ref{assumption:discounts} and such that $\gamma_\tau \leq \frac{1}{2}$, then $\mu^{(\tau)} \to \Ncal$.
\end{corollary}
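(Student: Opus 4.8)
The plan is to obtain the result as an immediate specialization of Theorem~\ref{thm:conv}, whose conclusion is exactly $\mu^{(\tau)} \to \Ncal$. To invoke it, I must verify its two hypotheses for the REP dynamics: that $(\mu^{(\tau)})$ obeys an AREP update rule, and that this update has sublinear discounted regret.

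First I would observe that the REP update rule of Definition~\ref{def:REP}, with learning rates $\eta_\tau = \gamma_\tau$, is precisely the AREP update equation~\eqref{eq:AREP} with vanishing perturbations $U^{(\tau)} \equiv 0$: comparing~\eqref{eq:rep_update} with~\eqref{eq:AREP} term by term, the two coincide when the perturbation is zero. A zero perturbation sequence is trivially bounded, and it trivially satisfies condition~\eqref{eq:perturbation_condition}, since each partial sum $\sum_{\tau = \tau_1}^{\tau_2} \gamma_\tau U^{(\tau+1)}$ vanishes identically. Hence $(\mu^{(\tau)})$ obeys an AREP update rule. (The hypothesis $\gamma_\tau \le \tfrac{1}{2} \le 1$ also guarantees, via the computation following Definition~\ref{def:REP}, that the iterates remain in $\Delta$.)

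Next I would establish sublinear discounted regret by invoking Proposition~\ref{prop:REPregret}. Its hypotheses are $\gamma_\tau \le \tfrac{1}{2}$, which is assumed, and $\sum_{\tau \le T} \gamma_\tau^2 / \sum_{\tau \le T} \gamma_\tau \to 0$. The latter is not assumed directly, but follows from Assumption~\ref{assumption:discounts} via Fact~\ref{fact:gamma_squares} in the Appendix, exactly as already used in the discussion following Proposition~\ref{prop:hedge_bound}. Proposition~\ref{prop:REPregret} then yields $\limsup_{T \to \infty} \tfrac{1}{\sum_{\tau \le T}\gamma_\tau} R^{(T)}(x) \le 0$, i.e. sublinear discounted regret.

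With both hypotheses in hand, Theorem~\ref{thm:conv} applies and gives $\mu^{(\tau)} \to \Ncal$. There is no computational obstacle here, since the corollary is a direct specialization; the only point requiring care is the logical chain, namely that the regret bound of Proposition~\ref{prop:REPregret} is phrased in terms of the driving sequence $(\mu^{(\tau)})$ itself, so that under the corollary's hypothesis that $\mu^{(\tau)}$ obeys the REP update, the sublinear-regret condition is met at the population level directly, matching the hypothesis of Theorem~\ref{thm:conv} without any further appeal to Proposition~\ref{prop:population_regret_sublinear}.
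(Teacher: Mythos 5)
Your proposal is correct and follows exactly the same route as the paper's proof: identify REP as an AREP algorithm with zero perturbations, obtain sublinear discounted regret from Proposition~\ref{prop:REPregret} (with Fact~\ref{fact:gamma_squares} supplying the condition $\sum_{\tau \le T}\gamma_\tau^2/\sum_{\tau \le T}\gamma_\tau \to 0$ from Assumption~\ref{assumption:discounts}), and then apply Theorem~\ref{thm:conv}. Your write-up is in fact more careful than the paper's one-line proof, since it makes explicit the verification of condition~\eqref{eq:perturbation_condition} for zero perturbations and the appeal to Fact~\ref{fact:gamma_squares}.
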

\begin{proof} The REP update rule is a discounted no-regret algorithm by Proposition~\ref{prop:REPregret}, and it is an AREP algorithm with zero perturbations, so we can apply Theorem~\ref{thm:conv}.
\end{proof}

\begin{corollary}
If $(\mu^{(\tau)})$ obeys the discounted Hedge update rule with learning rates $\gamma_\tau$ satisfying Assumption~\ref{assumption:discounts} and such that $\sum_{\tau}\gamma_\tau^2 < \infty$, then $\mu^{(\tau)} \to \Ncal$.
\end{corollary}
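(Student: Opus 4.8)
The plan is to derive this as an immediate corollary of Theorem~\ref{thm:conv}, exactly as was done for the REP algorithm in the preceding corollary. Theorem~\ref{thm:conv} guarantees $\mu^{(\tau)} \to \Ncal$ as soon as the population dynamics $(\mu^{(\tau)})$ follows an AREP update rule with sublinear discounted regret, so the entire task reduces to checking these two hypotheses for the discounted Hedge dynamics. Both have essentially been proved in the preceding sections, and the work left is to confirm that the standing assumptions on $(\gamma_\tau)$ are strong enough to invoke them.

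First I would invoke Proposition~\ref{prop:HedgeREP} to place the Hedge dynamics in the AREP class. Since the learning rates satisfy Assumption~\ref{assumption:discounts} (in particular they are positive, decreasing and vanishing), the Hedge update can be written in the form~\eqref{eq:AREP} with a bounded perturbation sequence $(U^{(\tau)})$ satisfying condition~\eqref{eq:perturbation_condition}. It is worth noting that this step does \emph{not} use square-summability: the verification of~\eqref{eq:perturbation_condition} in Proposition~\ref{prop:HedgeREP} relies only on the monotone decay $\gamma_\tau \downarrow 0$, through the estimate $\sum_{\tau=\tau_1}^{\tau_2}\gamma_\tau^2 \leq \gamma_{\tau_1}\sum_{\tau=\tau_1}^{\tau_2}\gamma_\tau < \gamma_{\tau_1} T \to 0$ over the admissible range of $\tau_2$.

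Next I would establish sublinear discounted regret via Proposition~\ref{prop:hedge_bound}, whose bound $\Exp[R^{(T)}(x)] \leq -\rho\log\pi^{(0)}_{\min} + \frac{\rho}{8}\sum_{\tau\leq T}\gamma_\tau^2$, after division by $\sum_{\tau\leq T}\gamma_\tau$, splits into two vanishing terms: the first tends to $0$ because $\sum_\tau \gamma_\tau = \infty$ by Assumption~\ref{assumption:discounts}, and the second tends to $0$ because the hypothesis $\sum_\tau \gamma_\tau^2 < \infty$ keeps the numerator of $\frac{\sum_{\tau\leq T}\gamma_\tau^2}{\sum_{\tau\leq T}\gamma_\tau}$ bounded while the denominator diverges. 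This is precisely where the extra assumption $\sum_\tau \gamma_\tau^2 < \infty$ enters, giving $\frac{\sum_{\tau\leq T}\gamma_\tau^2}{\sum_{\tau\leq T}\gamma_\tau}\to 0$ without appeal to Fact~\ref{fact:gamma_squares}.

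The main point requiring care is the passage from sublinear discounted regret \emph{in expectation}, which is what Proposition~\ref{prop:hedge_bound} delivers, to the (deterministic) sublinear discounted regret demanded by Theorem~\ref{thm:conv}. This is resolved by the standing macroscopic hypothesis: since $(\mu^{(\tau)})$ is assumed to obey the Hedge update directly, we are in the setting described after Theorem~\ref{thm:conv}, where all players in $\Xcal_k$ share the initial distribution ${\mu^k}^{(0)}$ and apply the same rule, so $\pi^{(\tau)}(x) = {\mu^k}^{(\tau)}$ for almost every $x \in \Xcal_k$ and the distribution is almost surely deterministic by Proposition~\ref{prop:distribution_deterministic}. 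The population-wide regret then coincides almost surely with its expectation, so the bound above furnishes genuine sublinear discounted regret (one may equally route this through Proposition~\ref{prop:population_regret_sublinear}). Having verified that the Hedge dynamics is AREP with sublinear discounted regret, I would apply Theorem~\ref{thm:conv} to conclude $\mu^{(\tau)} \to \Ncal$.
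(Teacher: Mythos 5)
Your proposal is correct and follows essentially the same route as the paper, which likewise proves this corollary by combining Proposition~\ref{prop:hedge_bound} (sublinear discounted regret) with Proposition~\ref{prop:HedgeREP} (membership in the AREP class) and then invoking Theorem~\ref{thm:conv}. The additional details you supply---that the AREP verification needs only $\gamma_\tau \downarrow 0$, and the bridge from regret in expectation to the deterministic population-wide regret via the macroscopic hypothesis $\pi^{(\tau)}(x) = {\mu^k}^{(\tau)}$---are points the paper leaves implicit, and your handling of them is sound.
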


\begin{proof} By Proposition~\ref{prop:hedge_bound} and Proposition~\ref{prop:HedgeREP}, the discounted Hedge algorithm with rates $\gamma_\tau$ is an AREP algorithm with sublinear discounted regret, and we can apply Theorem~\ref{thm:conv}.
\end{proof}

\begin{figure}[h!]
\centering
\begin{minipage}{0.5\textwidth}
\centering
\includegraphics[width=\textwidth, page=1]{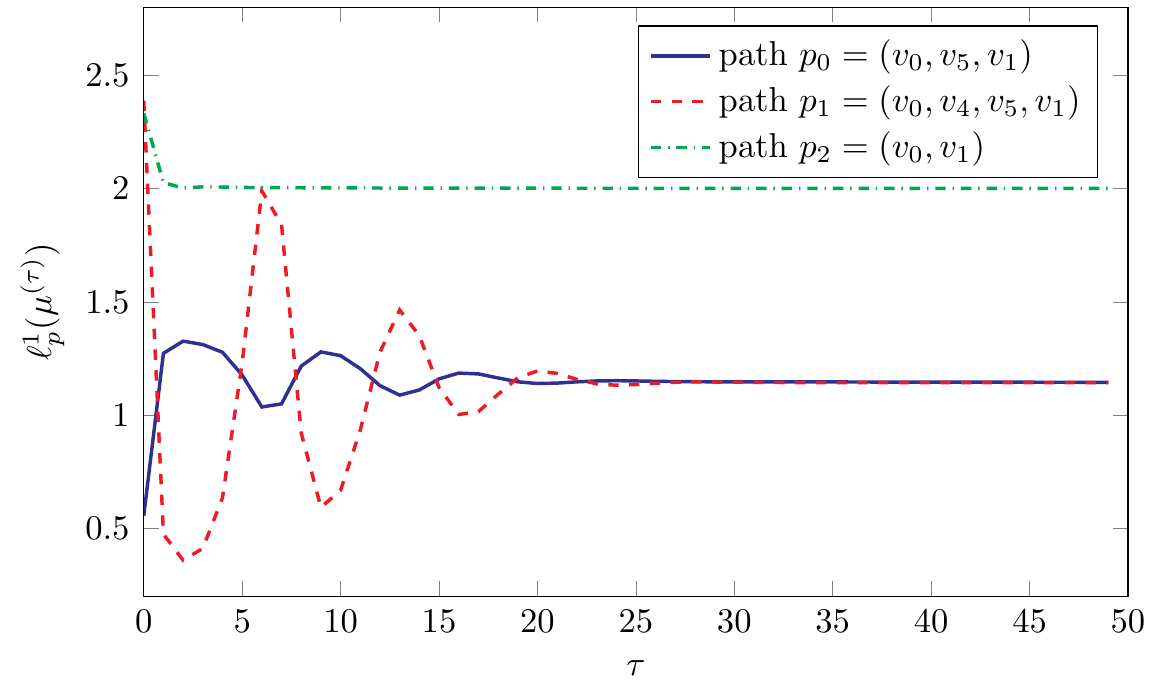}
\includegraphics[width=\textwidth, page=3]{TikZ/hedge-sim}
\includegraphics[height=3.8cm, page=5]{TikZ/hedge-sim}
\end{minipage}%
\begin{minipage}{0.5\textwidth}
\centering
\includegraphics[width=\textwidth, page=2]{TikZ/hedge-sim}
\includegraphics[width=\textwidth, page=4]{TikZ/hedge-sim}
\includegraphics[height=3.8cm, page=6]{TikZ/hedge-sim}
\end{minipage}
\caption{Simulation of the population dynamics under the discounted Hedge algorithm, initialized at the uniform distribution. The trajectories of the population strategies ${\mu^k}^{(\tau)}$ are given in the $2$-simplex for each population (bottom). The path losses $\ell^k_p(\mu^{(\tau)})$ for both populations (top) converge to a common value on the support on the Nash equilibrium. The sequences of discounted regrets (middle) confirm that the population regret is sub-linear, i.e. $\limsup_{t \rightarrow \infty} \frac{{R^k}^{(t)} }{ \sum_{\tau \leq t} \gamma_\tau} \leq 0$.}
\label{fig:sim_hedge}
\end{figure}

We illustrate these corollaries with a routing game on the example network introduced in Section~\ref{subsec:routing_game}. We simulate the population dynamics under the discounted Hedge algorithm (Figure~\ref{fig:sim_hedge}) and the REP algorithm (Figure~\ref{fig:sim_rep}). In both cases, we use a harmonic sequence of learning rates, $\gamma_\tau = \frac{20}{10 + \tau}$. The figures show a typical behavior of the population dynamics: if the starting learning rates are large, the trajectories can first oscillate, but as the learning rates decrease, the trajectories converge to the Nash equilibrium.

\begin{figure}[h]
\centering
\begin{minipage}{0.5\textwidth}
\centering
\includegraphics[width=\textwidth, page=1]{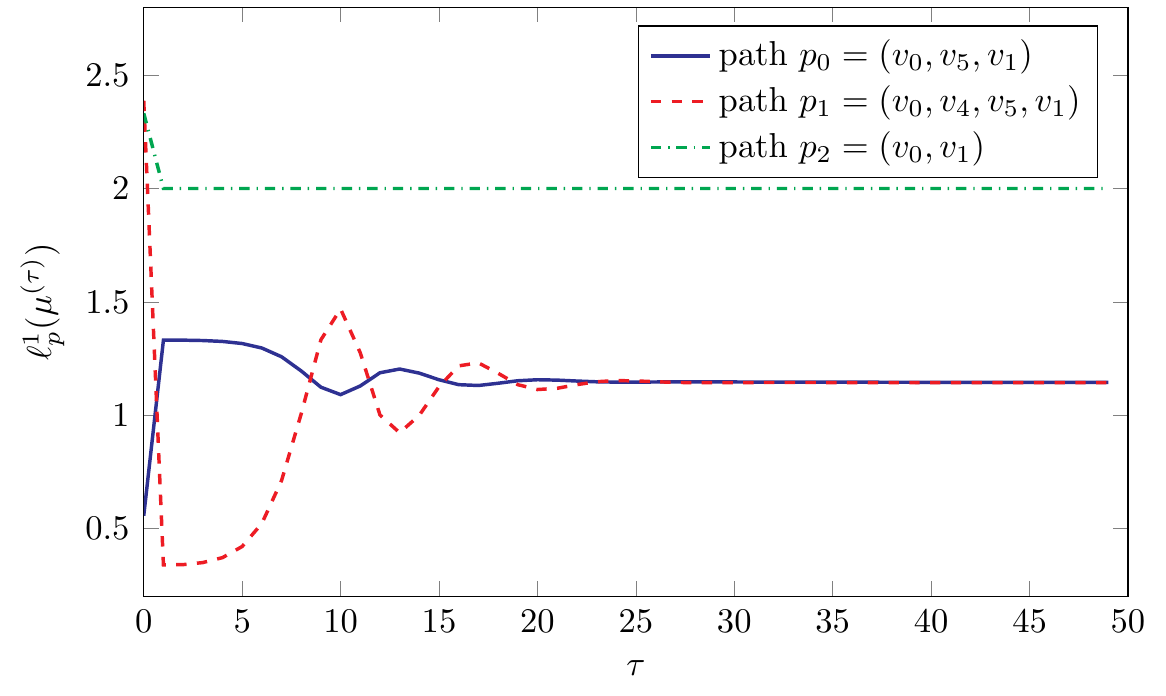}
\includegraphics[width=\textwidth, page=3]{TikZ/rep-sim}
\includegraphics[height=3.8cm, page=5]{TikZ/rep-sim}
\end{minipage}%
\begin{minipage}{0.5\textwidth}
\centering
\includegraphics[width=\textwidth, page=2]{TikZ/rep-sim}
\includegraphics[width=\textwidth, page=4]{TikZ/rep-sim}
\includegraphics[height=3.8cm, page=6]{TikZ/rep-sim}
\end{minipage}
\caption{Simulation of the population dynamics under the REP algorithm.}
\label{fig:sim_rep}
\end{figure}

\section{Conclusion}
\label{sec:conclusion}
We studied the convergence of online learning dynamics in the non-atomic congestion game. We showed that dynamics with sub-linear discounted population regret guarantee the convergence of $(\bar \mu^{(\tau)})$, the sequence of \cesaro\ means of population strategies. To obtain convergence of the actual sequence of strategies $(\mu^{(\tau)})$, we introduced the AREP class of approximate replicator dynamics, inspired by the replicator ODE. We showed that whenever the population strategies obey an AREP dynamics and have sub-linear discounted regret, the sequence converges. These results assume that the sequence of discount factors $(\gamma_\tau)$ is identical for all players. One question is whether this assumption can be relaxed, so that different players can use different learning rates.

\newpage
%
%
%
%
%
%
%
%
%
%
%
%


\bibliographystyle{siam}

\bibliography{learning}

\begin{thebibliography}{10}

\bibitem{arora2012multiplicative}
{\sc S.~Arora, E.~Hazan, and S.~Kale}, {\em The multiplicative weights update
  method: a meta-algorithm and applications.}, Theory of Computing, 8 (2012),
  pp.~121--164.

\bibitem{audibert2009minimax}
{\sc J.-Y. Audibert and S.~Bubeck}, {\em Minimax policies for adversarial and
  stochastic bandits}, in COLT, 2009.

\bibitem{benaim1999dynamics}
{\sc M.~Bena{\"\i}m}, {\em Dynamics of stochastic approximation algorithms}, in
  S\'{e}minaire de probabilit\'{e}s XXXIII, Springer, 1999, pp.~1--68.

\bibitem{Blum2006routing}
{\sc A.~Blum, E.~Even-Dar, and K.~Ligett}, {\em Routing without regret: on
  convergence to nash equilibria of regret-minimizing algorithms in routing
  games}, in Proceedings of the twenty-fifth annual ACM symposium on Principles
  of distributed computing, PODC '06, New York, NY, USA, 2006, ACM, pp.~45--52.

\bibitem{boyd2010convex}
{\sc S.~Boyd and L.~Vandenberghe}, {\em {Convex Optimization}}, vol.~25,
  Cambridge University Press, 2010.

\bibitem{bubeck2011introduction}
{\sc S.~Bubeck}, {\em Introduction to online optimization}, Lecture Notes,
  (2011).

\bibitem{bubeck2012regret}
{\sc S.~Bubeck and N.~Cesa-Bianchi}, {\em Regret analysis of stochastic and
  nonstochastic multi-armed bandit problems}, Foundations and Trends in Machine
  Learning, 5 (2012), pp.~1--122.

\bibitem{bubeck2009bounded}
{\sc S.~Bubeck, V.~Perchet, and P.~Rigollet}, {\em Bounded regret in stochastic
  multi-armed bandits}, CoRR, abs/1302.1611 (2013).

\bibitem{cesa2006prediction}
{\sc N.~Cesa-Bianchi and G.~Lugosi}, {\em Prediction, learning, and games},
  Cambridge University Press, 2006.

\bibitem{fischer2004evolution}
{\sc S.~Fischer and B.~V{\"o}cking}, {\em On the evolution of selfish routing},
  in Algorithms--ESA 2004, Springer, 2004, pp.~323--334.

\bibitem{fremlin2000measure}
{\sc D.~H. Fremlin}, {\em Measure theory}, vol.~4, Torres Fremlin, 2000.

\bibitem{freund1999adaptive}
{\sc Y.~Freund and R.~E. Schapire}, {\em Adaptive game playing using
  multiplicative weights}, Games and Economic Behavior, 29 (1999), pp.~79--103.

\bibitem{khalil1992nonlinear}
{\sc H.~Khalil}, {\em Nonlinear systems}, Macmillan Pub. Co., 1992.

\bibitem{koutsoupias1999worst}
{\sc E.~Koutsoupias and C.~Papadimitriou}, {\em Worst-case equilibria}, in In
  proceedings of the 16th annual symposium on theoretical aspects of computer
  science, 1999, pp.~404--413.

\bibitem{littlestone1989weighted}
{\sc N.~Littlestone and M.~K. Warmuth}, {\em The weighted majority algorithm},
  in Foundations of Computer Science, 1989., 30th Annual Symposium on, IEEE,
  1989, pp.~256--261.

\bibitem{muresan2009concrete}
{\sc M.~Muresan}, {\em A concrete approach to classical analysis}, Springer,
  2009.

\bibitem{nash1951non}
{\sc J.~Nash}, {\em Non-cooperative games}, The Annals of Mathematics, 54
  (1951), pp.~286--295.

\bibitem{rosenthal1973class}
{\sc R.~W. Rosenthal}, {\em A class of games possessing pure-strategy nash
  equilibria}, International Journal of Game Theory, 2 (1973), pp.~65--67.

\bibitem{roughgarden2007}
{\sc T.~Roughgarden}, {\em Routing games}, in Algorithmic game theory,
  Cambridge University Press, 2007, ch.~18, pp.~461--486.

\bibitem{roughgarden2002bad}
{\sc T.~Roughgarden and {\'E}.~Tardos}, {\em How bad is selfish routing?},
  Journal of the ACM (JACM), 49 (2002), pp.~236--259.

\bibitem{sandholm2001potential}
{\sc W.~H. Sandholm}, {\em Potential games with continuous player sets},
  Journal of Economic Theory, 97 (2001), pp.~81--108.

\bibitem{wardrop1952road}
{\sc J.~G. Wardrop}, {\em Some theoretical aspects of road traffic research.},
  in ICE Proceedings: Engineering Divisions, vol.~1, Thomas Telford, 1952,
  pp.~325--362.

\bibitem{weibull1997evolutionary}
{\sc J.~W. Weibull}, {\em Evolutionary game theory}, MIT press, 1997.

\end{thebibliography}

\newpage
\appendix

\section{Proof of Hoeffding's Lemma}
\label{app:hoeffding}
\begin{lemma}[Hoeffding's lemma] \label{lem:hoeffding} Let $X$ be any real-valued random variable such that $X \in [a, b]$ almost surely. Then, for all $s \in \Rbb$:
\begin{equation*}
\log \Exp[e^{sX}] \leq s \Exp[X] + \frac{s^2 (b-a)^2}{8}
\end{equation*}
\end{lemma}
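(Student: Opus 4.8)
The plan is to control the cumulant generating function $\psi(s) = \log \Exp[e^{sX}]$ through a second-order Taylor expansion about $s = 0$, exploiting the fact that its second derivative is a variance of a bounded random variable. First I would observe that, since $X$ is bounded almost surely, $M(s) = \Exp[e^{sX}]$ is finite for every $s \in \Rbb$ and all moment expressions below are finite, so that $\psi$ is well-defined and smooth. Differentiating, I would record the base values $\psi(0) = \log \Exp[1] = 0$ and, from $\psi'(s) = \Exp[X e^{sX}]/\Exp[e^{sX}]$, the value $\psi'(0) = \Exp[X]$.

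The key step is to compute $\psi''(s)$ and recognize it as a variance. Direct differentiation gives
\[
\psi''(s) = \frac{\Exp[X^2 e^{sX}]}{\Exp[e^{sX}]} - \left( \frac{\Exp[X e^{sX}]}{\Exp[e^{sX}]} \right)^{2},
\]
which is precisely the variance of $X$ computed under the tilted probability measure whose density is proportional to $e^{sX}$. Under this reweighting $X$ still takes values in $[a,b]$, so the bound on $\psi''$ reduces to the elementary fact that any random variable $Y$ supported on $[a,b]$ satisfies $\var(Y) \le (b-a)^2/4$. To prove this, I would use that $\var(Y) \le \Exp[(Y-c)^2]$ for any constant $c$ and choose $c = (a+b)/2$, so that $(Y-c)^2 \le (b-a)^2/4$ holds pointwise; taking expectations yields $\var(Y) \le (b-a)^2/4$. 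Hence $\psi''(s) \le (b-a)^2/4$ uniformly in $s$.

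Finally I would assemble the pieces using Taylor's theorem with Lagrange remainder: for some $\xi$ lying between $0$ and $s$,
\[
\psi(s) = \psi(0) + s\,\psi'(0) + \frac{s^2}{2}\,\psi''(\xi) \le s\,\Exp[X] + \frac{s^2}{2}\cdot\frac{(b-a)^2}{4} = s\,\Exp[X] + \frac{s^2 (b-a)^2}{8},
\]
which is exactly the claimed inequality; note this argument is insensitive to the sign of $s$, since $s^2/2 \ge 0$ and the bound on $\psi''(\xi)$ is uniform. The only genuine subtlety, and thus the main obstacle, is the rigorous justification of differentiating twice under the expectation sign; this is precisely where the boundedness of $X$ is essential, as it provides the dominating function needed to pass difference quotients of $e^{sX}$ through the expectation and guarantees finiteness of every moment appearing above. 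An alternative route avoids differentiation entirely by invoking convexity of $t \mapsto e^{st}$ to write $e^{sX} \le \frac{b-X}{b-a}e^{sa} + \frac{X-a}{b-a}e^{sb}$ and then optimizing the resulting one-variable bound, but the Taylor-with-variance approach above is the cleanest to carry out.
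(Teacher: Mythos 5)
Your proof is correct, but it follows a genuinely different route from the paper. The paper centers $X$, applies the convexity of $t \mapsto e^{st}$ pointwise to get $e^{sX} \leq \frac{b-X}{b-a}e^{sa} + \frac{X-a}{b-a}e^{sb}$, takes expectations, and then reduces everything to showing that the explicit one-variable function $g(q) = -qp + \log(1-p+pe^{q})$ satisfies $g(0)=g'(0)=0$ and $g''\leq 1/4$, whence $g(q) \leq q^2/8$ by Taylor expansion --- this is exactly the ``alternative route'' you mention and dismiss in your closing sentence. Your argument instead works directly with the cumulant generating function $\psi(s) = \log\Exp[e^{sX}]$, identifies $\psi''(s)$ as the variance of $X$ under the exponentially tilted measure $d\Prob_s \propto e^{sX}\,d\Prob$ (under which $X$ still lies in $[a,b]$ almost surely, since $\Prob_s$ is absolutely continuous with respect to $\Prob$), bounds that variance by $(b-a)^2/4$ via the midpoint trick, and concludes by Taylor's theorem with Lagrange remainder. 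The trade-off is clear: the paper's route is more elementary --- it needs no differentiation under the expectation sign, only a pointwise inequality and calculus on an explicit function --- while yours is conceptually cleaner (the constant $(b-a)^2/8$ emerges transparently from the variance bound $(b-a)^2/4$ divided by $2$), requires no centering step, and generalizes more readily, e.g.\ to Bernstein-type bounds where one controls the tilted variance by something sharper than $(b-a)^2/4$. You correctly flag the one real obligation your route incurs, namely justifying the two differentiations under the expectation, and correctly note that boundedness of $X$ supplies the dominating function; with that point addressed, your proof is complete.
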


\begin{proof}
Fix $s \in \Rbb$. We first consider the case when $X$ is centered, i.e. $\Exp[X] = 0$. Since $x \mapsto e^{sx}$ is a convex function, we have for all $x \in [a,b]$, 
\begin{equation*}
e^{sx} \leq \frac{b-x}{b-a} e^{sa} + \frac{x-a}{b-a} e^{sb}
\end{equation*}
Thus, 
\begin{equation*}
\Exp[e^{sX}] \leq \frac{b}{b-a} e^{sa} - \frac{a}{b-a} e^{sb} = e^{sa}( 1 - p + p e^{s(b-a)} )
\end{equation*}
where $p = -\frac{a}{b-a}$. Taking the logarithm, we have:
\[
\log (e^{sa}( 1 - p + p e^{s(b-a)} )) = -qp + \log(1-p + pe^q) = g(q)
\]
with $q=s(b-a)$. We have
\al{
g'(q) &= -p + \frac{p e^q }{1-p+pe^q} \\
g''(q) &= \frac{(1-p) pe^q}{(1-p+pe^q)^2}
}
thus $g'(0) = g(0) = 0$, and for all $q$, $g''(q) \leq \frac{1}{4}$. Therefore, by Taylor's expansion, we have $g(q) \leq \frac{q^2}{8}$, and it follows that $\log \Exp[e^{sX}] \leq \frac{s^2(b-a)^2}{ 8 }$.
 
Now consider the general (non-centered) case. We have
\[
\log \Exp [e^{sX}] = \log \left( e^{s\Exp[X]} \Exp [e^{s(X-\Exp[X]}] \right) = s\Exp[X] + \log \Exp [e^{s(X- \Exp[X])}]
\]
and the result follows by applying the centered case to the variable $X - \Exp[X]$.
\end{proof}

\section{Proof of Rosenthal's Theorem}
\label{app:rosenthal_proof}

\begin{proof}
First, write the convex optimization problem as
\begin{equation*}
\begin{aligned}
&\text{minimize}_{\mu_p^k \geq 0, \phi_r} && \sum_{r \in \Rcal} \int_0^{\phi_r} c_r(u) du \\
&\text{subject to} 
&&  \phi = \bar{M} \mu  \\
&&& \forall k, \; \sum _{p \in \Pcal_k} \mu^k_p = 1
\end{aligned}
\end{equation*}
Writing the Lagrangian, we have:
\begin{align*}
L(\mu, &\phi; \lambda, v, w)\\
&= \sum_{r \in \Rcal} \int_0^{\phi_r} c_r(u) du - \sum_{k = 1}^K \sum_{p \in \Pcal_k} \lambda^k_p \mu^k_p + \sum_{r \in \Rcal} v_r((M \mu)_r -\phi_r) - \sum_{k=1}^K w_k \left( \sum_{p \in \Pcal_k} \mu^k_p - 1 \right) 
\end{align*}
where $\lambda^k \in \Rbb_+^{\Pcal_k}$, $v \in \Rbb^\Rcal$ and $w \in \Rbb^K$. Then by strong duality (Slater's condition holds), $(\mu, \phi; \lambda, v, w)$ are primal-dual optimal variables if and only if the following optimality conditions hold (se for example~\cite{boyd2010convex}):
\begin{itemize}
\item Stationarity: for all $r$, $\frac{\partial L}{\partial \phi_r} (\mu, \phi ; \lambda, v, w) = 0$, and $\forall k$, $\forall p \in \Pcal_k$, $\frac{\partial L}{\partial \mu^k_p} (\mu, \phi; \lambda, v, w) = 0$, i.e.
\begin{align*}
&\forall r \in \Rcal, && c_r(\phi_r) - v_r = 0 \\
&\forall k \in \{1, \dots, K\}, \ \forall p \in \Pcal_k, && m(\Xcal_k) \sum_{r \in p} v_r - \lambda^k_p - w_k = 0
\end{align*}
\item Complementary slackness:
\[
\forall k, \ \forall p \in \Pcal_k, \; \lambda^k_p \mu^k_p = 0
\]
\end{itemize}

Therefore we have for all $k$, for all bundles $p \in \Pcal_k$, the bundle loss is
\[
\ell^k_p(\mu) = \sum_{r \in p} c_r(\phi_r) = \sum_{r \in p} v_r = \frac{w_k + \lambda^k_p}{m(\Xcal_k)}
\]
by the stationarity conditions. Additionally, if bundle $p$ has positive mass $\mu^k_p$, by complementary slackness, the corresponding dual variable $\lambda^k_p$ is zero, thus
\[
\mu^k_p > 0 \Rightarrow \ell^k_p(\mu) =  \frac{w_k}{m(\Xcal_k)}
\]
therefore all bundles $p \in \Pcal_k$ with positive mass have a common latency $\frac{w_k}{m(\Xcal_k)}$, and bundles with zero mass have latency greater than or equal to $\frac{w_k}{m(\Xcal_k)}$ (since $\lambda^k_p \geq 0$). This is equivalent to Definition~\ref{def:nash} of a Nash equilibrium of the congestion game.
\end{proof}

Nash equilibria are also said to be \textit{essentially} unique, in the sense of the following proposition.
\begin{proposition}
\label{prop:essential_uniqueness}
The bundle losses $\ell^k_p$ are constant on $\Ncal$.
\end{proposition}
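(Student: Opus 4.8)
The plan is to exploit the convexity of both the Rosenthal potential $V$ and its minimizer set $\Ncal$ (Theorem~\ref{thm:rosenthal}), together with the separable structure $V(\mu) = \bar{V}(\bar{M}\mu)$ where $\bar{V}(\phi) = \sum_{r \in \Rcal} \int_0^{\phi_r} c_r(u)\,du$. Since the bundle loss $\ell^k_p(\mu) = \sum_{r \in p} c_r((\bar{M}\mu)_r)$ depends on $\mu$ only through the vector of resource losses $c(\bar{M}\mu)$, it suffices to prove that $c(\bar{M}\mu)$ takes the same value at every point of $\Ncal$.

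First I would fix two equilibria $\mu_1, \mu_2 \in \Ncal$ and set $\phi_i = \bar{M}\mu_i$. By Theorem~\ref{thm:rosenthal}, $\Ncal$ is convex and $V \equiv V_\Ncal$ on $\Ncal$; hence the segment $\mu_\lambda = \lambda \mu_1 + (1-\lambda)\mu_2$ lies in $\Ncal$ for all $\lambda \in [0,1]$, and consequently $\bar{V}(\phi_\lambda) = V_\Ncal$ is constant along $\phi_\lambda = \lambda\phi_1 + (1-\lambda)\phi_2$. Writing $g_r(y) = \int_0^y c_r(u)\,du$, which is convex because $c_r$ is non-decreasing, the map $\lambda \mapsto \bar{V}(\phi_\lambda) = \sum_{r \in \Rcal} g_r\parenth{\lambda (\phi_1)_r + (1-\lambda)(\phi_2)_r}$ is a finite sum of convex functions of $\lambda$ that is constant on $[0,1]$.

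The key step is the observation that a finite sum of convex functions which is constant must have each summand affine: each $g_r$-term equals the constant minus a sum of convex functions, hence is simultaneously convex and concave. I would then use that $g_r$ is differentiable with $g_r' = c_r$ (Assumption~\ref{ass:congestion_functions} makes $c_r$ continuous), so affineness of $\lambda \mapsto g_r(\phi_{\lambda,r})$ forces $c_r$ to be constant on the interval between $(\phi_1)_r$ and $(\phi_2)_r$; in particular $c_r((\phi_1)_r) = c_r((\phi_2)_r)$. This holds for every $r$, giving $c(\phi_1) = c(\phi_2)$, and therefore $\ell^k_p(\mu_1) = \sum_{r \in p} c_r((\phi_1)_r) = \sum_{r \in p} c_r((\phi_2)_r) = \ell^k_p(\mu_2)$ for all $k$ and all $p \in \Pcal_k$.

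The main obstacle, and the reason the statement is subtle, is that the congestion functions are only assumed non-decreasing, not strictly increasing, so the resource loads $\phi_r$ themselves need \emph{not} agree across $\Ncal$ (the minimizer of $V$ in $\mu$, and even in $\phi$, may fail to be unique). What is pinned down is only the value of $c_r$ at the equilibrium load, which is exactly the slope of the affine piece of $g_r$; the sum-of-convex-functions argument is precisely what converts constancy of the potential into constancy of these slopes, and hence of the losses.
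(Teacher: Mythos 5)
Your proposal is correct and follows essentially the same route as the paper's proof: both fix $\mu_1,\mu_2\in\Ncal$, use convexity of $\Ncal$ and constancy of $V$ on it to see that the per-resource convex terms $g_r$ must be affine along the segment (the paper phrases this as equality in the convexity inequality, you as ``convex plus concave equals affine''), and conclude that $c_r$ is constant on the interval between the two resource loads, hence the losses agree. Your explicit remark that the equilibrium loads $\phi_r$ themselves need not coincide---only the values $c_r(\phi_r)$---is a nice clarification of what the argument does and does not pin down, but it is not a different proof.
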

\begin{proof}
Let $\mu_1, \mu_2 \in \mathcal{N}$, and $\lambda \in [0,1]$. We have for all $r \in \Rcal$, $\mu \mapsto \int_0^{(M\mu)_r} c_r(u)du$ is convex. We have
\begin{align*}
V_\Ncal 
&= \lambda V(\mu_1) + (1-\lambda) V(\mu_2)\\
&= \sum_{r \in \Rcal} \left( \lambda \int _0^{(M\mu_1)_r} c_r(\phi) d\phi + (1-\lambda) \int _0 ^{(M\mu_2)_r} c_r(\phi) d\phi \right) \\
&\geq \sum _{r \in \Rcal} \int_0^{(M ( \lambda \mu_1 + (1-\lambda) \mu_2))_r} c_r(u) du & \text{by convexity}\\
&= V(\lambda \mu_1 + (1-\lambda)\mu_2) \\
&= V_\Ncal
\end{align*}
where the last equality follows from the fact that $\lambda \mu_1 + (1-\lambda)\mu_2 \in \Ncal$ since $\Ncal$ is convex. Therefore the convexity inequalities must hold with equality and $\mu \mapsto \int_0^{(M\mu)_r} c_r(u)du$ is linear on $[\mu_1, \mu_2]$. Therefore $\phi \mapsto c_r(\phi)$ is constant on $[(M\mu_1)_r, (M\mu_2)_r]$, and it follows that $\ell^k_p$ is constant on $[\mu_1, \mu_2]$, since $\ell^k_p(\mu) = \sum_{r \in p} c_r((M\mu)_r)$
\end{proof}

\section{Additional proofs}
\begin{fact}
\label{fact:gamma_squares}
Let $(\gamma_\tau)_\tau$ be a positive real sequence satisfying Assumption~\ref{assumption:discounts}, that is, $\gamma_\tau$ is decreasing and $\sum \gamma_\tau = \infty$. Then
\[
\lim_{T\rightarrow \infty} \frac{\sum_{\tau \leq T} \gamma_\tau^2}{\sum_{\tau \leq T} \gamma_\tau} = 0
\]
\end{fact}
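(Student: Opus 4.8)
The plan is to reduce everything to the two facts I am allowed to use about $(\gamma_\tau)$: that $\gamma_\tau \to 0$ and that the partial sums $S_T := \sum_{\tau \le T}\gamma_\tau$ diverge to $+\infty$ (both part of Assumption~\ref{assumption:discounts}). The key observation is that although $\sum_{\tau\le T}\gamma_\tau^2$ may itself diverge, each squared term is controlled by the corresponding linear term once $\gamma_\tau$ is small, since $\gamma_\tau^2 = \gamma_\tau \cdot \gamma_\tau \le \epsilon\,\gamma_\tau$ as soon as $\gamma_\tau \le \epsilon$.

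Concretely, I would fix $\epsilon > 0$ and, using $\gamma_\tau \to 0$, choose an index $N = N(\epsilon)$ such that $\gamma_\tau \le \epsilon$ for every $\tau \ge N$. For $T \ge N$ I then split the numerator as
\[
\sum_{\tau \le T}\gamma_\tau^2 = \underbrace{\sum_{\tau < N}\gamma_\tau^2}_{=: C_\epsilon} + \sum_{N \le \tau \le T}\gamma_\tau^2 \le C_\epsilon + \epsilon \sum_{N \le \tau \le T}\gamma_\tau \le C_\epsilon + \epsilon\, S_T,
\]
where $C_\epsilon$ is a constant that does not depend on $T$. Dividing by $S_T$ gives
\[
\frac{\sum_{\tau \le T}\gamma_\tau^2}{S_T} \le \frac{C_\epsilon}{S_T} + \epsilon .
\]

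Finally, since $S_T \to \infty$, the term $C_\epsilon / S_T$ tends to $0$ as $T\to\infty$, so $\limsup_{T\to\infty} \big(\sum_{\tau\le T}\gamma_\tau^2\big)/S_T \le \epsilon$. As $\epsilon > 0$ was arbitrary and the ratio is nonnegative, the limit exists and equals $0$. This is essentially a Toeplitz/Ces\`aro averaging estimate, so I do not anticipate a genuine obstacle; the only point requiring care is that the conclusion truly needs $\gamma_\tau \to 0$ rather than mere monotonicity --- the constant sequence $\gamma_\tau \equiv 1$ is (weakly) decreasing with divergent sum yet yields ratio $1$ --- so I would make explicit that the vanishing ingredient is drawn from Assumption~\ref{assumption:discounts}. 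I note also that monotonicity of $(\gamma_\tau)$ is not actually used anywhere in the estimate, only positivity, $\gamma_\tau\to 0$, and $S_T\to\infty$.
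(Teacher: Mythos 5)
Your proof is correct and follows essentially the same argument as the paper: split the sum at an index beyond which $\gamma_\tau \le \epsilon$, bound the tail of the numerator by $\epsilon$ times the denominator, and let the fixed head term vanish since $\sum_{\tau \le T}\gamma_\tau \to \infty$. Your closing remarks (that monotonicity is unused and that $\gamma_\tau \to 0$ is essential, as $\gamma_\tau \equiv 1$ shows) are accurate refinements but do not alter the approach.
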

\begin{proof}
Fix $\epsilon > 0$. Since $\gamma_\tau$ decreases to $0$, there exists $\tau_1$ such that for all $\tau > \tau_1$, $\gamma_\tau < \epsilon$, thus for all $T > \tau_1$
\al{
\frac{\sum_{\tau = 1}^{T} \gamma_\tau^2}{\sum_{\tau = 1}^{T} \gamma_\tau} 
&\leq \frac{\sum_{\tau \leq \tau_1} \gamma_\tau^2}{\sum_{\tau = 1}^{T} \gamma_\tau}  + \epsilon
}
and the first term converges to $0$, which proves the claim.
\end{proof}

\end{document}